\newtheorem{proposition}{Proposition}
\newtheorem{defn}{Definition}
\newtheorem{cor}{Corollary}
\newtheorem{lemma}{Lemma}
\newtheorem{theorem}{Theorem}
\newtheorem{assumption}{Assumption}
\numberwithin{equation}{section}
\DeclareMathOperator*{\argmax}{arg\,max}
\def \bx {{\bm{x}}}
\def \bu {{\bm{u}}}
\def \d  {{\rm{d}}}
\def \e  {{\varepsilon}}
\newcommand{\bbr}{\mathbb R}
\title{Hamilton--Jacobi Deep Q-Learning for Deterministic Continuous-Time Systems with Lipschitz Continuous Controls\thanks{This work was supported in part by  the Creative-Pioneering Researchers Program through SNU,  the National Research Foundation of Korea funded by the MSIT(2020R1C1C1009766), the Information and Communications Technology Planning and Evaluation (IITP) grant funded by MSIT(2020-0-00857), and Samsung Electronics.}
} 
\author{Jeongho Kim\thanks{Institute of New Media and Communications,  Seoul National University, Seoul 08826, South Korea, (jhkim206@snu.ac.kr).} 
\and
Jaeuk Shin\thanks{Department of Electrical and Computer Engineering, Automation and Systems Research Institute,  Seoul National University, Seoul 08826, South Korea, (\{sju5379, insoonyang\}@snu.ac.kr).}
\and 
 Insoon Yang\footnotemark[3] 
}
\date{}
\begin{document}
\maketitle
% !BIB program = 
\pagestyle{myheadings}
\thispagestyle{plain}

\begin{abstract}
In this paper, we propose Q-learning algorithms for continuous-time deterministic optimal control problems with Lipschitz continuous controls. Our method is based on a new class of Hamilton--Jacobi--Bellman (HJB) equations  derived from applying the dynamic programming principle to continuous-time Q-functions. 
A novel semi-discrete version of the HJB equation is proposed to design a  Q-learning algorithm that uses data collected in discrete time without discretizing or approximating the system dynamics. We identify the condition under which the Q-function estimated by this algorithm converges to the optimal Q-function. For practical implementation, we propose the \emph{Hamilton--Jacobi DQN}, which extends the idea of deep Q-networks (DQN) to our continuous control setting. This approach does not require actor networks or numerical solutions to optimization problems for greedy actions since the HJB equation provides a simple characterization of optimal controls via ordinary differential equations. We empirically demonstrate the performance of our method through benchmark tasks and high-dimensional linear-quadratic problems. 
\end{abstract}

%
%\begin{keywords}
%  Q-learning, Deep Q-networks, Continuous-time dynamical systems, Optimal control, Hamilton--Jacobi--Bellman equations
%\end{keywords}
%

\section{Introduction}
\label{intro}

Model-free reinforcement learning (RL) algorithms provide an effective data-driven solution to sequential decision-making problems, in particular, in the discrete-time setting~\cite{Bertsekas1996, Sutton1998, Szepesvari2010}.
Recently, there has been a growing interest in and demand for applying these techniques to complex physical control tasks, motivated by robotic and autonomous systems.  
However, many physical processes evolve in continuous time, requiring the need for RL methods that can systematically handle continuous-time dynamical systems. 
These systems are often described by deterministic ordinary differential equations (ODEs). 
Classical approaches first estimate the model parameters by using system identification techniques and then design a suitable model-based controller~(e.g., \cite{Ljung1998}). 
However, we do not often have such a luxury of having a separate training period for parameter identification, which often requires large-scale high-resolution data. 
Furthermore, when the model parameters change over time, the classical techniques have fundamental limitations in terms of adaptivity. 
The focus of this work is to study a control-theoretic model-free RL method that extends the popular Q-learning~\cite{Watkins1992} and deep Q-networks (DQN)~\cite{Mnih2015} to the continuous-time deterministic optimal control setting.

%{\color{red}These dynamical systems are usually described as a deterministic ordinary differential equation, and to control these deterministic control models in a model-free manner, the RL methods for deterministic system are inevitably needed. Indeed, many RL methods have been studied in the deterministic dynamical systems \cite{Doya2000,Palanisamy2015,Vamvoudakis2010,Bian2016-2,Jiang2015}.}
%

One of the most straightforward ways to tackle such continuous-time control problems is to discretize time, state, and action, and then employ an RL algorithm for discrete Markov decision processes (MDPs).
However, this approach could easily be rendered ineffective  when a fine discretization is used~\cite{Doya2000}.
To avoid the explicit discretization of state and action, several methods have been proposed using function approximators~\cite{Gordon1995}. 
Among those, algorithms that use deep neural networks as  function approximators provide strong empirical evidence for learning high-performance policies, on a range of benchmark tasks~\cite{Todorov2012, Brockman2016, Duan2016, Tassa2018}.
To deal with continuous action spaces,
such discrete-time model-free deep RL methods
numerically solve optimization problems for greedy actions~\cite{Ryu2020} or
use parameterized policies and learn the network parameters via policy gradient~\cite{Schulman2015, Schulman2017}, actor-critic methods~\cite{Lillicrap2016, Mnih2016, Haarnoja2018, Fujimoto2018, Tessler2019}, or normalized advantage functions~\cite{Gu2016}.
However, in these methods it is unclear how to choose the size of discretized time steps  or how the algorithms should be systematically modified  to take into account 
the efficiency and the stability of learning processes according to the characteristics of the continuous-time systems.

The literature regarding continuous-time RL is relatively limited; most of them have tried to avoid explicit discretization using the structural properties of limited classes of system dynamics (for example, see~\cite{Palanisamy2015, Bian2016-2, Vamvoudakis2017, Jiang2015, Kim2020, Bhasin2013, Modares2014,Vamvoudakis2010} for linear or control-affine systems, and see \cite{Bradtke1995} for semi-MDPs with finite state and action spaces). We also refer to \cite{munos2006policy}, where the policy gradient method in continuous-time setting is introduced. However, the reward function does not depend on the control signal in their framework.

In general continuous-time cases, the dynamic programming equation is expressed as a Hamilton--Jacobi--Bellman (HJB) equation that provides a sound theoretical framework. 
Previous methods use HJB equations 
for learning the optimal \emph{state-value function} or its gradient via convergent discretization~\cite{Munos2000}, barycentric interpolation~\cite{munos1999barycentric},
advantage functions~\cite{Dayan1996},
temporal difference algorithms~\cite{Doya2000}, kernel-based approximations~\cite{Ohnishi2018}, adaptive dynamic programming~\cite{yang2017hamiltonian}, path integrals~\cite{theodorou2010generalized,Rajagopal2017} and neural network approximation~\cite{tassa2007least,lutter2020hjb}. 
%The $H_\infty$ control under external disturbance is also considered in the Hamilton-Jacobi equation frameworks for discrete-time~\cite{valadbeigi2019h} and continuous-time~\cite{modares2015h} systems. 
%We refer to a survey paper~\cite{kiumarsi2018optimal} and references therein for a further literature review on the optimal control using HJB equations.

However, to our knowledge, HJB equations have not been studied for admitting Q-functions as a solution (i.e., state-action value functions) in the previous methods although there have been a few attempts to 
 construct  variants of Q-functions for continuous-time dynamical systems. 
 In \cite{kontoudis2019kinodynamic}, the Q-function for linear time-invariant systems is defined as the sum of the optimal state-value function and the Hamiltonian. 
 Another variant of Q-functions is introduced as the sum of the running cost and the directional derivative of the state-value function \cite{mehta2009q}, which is then approximated by a parameterized family of functions. 
% They constructed a convex loss function to obtain the unique optimal parameter, and it is extensively investigated for the discrete-time dynamical system in~\cite{mehta2020convex}. 
However, in our opinion, the definitions of the Q-function in these works
 are different from the standard state-action value function that is defined as the maximum expected cumulative reward incurred after starting from a particular state with a specific action.
 Moreover, they have only used HJB equations for the state-value function without introducing or using HJB equations for the constructed Q-functions.
The practical performances of these methods have only been demonstrated through low-dimensional tasks. 
More recently, \cite{pmlr-v97-tallec19a} devises a new method combining advantage updating~\cite{baird94} and existing off-policy RL algorithms to propose continuous-time RL algorithms that are robust to time discretization. However, to tackle problems with continuous action spaces, this method uses off-policy actor-critic methods rather than relying  only on the state-value functions.

In this work, we consider continuous-time deterministic optimal control problems with Lipschitz continuous controls in the infinite-horizon discounted setting.
We show that the standard Q-function is well defined in continuous-time under Lipschitz constraints on controls.
Applying the dynamic programming principle to the Q-function, we derive a novel class of HJB equations. 
The HJB equation is shown to admit a unique viscosity solution, which corresponds to the optimal Q-function. 
To the best of our knowledge, this is the first attempt to rigorously characterize the HJB equations for Q-functions in continuous-time control. 
The HJB equations   provide a simple model-free characterization of optimal controls via ODEs and
a theoretical basis for our Q-learning method. 
We propose a new semi-discrete version of the HJB equation
to obtain a Q-learning algorithm that uses sample data collected in discrete time 
without discretizing or approximating the continuous-time dynamics. 
By design, it attains the flexibility to choose the sampling interval to take into account the features of continuous-time systems, but without the need for sophisticated ODE discretization methods.
We provide a convergence analysis that suggests a limit for the sampling interval for the convergence guarantee.
This study may open a new exciting avenue of research that connects HJB equations and Q-learning domain.

For a practical implementation of our HJB-based Q-learning, 
we combine it with the idea of DQN.
This new model-free off-policy deep RL algorithm, which we call the \emph{Hamilton-Jacobi DQN} (HJ DQN), is as simple as DQN but capable of solving continuous-time problems without discretizing the system dynamics or the action space.
%To the extent of our knowledge, HJ DQN is the first deep RL algorithm using continuous-time Q-functions with continuous state and action space. 
Instead of using any parameterized policy or numerically optimizing the estimated Q-functions to compute greedy actions, 
HJ DQN benefits from the simple ODE characterization of optimal controls, which are obtained in our theoretical analysis of the HJB equations. 
Thus, our algorithm is computationally light and easy to implement, thereby requiring less hyperparameter tuning compared to actor-critic methods for continuous control. 
We evaluate our algorithm on OpenAI benchmark tasks and high-dimensional  linear-quadratic (LQ) control problems. The result of our experiments suggests that actor networks in actor-critic methods may be replaced by the optimal control obtained via our HJB equation.

This paper is significantly expanded from a preliminary conference version~\cite{kim2020hamilton}. 
A Q-learning algorithm and its DQN variant are newly designed in a principled manner to use transition data collected in discrete time.
% are newly designed in a principled manner
%based on 
%a novel semi-discrete HJB equation to use transition data collected in discrete-time. 
%To use transition data collected in discrete-time, 
%A novel semi-discrete HJB equation is proposed to design a principled Q-learning algorithm that uses transition data collected in discrete-time. 
Furthermore, convergence properties of our Q-learning method are carefully studied in this paper. 
It contains the results of more thorough numerical experiments
on several benchmark tasks and LQ problems, as well as ablation studies.

The remainder of this paper is organized as follows. In Section \ref{sec:hjb}, we define the Q-functions for continuous-time optimal control problems with Lipschitz continuous controls and derive the associated HJB equations. 
We also characterize optimal control dynamics via an ODE.
 In Section \ref{sec:semi}, we propose a Q-learning algorithm based on the semi-discrete HJB equation and identify its convergence properties. 
In Section \ref{sec:hjdqn}, we introduce the HJ DQN algorithm and discuss its features. 
Section \ref{sec:exp} provides the results of our experiments on benchmark problems  as well as LQ control problems.
All the mathematical proofs are contained in Appendix~\ref{app:pf}. 

%The ablation study on HJ DQN is also presented in this section. 
%Section \ref{sec:conclusion} is devoted to a brief summary of the paper and discussion on future works.

\section{Hamilton--Jacobi--Bellman Equations for Q-Functions}\label{sec:hjb}

Consider a  continuous-time dynamical system of the form\footnote{Here, $\dot{x}$ denotes ${\mathrm{d}x}/{\mathrm{d}t}$.}
\begin{equation}\label{sys}
\dot{x} (t) = f(x(t), a(t)), \quad t > 0,
\end{equation}
where $x(t) \in \mathbb{R}^n$ and $a(t) \in \mathbb{R}^m$ are the system state and the control action, respectively. 
Here, the vector field $f: \mathbb{R}^n \times \mathbb{R}^m \to \mathbb{R}^n$ is an unknown function.
The standard infinite-horizon discounted optimal control problem can be formulated as\footnote{Although the focus of this work is deterministic control,
	one may also consider its stochastic counterpart. We  briefly discuss the extension of our method to the stochastic control setting in Appendix \ref{app:stoch}.}
\begin{equation}\label{opt}
\sup_{a \in \mathcal{A}} J_{\bm{x}} (a) := 
\int_0^\infty e^{-\gamma t} r (x(t), a(t)) \: \mathrm{d} t,
\end{equation}
with  $x(0) = \bm{x}$, where $r: \mathbb{R}^n \times \mathbb{R}^m \to \mathbb{R}$ is an unknown reward function of interest and $\gamma > 0$ is a discount factor.
We follow the convention in continuous-time deterministic optimal control that considers control trajectory, instead of control policy, as the optimization variable~\cite{Bardi1997}.

The (continuous-time) Q-function of \eqref{opt} is defined as
\begin{equation}\label{Q-function}
Q(\bm{x}, \bm{a}) := \sup_{a \in \mathcal{A}} \bigg \{
\int_0^\infty e^{-\gamma t} r (x(t), a(t)) \: \mathrm{d} t  \mid x(0) = \bm{x}, a(0) = \bm{a}
\bigg \},
\end{equation}
which represents the maximal reward incurred from time $0$ when starting from $x(0) = \bm{x}$ with $a(0) = \bm{a}$.
Suppose for a moment that the set of admissible controls $\mathcal{A}$ has no particular constraints, i.e., $\mathcal{A}:= \big \{
a: \mathbb{R}_{\geq 0} \to \mathbb{R}^m \mid a \mbox{ measurable}
\big \}$. Then,  $Q(\bm{x}, \bm{a})$ reduces to the standard optimal value function $v(\bm{x}) := \sup_{a \in \mathcal{A}} \big \{
\int_0^\infty e^{-\gamma t} r (x(t), a(t)) \: \mathrm{d} t \mid x(0) = \bm{x}
\big \}$ for all $\bm{a} \in \mathbb{R}^m$ since the action can be  switched immediately  from $\bm{a}$ to an optimal control and 
in this case $\bm{a}$ does not affect the total cost or the system trajectory in the continuous-time setting.

\begin{proposition}\label{prop:lip}
	Suppose that $\mathcal{A}:= \big \{
	a: \mathbb{R}_{\geq 0} \to \mathbb{R}^m \mid a \mbox{ measurable}
	\big \}$. Then, the optimal Q-function \eqref{Q-function} corresponds to the optimal value function $v$ for each $\bm{a} \in \bbr^m$, i.e., $Q(\bx,\bm{a},t)=v(\bx,t)$ for all $(\bx, \bm{a}, t) \in  \bbr^n \times \bbr^m \times [0,T]$.
\end{proposition}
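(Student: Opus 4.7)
The plan is to establish the equality $Q(\bm{x},\bm{a}) = v(\bm{x})$ by proving the two inequalities separately, exploiting the fact that the integrand in \eqref{opt} only sees $a(0)$ on a set of Lebesgue measure zero when $\mathcal{A}$ is the unrestricted class of measurable controls.

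For the easy inequality $Q(\bm{x},\bm{a}) \leq v(\bm{x})$, I would observe that the supremum in \eqref{Q-function} is taken over the strict subclass $\{a \in \mathcal{A} \mid a(0) = \bm{a}\}$ of admissible controls, while the supremum defining $v(\bm{x})$ is taken over all of $\mathcal{A}$. Restricting the feasible set can only decrease the supremum, which yields the bound immediately.

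The reverse inequality $Q(\bm{x},\bm{a}) \geq v(\bm{x})$ is the substantive part. I would fix $\e > 0$ and pick an $\e$-optimal measurable control $\tilde a \in \mathcal{A}$ for the value-function problem, i.e.\ $J_{\bm{x}}(\tilde a) \geq v(\bm{x}) - \e$. Then I would define a modified control $a^\star \in \mathcal{A}$ by setting $a^\star(0) := \bm{a}$ and $a^\star(t) := \tilde a(t)$ for $t > 0$. This function is still measurable because it differs from $\tilde a$ only on the singleton $\{0\}$, and it satisfies the constraint $a^\star(0) = \bm{a}$ needed in the definition of $Q(\bm{x},\bm{a})$. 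The key observation is that the Carath\'eodory solution of $\dot x = f(x,a^\star)$ with $x(0)=\bm{x}$, being determined by the integral equation $x(t) = \bm{x} + \int_0^t f(x(s),a^\star(s))\,\mathrm{d}s$, is invariant under modifications of the control on a Lebesgue null set, so the state trajectory under $a^\star$ coincides with that under $\tilde a$. For the same measure-theoretic reason, $\int_0^\infty e^{-\gamma t} r(x(t),a^\star(t))\,\mathrm{d}t = \int_0^\infty e^{-\gamma t} r(x(t),\tilde a(t))\,\mathrm{d}t$, hence $J_{\bm{x}}(a^\star) = J_{\bm{x}}(\tilde a) \geq v(\bm{x}) - \e$. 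Taking the supremum in \eqref{Q-function} and letting $\e \downarrow 0$ gives $Q(\bm{x},\bm{a}) \geq v(\bm{x})$.

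The only delicate step is justifying that pointwise modification of the control at $t=0$ does not disturb the trajectory or the reward integral; I would handle this by invoking the standard Carath\'eodory existence theory, under which both the dynamics and the cost depend on the control only through Lebesgue integrals. I also note in passing that the proposition as stated carries an extraneous time argument $t$, which is presumably a typographical remnant of a finite-horizon notation; the infinite-horizon definitions of $Q$ and $v$ given in \eqref{Q-function} and the discussion thereafter carry no such argument, and the argument above establishes the intended identity $Q(\bm{x},\bm{a}) = v(\bm{x})$ for every $\bm{a} \in \mathbb{R}^m$.
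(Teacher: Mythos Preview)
Your argument is correct and follows essentially the same route as the paper's proof: both obtain the nontrivial inequality by picking an $\e$-optimal control for $v(\bm{x})$, overwriting its value at the single time $t=0$ (or $t$ in the paper's finite-horizon notation) to match $\bm{a}$, and observing that this null-set modification leaves both the Carath\'eodory trajectory and the reward integral unchanged. Your version is in fact slightly more explicit about the Carath\'eodory justification, and your remark about the stray time argument is apt---the paper's own proof carries the same finite-horizon vestiges.
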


Thus, if $\mathcal{A}$ is chosen as above, 
the Q-function has no additional interesting property under the standard choice of $\mathcal{A}$.\footnote{This observation is consistent with the previously reported result on the continuous limit of $Q$-functions~\cite{baird94, pmlr-v97-tallec19a}.}
Motivated by the observation,  we restrict the control $a(t)$ to be a Lipschitz continuous function in $t$. 
Since any Lipschitz continuous function is differentiable almost everywhere, 
we choose the set of admissible controls as 
\[
\mathcal{A}:= \big \{
a: \mathbb{R}_{\geq 0} \to \mathbb{R}^m \mid a \mbox{ measurable}, \: |\dot{a}(t)|\le L \mbox{ a.e.}
\big \},
\]
where $| \cdot |$ denotes the standard Euclidean norm, and $L$ is a fixed constant.
From now on, we will focus on the optimal control problem~\eqref{opt} with Lipschitz continuous controls, i.e., $|\dot{a}(t)|\le L \mbox{ a.e.}$, and the corresponding Q-function~\eqref{Q-function}.

%\footnote{Since any Lipschitz continuous function is differentiable almost everywhere, our choice of $\mathcal{A}$ restricts the control as a Lipschitz continuous function.} 

Our first step is to study the structural properties of the optimality equation and the optimal control via dynamic programming.
Using the discovered structural properties,
a DQN-like algorithm is then designed to solve the optimal control problem~\eqref{opt} in a model-free manner.

\subsection{Dynamic Programming and HJB Equations}

By the dynamic programming principle, we have
\begin{equation}\nonumber
Q(\bm{x}, \bm{a}) = \sup_{a \in \mathcal{A}} 
\bigg \{
\int_t^{t+h} e^{-\gamma (s-t)} r(x(s), a(s)) \: \mathrm{d} s + 
e^{-\gamma h} Q(x(t+h), a(t+h)) \mid x(t) = \bm{x}, a(t) = \bm{a}
\bigg \}
\end{equation}
for any $h > 0$. 
Rearranging this equality, we obtain
\begin{equation}\nonumber
\begin{split}
0&= \sup_{a \in \mathcal{A}} 
\bigg \{
\frac{1}{h} \int_t^{t+h} e^{-\gamma (s-t)} r(x(s), a(s)) \: \mathrm{d} s+ \frac{1}{h} [Q(x(t+h), a(t+h)) - Q(\bm{x}, \bm{a}) ]\\
&\hspace{4cm} + \frac{e^{-\gamma h} - 1}{h} Q(x(t+h), a(t+h))
\mid x(t) = \bm{x}, a(t) = \bm{a}
\bigg \}.
\end{split}
\end{equation}
Letting $h$ tend to zero and
assuming for a moment that the Q-function is continuously differentiable, its Taylor expansion yields
\begin{equation}\nonumber
\gamma Q (\bm{x}, \bm{a}) - \nabla_{\bm{x}} Q \cdot f(\bm{x}, \bm{a}) - \sup_{\bm{b} \in \mathbb{R}^m, | \bm{b} | \leq L} \nabla_{\bm{a}} Q \cdot \bm{b}- r(\bm{x}, \bm{a}) = 0,
\end{equation}
where the optimization variable $\bm{b}$ represents $\dot{a} (t)$.
Note that the supremum is attained at 
$\bm{b}^\star = L \frac{\nabla_{\bm{a}} Q}{|\nabla_{\bm{a}} Q|}$.
Thus, we obtain 
\begin{equation}\label{HJB}
\gamma Q (\bm{x}, \bm{a}) - \nabla_{\bm{x}} Q \cdot f(\bm{x}, \bm{a}) - L {|\nabla_{\bm{a}} Q|} - r(\bm{x}, \bm{a}) = 0,
\end{equation}
which is the \emph{HJB equation for the Q-function}.
However, the Q-function is not continuously differentiable in general. This motivates us to consider a weak solution of the HJB equation. Among several types of weak solutions, it is shown in Appendix~\ref{app:vis} that the Q-function corresponds to the unique \emph{viscosity solution}~\cite{Crandall1983} of the HJB equation under the following assumption:
\begin{assumption}\label{ass:bl}
	The functions $f$ and $r$ are bounded and Lipschitz continuous, i.e., there exists a constant $C$ such that $\|f \|_{L^\infty} + \|r \|_{L^\infty} < C$ and $\|f \|_{\mathrm{Lip}} + \|r \|_{\mathrm{Lip}} < C$, where $\| \cdot \|_{\mathrm{Lip}}$ denotes a Lipschitz constant of argument. 
\end{assumption}

\subsection{Optimal Controls}

In the derivation of the HJB equation above, we deduce that an optimal control $a$ must satisfies $\dot{a} = L \frac{\nabla_{\bm{a}} Q}{|\nabla_{\bm{a}} Q|}$ when $Q$ is differentiable.
The viscosity solution framework~\cite{Bardi1997}  can be used to obtain the following  more rigorous characterization of optimal controls when the Q-function is not differentiable.

\begin{theorem}\label{optimal}
	Suppose that Assumption~\ref{ass:bl} holds. Consider a control trajectory $a^\star (s)$, $s\geq t$, defined by
	\begin{equation}\label{opt_con}
	\dot{a}^\star (s) = L \frac{p_2}{| p_2|} \quad \forall p = (p_1, p_2) \in D^\pm Q (x^\star (s), a^\star (s))  
	\end{equation}
	for a.e. $s \geq t$, and $a^\star (t) = \bm{a}$, where $\dot{x}^\star = f(x^\star, a^\star)$ for $s\geq t$ and $x^\star (t) = \bm{x}$. Assume that the function $Q$ is locally Lipschitz in a neighborhood of $(x^\star (s), a^\star (s))$ and that $D^+ Q(x^\star (s), a^\star (s)) = \partial Q (x^\star(s), a^\star (s))$ for a.e. $s \geq t$.\footnote{Here, $D^+ Q$ and $D^- Q$ denote the super- and sub-differentials of $Q$, respectively, and $D^\pm Q := D^+ Q \cup D^- Q$. At a point $(\bm{x}, \bm{a})$ where $Q$ is differentiable, the super- and sub-differentials are identical to the singleton of the classical derivative of $Q$. Moreover,  $\partial Q$ denotes the Clarke's generalized gradient of $Q$ (see, e.g., p. 63 of \cite{Bardi1997}). Note that the right-hand side of ODE~\eqref{opt_con} can be arbitrarily chosen when $p_2 = 0$.}  Then, $a^\star$ is optimal among those in $\mathcal{A}$ such that $a(t) = \bm{a}$, i.e., it satisfies
	
	\begin{equation}\label{optimality}  
	a^\star \in \argmax_{a \in \mathcal{A}} \bigg\{ \int_t^\infty e^{-\gamma (s - t)} r (x(s), a(s)) \: \mathrm{d}s,\Big\vert ~ x(t) = \bm{x}, a(t) = \bm{a}
	\bigg \}.
	\end{equation}
	If, in addition, 
	\[
	\bm{a} \in \argmax_{\bm{a}' \in \mathbb{R}^m} Q(\bm{x}, \bm{a}'),
	\]
	then $a^\star$ is an optimal control, i.e., it satisfies
	\begin{equation*} \nonumber
	a^\star \in \argmax_{a \in \mathcal{A}} \left\{ \int_t^\infty e^{-\gamma (s - t)} r (x(s), a(s)) \: \mathrm{d}s ~\Big\vert ~ x(t) = \bm{x} \right\}.
	\end{equation*}
\end{theorem}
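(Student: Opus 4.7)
The plan is a verification argument grounded in the HJB characterization of $Q$ as the unique viscosity solution (Appendix~\ref{app:vis}). I would compute the time derivative of the discounted Q-function along $(x^\star, a^\star)$, show it equals $-e^{-\gamma(s-t)}r(x^\star(s), a^\star(s))$ almost everywhere, and integrate to identify $Q(\bm{x}, \bm{a})$ with the total discounted reward incurred under $a^\star$. Since $Q(\bm{x}, \bm{a})$ is by definition the supremum of that reward over all admissible controls with $x(t)=\bm{x}$ and $a(t)=\bm{a}$, this identification yields~\eqref{optimality} immediately.

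Concretely, I would first note that $a^\star$ is admissible (it is Lipschitz with constant $L$ by construction), and then exploit the local Lipschitz regularity of $Q$ along the trajectory together with the absolute continuity of $s \mapsto (x^\star(s), a^\star(s))$. Clarke's chain rule then guarantees that $s \mapsto e^{-\gamma(s-t)}Q(x^\star(s), a^\star(s))$ is absolutely continuous and that at a.e.\ $s\geq t$ its derivative can be expressed as
\begin{equation*}
\frac{\mathrm{d}}{\mathrm{d}s}\bigl[e^{-\gamma(s-t)}Q(x^\star,a^\star)\bigr] = e^{-\gamma(s-t)}\bigl[-\gamma Q + p_1\cdot f(x^\star,a^\star) + p_2\cdot\dot{a}^\star\bigr]
\end{equation*}
for some $p = (p_1,p_2)\in\partial Q(x^\star(s), a^\star(s))$. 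By the hypothesis $D^+Q = \partial Q$ a.e.\ along the trajectory, such a $p$ lies in $D^+Q$; the construction $\dot{a}^\star = L p_2/|p_2|$ (arbitrary when $p_2=0$, per the footnote) yields $p_2\cdot\dot{a}^\star = L|p_2|$; and the viscosity subsolution inequality for $Q$ evaluated on $D^+Q$ gives $p_1\cdot f + L|p_2| \geq \gamma Q - r$. Combining these reveals
\begin{equation*}
\frac{\mathrm{d}}{\mathrm{d}s}\bigl[e^{-\gamma(s-t)}Q(x^\star,a^\star)\bigr] \geq -e^{-\gamma(s-t)}r(x^\star,a^\star)\quad\text{a.e. }s\geq t.
\end{equation*}
Integrating on $[t,T]$ and using boundedness of $r$ (Assumption~\ref{ass:bl}), hence boundedness of $Q$, to send $T\to\infty$ and discard the terminal term yields $Q(\bm{x}, \bm{a}) \leq \int_t^\infty e^{-\gamma(s-t)}r(x^\star(s), a^\star(s))\,\mathrm{d}s$. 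The reverse inequality is immediate from the defining supremum in~\eqref{Q-function}, so equality holds and $a^\star$ realizes the supremum, establishing~\eqref{optimality}. The second assertion is then a one-line consequence: for any $a\in\mathcal{A}$ with $x(t)=\bm{x}$, the first part gives $\int_t^\infty e^{-\gamma(s-t)}r(x,a)\,\mathrm{d}s \leq Q(\bm{x}, a(t)) \leq \max_{\bm{a}'}Q(\bm{x},\bm{a}') = Q(\bm{x},\bm{a})$, which coincides with the value attained by $a^\star$.

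I expect the principal technical obstacle to be the interplay between Clarke's chain rule and the viscosity sub/supersolution property, since the pointwise HJB equality is not a priori available when $Q$ fails to be classically differentiable. The hypothesis $D^+Q = \partial Q$ a.e.\ on the trajectory is precisely the bridge that enables the argument: it forces any generalized gradient returned by the chain rule to be a supergradient, to which the subsolution inequality applies, and the alignment of $\dot{a}^\star$ with the $\bm{a}$-component of that gradient activates the $L|p_2|$ term so that the one-sided inequality, once integrated, is saturated by the supremum characterization of $Q$. The subsidiary concerns---a measure-zero set where $p_2=0$ (harmless under the footnote), and the vanishing of the terminal term under Assumption~\ref{ass:bl}---should be routine.
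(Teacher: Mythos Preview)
Your verification argument is correct and in fact unpacks what the paper's proof leaves implicit. The paper's own proof is a two-line appeal to a classical necessary-and-sufficient optimality condition (Theorem~2.54, Chapter~III of \cite{Bardi1997}): a control is optimal if and only if, for every $p=(p_1,p_2)\in D^\pm Q$ along the trajectory, the Hamiltonian $p_1\cdot f + p_2\cdot\bm{b} + r$ is maximized in $\bm{b}$ by $\dot a^\star$, which reduces immediately to $p_2\cdot\dot a^\star = L|p_2|$ and hence to the ODE~\eqref{opt_con}. The second assertion is then derived in the paper by the same chain of inequalities you wrote.

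What you do differently is supply the verification argument behind that cited theorem: Clarke's chain rule on the locally Lipschitz $Q$ along the absolutely continuous trajectory, the hypothesis $\partial Q=D^+Q$ to land the chain-rule gradient in the superdifferential, the subsolution inequality to get the one-sided estimate, and then integration plus the supremum definition of $Q$ to close the loop. This is more self-contained and makes transparent exactly where each hypothesis (local Lipschitzness, $D^+Q=\partial Q$, boundedness from Assumption~\ref{ass:bl}) enters. The paper's route is shorter but relies on the reader knowing or trusting the cited black box; your route would stand on its own. Substantively the two arguments coincide---the Bardi--Capuzzo-Dolcetta theorem is proved by precisely the verification you sketch---so neither gains generality over the other.
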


Note that at a point $(\bm{x}, \bm{a})$ where $Q$ is differentiable, the ODE~\eqref{opt_con} is  simplified to $\dot{a}^\star = L \frac{\nabla_{\bm{a}}Q (x^\star, a^\star)}{|\nabla_{\bm{a}}Q (x^\star, a^\star)|}$. A useful implication of this theorem is that for any $\bm{a} \in \mathbb{R}^m$, an optimal control in $\mathcal{A}$ such that $a(t) = \bm{a}$ can  be obtained using the ODE \eqref{opt_con} with the initial condition $a^\star (t) = \bm{a}$. Thus, when the control is initialized as an arbitrary value $\bm{a}$ at arbitrary time $t$ in Q-learning, we can still use the ODE~\eqref{opt_con} to obtain an optimal control. Another important implication of Theorem~\ref{optimal} is that an optimal control can be constructed without numerically solving any optimization problem. This salient feature assists in the design of a computationally efficient DQN algorithm for continuous control without involving any explicit optimization nor any actor network.

\section{Hamilton--Jacobi Q-Learning}\label{sec:semi}

\subsection{Semi-Discrete HJB Equations and Asymptotic Consistency}

In practice, even though the underlying physical process evolves in continuous time, the observed data, such as sensor measurements, are collected in discrete (sample) time. 
To design a concrete algorithm for learning the Q-function using such discrete-time data, we propose a novel semi-discrete version of the HJB equation~\eqref{HJB} \emph{without discretizing or approximating the continuous-time system}.
Let $h > 0$ be a fixed \emph{sampling interval}, and  let
$\mathcal{B}:= \{ b := \{ b_k\}_{k=0}^\infty \mid b_k \in \mathbb{R}^m, | b_k | \leq L \}$,
where $b_k$ is analogous to $\dot{a}(t)$ in the continuous-time case.
Given $(\bm{x}, \bm{a}) \in \mathbb{R}^n \times \mathbb{R}^m$ and 
a sequence $b \in \mathcal{B}$, we let
\[
Q^{h,b} (\bm{x}, \bm{a}) := h \sum_{k=0}^\infty r(x_k, a_k) (1-\gamma h)^k,
\]
where  $\{ (x_k, a_k) \}_{k=0}^\infty$ is defined by
$x_{k+1} = \xi(x_k, a_k; h)$ and  
$a_{k+1} = a_k + h b_k$ with $(x_0, a_0) = (\bm{x}, \bm{a})$. Here, $\xi(x_k,a_k;h)$ denotes the state  of \eqref{sys} at time $t=h$ with initial state $x(0)=x_k$ and constant action $a(t)\equiv a_k$, $t \in [0, h)$. It is worth emphasizing that our semi-discrete approximation does \emph{not}  approximate the system dynamics 
and thus is more accurate than the standard semi-discrete method.
%\end{split}
%\end{equation}
The optimal semi-discrete Q-function $Q^{h,\star}: \mathbb{R}^n \times \mathbb{R}^m \to \mathbb{R}$ is then defined by
\begin{equation}\label{Q_semi}
Q^{h,\star} (\bm{x}, \bm{a}) := \sup_{b \in \mathcal{B}} Q^{h,b} (\bm{x}, \bm{a}).
\end{equation}
Then, $Q^{h,\star}$ satisfies a semi-discrete version of the HJB equation~\eqref{HJB}.
\begin{proposition}\label{prop:semi}
	Suppose that $0 < h < \frac{1}{\gamma}$. Then, the function $Q^{h,\star}$ is a solution to the following semi-discrete HJB equation:
	\begin{equation}\label{semiHJB}
	Q^{h,\star} (\bm{x}, \bm{a})= h r(\bm{x}, \bm{a}) +(1-\gamma h) \sup_{|\bm{b}| \leq L}
	Q^{h,\star} ( \xi(\bm{x}, \bm{a}; h), \bm{a} + h \bm{b}). 
	\end{equation}
\end{proposition}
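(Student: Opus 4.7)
The plan is to prove this semi-discrete dynamic programming identity by establishing the two-way inequality via direct manipulation of the defining series. The condition $h < 1/\gamma$ ensures $0 < 1-\gamma h < 1$, and combined with Assumption~\ref{ass:bl} (which gives $\|r\|_{L^\infty} < C$) guarantees that every series $Q^{h,b}(\bm{x}, \bm{a})$ converges absolutely and is uniformly bounded by $hC/(\gamma h) = C/\gamma$. This uniform bound is what will let me freely interchange suprema and manipulate tails.

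First I would rewrite the series by peeling off its first term. For any $b = \{b_k\}_{k \ge 0} \in \mathcal{B}$ with associated trajectory $(x_k, a_k)$ starting at $(\bm{x}, \bm{a})$, I split
\begin{equation*}
Q^{h,b}(\bm{x}, \bm{a}) = h r(\bm{x}, \bm{a}) + (1-\gamma h) \cdot h \sum_{k=0}^\infty r(x_{k+1}, a_{k+1})(1-\gamma h)^k.
\end{equation*}
Define the shifted sequence $\tilde{b} := \{b_{k+1}\}_{k \ge 0} \in \mathcal{B}$ and note that the shifted trajectory $(x_{k+1}, a_{k+1})$ is exactly the trajectory generated by $\tilde{b}$ starting from the initial pair $(x_1, a_1) = (\xi(\bm{x}, \bm{a}; h),\, \bm{a} + h b_0)$, since $\xi$ depends only on the current state and the constant action over $[0,h)$. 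Hence the tail equals $Q^{h,\tilde{b}}(\xi(\bm{x}, \bm{a}; h), \bm{a} + h b_0)$, and we obtain
\begin{equation*}
Q^{h,b}(\bm{x}, \bm{a}) = h r(\bm{x}, \bm{a}) + (1-\gamma h)\, Q^{h,\tilde{b}}(\xi(\bm{x}, \bm{a}; h), \bm{a} + h b_0).
\end{equation*}

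Next I would take the supremum over $b \in \mathcal{B}$. For the ``$\le$'' direction, bound $Q^{h,\tilde{b}}(\cdot, \cdot) \le Q^{h,\star}(\cdot, \cdot)$ by definition, which yields
\begin{equation*}
Q^{h,\star}(\bm{x}, \bm{a}) \le h r(\bm{x}, \bm{a}) + (1-\gamma h) \sup_{|\bm{b}| \le L} Q^{h,\star}(\xi(\bm{x}, \bm{a}; h), \bm{a} + h \bm{b}),
\end{equation*}
where I have used $1-\gamma h > 0$ so the factor preserves inequalities, and that the only dependence on $b_0$ in the right-hand side is through the action update $\bm{a}+hb_0$ with $|b_0| \le L$. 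For the reverse ``$\ge$'' direction, fix $\varepsilon > 0$ and any $\bm{b}$ with $|\bm{b}| \le L$. By the definition of $Q^{h,\star}$ at the successor state, pick a sequence $\tilde{b}^\varepsilon \in \mathcal{B}$ realizing the supremum up to $\varepsilon$, then concatenate $b_0 := \bm{b}$ with $\tilde{b}^\varepsilon$ to produce an admissible $b \in \mathcal{B}$. The identity above then gives
\begin{equation*}
Q^{h,\star}(\bm{x}, \bm{a}) \ge h r(\bm{x}, \bm{a}) + (1-\gamma h)\bigl[Q^{h,\star}(\xi(\bm{x}, \bm{a}; h), \bm{a} + h \bm{b}) - \varepsilon\bigr].
\end{equation*}
Taking the supremum over $\bm{b}$ with $|\bm{b}| \le L$ and then sending $\varepsilon \downarrow 0$ closes the inequality.

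The main thing to be careful about is the concatenation step and the fact that the shifted sequence $\tilde{b}$ lives in the same set $\mathcal{B}$ as the original, so no admissibility is lost; and conversely that any $\bm{b}$ with $|\bm{b}| \le L$ can serve as the first entry of an admissible sequence. Both are immediate from the product structure of $\mathcal{B}$, but writing them cleanly is essentially the whole content of the proof. The uniform boundedness from Assumption~\ref{ass:bl} (ensuring finiteness of all quantities) and the sign of $1-\gamma h$ (ensuring the tail factor preserves inequalities) are the only analytic hypotheses used.
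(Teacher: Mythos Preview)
Your proof is correct and follows essentially the same approach as the paper's own proof: peel off the first term of the series to obtain the one-step recursion $Q^{h,b}(\bm{x},\bm{a}) = hr(\bm{x},\bm{a}) + (1-\gamma h)\,Q^{h,\tilde b}(\xi(\bm{x},\bm{a};h),\bm{a}+hb_0)$, bound the tail by $Q^{h,\star}$ for the ``$\le$'' direction, and use an $\varepsilon$-near-optimal tail concatenated with an arbitrary first step $\bm{b}$ for the ``$\ge$'' direction. One small remark: you invoke Assumption~\ref{ass:bl} for absolute convergence of the series, whereas the proposition as stated assumes only $0<h<1/\gamma$; the paper's argument proceeds formally without isolating this point (boundedness of $r$ is implicit in the problem setup), so this is just extra care on your part rather than a discrepancy.
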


Under Assumption~\ref{ass:bl}, $Q^{h,\star}$ coincides with the unique solution of the semi-discrete HJB equation~\eqref{semiHJB}. 
Moreover, the optimal semi-discrete Q-function converges uniformly to its original counterpart in every compact subset of $\mathbb{R}^n \times \mathbb{R}^m$. 

\begin{proposition}\label{conv1}
	Suppose that $0 < h < \frac{1}{\gamma}$ and that Assumption~\ref{ass:bl} holds.
	Then, the function $Q^{h,\star}$ is the unique solution to the semi-discrete HJB equation~\eqref{semiHJB}. Furthermore, we have
	%	as $h\to 0$, $Q^{h,\star}$ converges uniformly to the optimal Q-function in every compact  subset of $\mathbb{R}^n \times \mathbb{R}^m$, i.e.,
	\[
	\lim_{h\to 0} \sup_{(\bm{x}, \bm{a}) \in K,  K \mathrm{\tiny compact}}  
	| Q^{h,\star} (\bm{x}, \bm{a}) - Q (\bm{x}, \bm{a}) |  = 0.
	\]
\end{proposition}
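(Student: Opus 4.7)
My plan divides into two parts matching the two claims of the proposition.

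For uniqueness, I would recast \eqref{semiHJB} as a fixed-point equation $w = T_h w$, where $T_h$ acts on the Banach space $C_b(\bbr^n\times\bbr^m)$ of bounded continuous functions with the sup norm via
$$(T_h w)(\bm{x},\bm{a}) := h\, r(\bm{x},\bm{a}) + (1-\gamma h)\sup_{|\bm{b}|\le L} w\bigl(\xi(\bm{x},\bm{a};h),\,\bm{a}+h\bm{b}\bigr).$$
Since $|\sup_{\bm{b}} w_1 - \sup_{\bm{b}} w_2| \le \|w_1-w_2\|_\infty$ pointwise, and $1-\gamma h \in (0,1)$ under the hypothesis $0<h<1/\gamma$, one obtains the contraction estimate $\|T_h w_1 - T_h w_2\|_\infty \le (1-\gamma h)\|w_1-w_2\|_\infty$. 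Assumption~\ref{ass:bl} gives $\|r\|_\infty\le C$, so $|Q^{h,\star}|\le C/\gamma$ uniformly in $h$ and $Q^{h,\star}\in C_b$; Proposition~\ref{prop:semi} asserts that it is a fixed point of $T_h$. Banach's fixed-point theorem then supplies uniqueness.

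For the convergence claim, I would employ the Barles--Souganidis half-relaxed limit framework. The uniform bound $|Q^{h,\star}|\le C/\gamma$ just established permits the definitions
$$\overline Q(\bm{x},\bm{a}) := \limsup_{\substack{(\bm{y},\bm{c})\to(\bm{x},\bm{a})\\ h'\downarrow 0}} Q^{h',\star}(\bm{y},\bm{c}), \qquad \underline Q(\bm{x},\bm{a}) := \liminf_{\substack{(\bm{y},\bm{c})\to(\bm{x},\bm{a})\\ h'\downarrow 0}} Q^{h',\star}(\bm{y},\bm{c}),$$
which are bounded with $\underline Q \le \overline Q$ trivially, and respectively upper and lower semicontinuous. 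The heart of the proof is to show that $\overline Q$ is a viscosity subsolution and $\underline Q$ a viscosity supersolution of \eqref{HJB}. Given a smooth test function $\phi$ with $\overline Q-\phi$ having a strict local maximum at $(\bm{x}_0,\bm{a}_0)$, I pick an approximating sequence $(\bm{x}_{h'},\bm{a}_{h'})\to(\bm{x}_0,\bm{a}_0)$ of local maximizers of $Q^{h',\star}-\phi$ along which the $\limsup$ is realized, substitute $\phi$ for $Q^{h',\star}$ in \eqref{semiHJB} at $(\bm{x}_{h'},\bm{a}_{h'})$, and Taylor-expand. Using Assumption~\ref{ass:bl} to write $\xi(\bm{x}_{h'},\bm{a}_{h'};h') = \bm{x}_{h'}+h' f(\bm{x}_{h'},\bm{a}_{h'})+o(h')$ and the explicit formula $\sup_{|\bm{b}|\le L}[h'\nabla_{\bm{a}}\phi\cdot\bm{b}] = h' L |\nabla_{\bm{a}}\phi|$, dividing by $h'$ and passing $h'\downarrow 0$ yields the subsolution inequality $\gamma\phi-\nabla_{\bm{x}}\phi\cdot f - L|\nabla_{\bm{a}}\phi|-r \le 0$ at $(\bm{x}_0,\bm{a}_0)$; the supersolution case is symmetric.

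Once both half-relaxed limits are known to be viscosity sub/super-solutions, the comparison principle for \eqref{HJB} established in Appendix~\ref{app:vis} yields $\overline Q \le Q \le \underline Q$; combined with $\underline Q\le\overline Q$ this forces $\overline Q=\underline Q=Q$, which is precisely local uniform convergence $Q^{h,\star}\to Q$ on every compact set. The main obstacle I anticipate is the viscosity verification step: one must justify the Taylor expansion of $\xi$ uniformly in $h'$ and interchange it with both the supremum over $|\bm{b}|\le L$ and the relaxed-limit selection of $(\bm{x}_{h'},\bm{a}_{h'})$, all with error terms that vanish after division by $h'$, so that the limit equation is exactly \eqref{HJB} rather than a relaxation of it.
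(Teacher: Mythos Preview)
Your proposal is correct and follows essentially the same route as the paper's proof: uniqueness via the $(1-\gamma h)$-contraction of the Bellman operator (the paper records this as Lemma~\ref{lem:cont}), and convergence via the Barles--Souganidis half-relaxed limits combined with the comparison principle for \eqref{HJB} (the paper cites this as Theorem~1.1, Chapter~VI of \cite{Bardi1997}). The only minor technical difference is in the subsolution verification: you expand $\sup_{|\bm b|\le L}\phi(\xi,\bm a+h\bm b)$ directly and evaluate the sup as $hL|\nabla_{\bm a}\phi|+o(h)$, whereas the paper selects a near-maximizer $\bm b_n$ from the equality \eqref{semiHJB}, passes to a convergent subsequence $\bm b_{n_k}\to\bm b$ using compactness of $\{|\bm b|\le L\}$, and only afterwards bounds $\nabla_{\bm a}\phi\cdot\bm b\le L|\nabla_{\bm a}\phi|$; both maneuvers are valid and yield the same inequality.
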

%
%\begin{proof}
%	The proof is a small modification of the proof of Theorem 1.1, Chapter VI in  \cite{Bardi1997}. The only difference is that we use the exact solution $\xi(x_k,a_k;h)$, instead of using $x_k+ha_k$, which is used in \cite{Bardi1997}. The only property we need is
%		\[\lim_{h\to0}\frac{\xi(x_k,a_k;h)-x_k}{h}=f(x_k,a_k),\]
%		which is the case since $\xi(x_k,a_k;h)$ is the solution to the system \eqref{sys}. The sketch of proof is provided in Appendix~\ref{pf:sec:semi}
%\end{proof}
This proposition justifies the use of the semi-discrete HJB equation  for  small $h$.
We aim to estimate the optimal Q-function using sample data collected in discrete time, enjoying the benefits of both the semi-discrete HJB equation~\eqref{semiHJB} and the original HJB equation~\eqref{HJB}.
Namely, the semi-discrete version yields to naturally make use of  Q-learning and DQN, and the original version provides an optimal control via \eqref{opt_con} without requiring a numerical solution for any optimization problems or actor networks as we will see in Section~\ref{sec:hjdqn}.

%This is a notable  benefit of using the semi-discretize formulation \eqref{semiHJB} compared to using the other RL method after directly discretizing the system \eqref{sys}. If one discretize the dynamical system \eqref{sys} first, one need to solve a global optimization problem over continuous action space to use the other value-based RL methods. 
%However, in \eqref{semiHJB}, it only needs to solve a local optimization problem, which is reduced to simple calculation of gradient of Q-function as in \eqref{optimal_ode} when time step $h$ is sufficiently small.
% by applying the idea of Q-learning to the semi-discrete HJB equation~\eqref{semiHJB}.

%
%\subsection{Algorithm}
%
%Given $(x_k, a_k, r_k, x_{k+1})$, we update the Q-function as
%\begin{equation}
%\begin{split}
%&Q_{k+1}^h (x_k, a_k) := (1-\alpha_k) Q_k^h (x_k, a_k) \\
%&+ \alpha_k \Big [ h r_k  + (1-\gamma h) \sup_{|\bm{b} | \leq L} Q_k^h (x_{k+1},  a_{k} + h \bm{b}) 
%\Big ].
%\end{split}
%\end{equation}
%
%{\color{blue} Add Algorithm}

\subsection{Convergence Properties}

Consider the following model-free update of Q-function using the semi-discrete HJB equation~\eqref{semiHJB}:
In the $k$th iteration, for each $(\bm{x}, \bm{a})$
we collect data $(x_k := \bm{x}, a_k := \bm{a}, r_k, x_{k+1})$ and update the Q-function, with learning rate $\alpha_k$, by
\begin{equation} \label{Q_sync}
{Q}_{k+1}^h (\bm{x}, \bm{a}) :=  (1-\alpha_k) {Q}_k^h (\bm{x}, \bm{a})
\; =\alpha_k \Big [ h r_k  + (1-\gamma h) \sup_{|\bm{b} | \leq L} {Q}_k^h (x_{k+1},  \bm{a} + h \bm{b}) 
\Big ],
\end{equation}
where $x_{k+1}$ is obtained by running (or simulating) the continuous-time system from $x_k$ with action $a_k$ fixed for $h$ period without any approximation, i.e., $x_{k+1} = \xi(x_k, a_k; h)$, and $r_k = r (x_k, a_k)$. 
We refer to this synchronous Q-learning as \emph{Hamilton--Jacobi Q-learning}.
Note that this method is not practically useful because 
the update must be performed for all state-action pairs in the continuous space. 
In the following section, we propose a DQN-like algorithm to approximately perform HJ Q-learning employing deep neural networks as function approximators. 
Before doing so, we identify conditions under which  the Q-function updated by \eqref{Q_sync} converges to the optimal semi-discrete Q-function~\eqref{Q_semi}  in $L^\infty$.

\begin{theorem}\label{thm:conv2}
	Suppose that $0 < h < \frac{1}{\gamma}$, $0\le \alpha_k\le 1$ and that Assumption~\ref{ass:bl} holds.
	If the sequence $\{ \alpha_k\}_{k=0}^\infty$ of learning rates satisfies
	$\sum_{k=0}^\infty \alpha_k = \infty$,
	then 
	\[
	\lim_{k\to \infty} \| {Q}_k^h - Q^{h,\star} \|_{L^\infty} = 0.
	\]
\end{theorem}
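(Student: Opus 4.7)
The plan is to recast the synchronous update as a damped fixed-point iteration for the Bellman-type operator underlying the semi-discrete HJB equation~\eqref{semiHJB}, and then exploit that this operator is a strict contraction on $L^\infty$. Define the operator $T^h$ on bounded functions by
\[
(T^h Q)(\bm{x},\bm{a}) := h\, r(\bm{x},\bm{a}) + (1-\gamma h)\sup_{|\bm{b}|\le L} Q(\xi(\bm{x},\bm{a};h),\, \bm{a}+h\bm{b}).
\]
By Proposition~\ref{prop:semi}, $Q^{h,\star}$ is a fixed point of $T^h$, and by Proposition~\ref{conv1} it is the unique bounded fixed point under Assumption~\ref{ass:bl}. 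Because synchronous updating deterministically sets $r_k = r(\bm{x},\bm{a})$ and $x_{k+1}=\xi(\bm{x},\bm{a};h)$, the iteration rewrites as $Q_{k+1}^h = (1-\alpha_k)Q_k^h + \alpha_k\, T^h Q_k^h$.

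Next I would verify that $T^h$ is an $L^\infty$-contraction with modulus $\beta := 1-\gamma h \in (0,1)$. Using the elementary bound $|\sup_z F(z) - \sup_z G(z)| \le \sup_z |F(z)-G(z)|$ applied to $F(\bm{b})=Q_1(\cdot,\bm{a}+h\bm{b})$ and $G(\bm{b})=Q_2(\cdot,\bm{a}+h\bm{b})$, one obtains
\[
\|T^h Q_1 - T^h Q_2\|_{L^\infty} \le (1-\gamma h)\,\|Q_1-Q_2\|_{L^\infty}.
\]
Subtracting $Q^{h,\star} = T^h Q^{h,\star}$ from the update and setting $e_k := \|Q_k^h - Q^{h,\star}\|_{L^\infty}$ produces the scalar recursion
\[
e_{k+1} \le (1-\alpha_k)\,e_k + \alpha_k(1-\gamma h)\,e_k = (1-\alpha_k\gamma h)\,e_k.
\]

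Unrolling gives $e_k \le e_0\prod_{j=0}^{k-1}(1-\alpha_j\gamma h)$. Applying $\log(1-t)\le -t$ for $t\in[0,1)$, valid since $\alpha_j\gamma h \in [0,1)$ by hypothesis, one finds $\log\prod_{j=0}^{k-1}(1-\alpha_j\gamma h) \le -\gamma h\sum_{j=0}^{k-1}\alpha_j \to -\infty$, so the product vanishes and $e_k\to 0$ provided $e_0<\infty$. Finiteness of $e_0$ follows because Assumption~\ref{ass:bl} bounds $Q^{h,\star}$ (inspection of the defining series gives $\|Q^{h,\star}\|_{L^\infty}\le C/\gamma$) and $Q_0^h$ is assumed to be a bounded initialization.

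The main obstacle is the contraction estimate itself, whose only subtlety is that the supremum over the compact ball $\{|\bm{b}|\le L\}$ must behave as a non-expansion; once that is in hand, the remainder is essentially a deterministic analogue of the classical stochastic-approximation convergence of Q-learning, with the martingale-noise terms trivially absent. What makes the present situation easier than the asynchronous stochastic case is that no visitation-frequency or concentration argument is needed; what makes it slightly nontrivial is that the state--action space is continuous, so the contraction must be argued uniformly in $L^\infty$ rather than coordinate-wise.
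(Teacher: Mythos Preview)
Your proposal is correct and follows essentially the same route as the paper: the paper likewise defines the Bellman operator $T^h$, proves it is a $(1-\gamma h)$-contraction in $L^\infty$, derives the scalar recursion $e_{k+1}\le(1-\alpha_k\gamma h)e_k$, unrolls to a product, and applies $1-x\le e^{-x}$ together with $\sum_k\alpha_k=\infty$ to conclude. The only cosmetic difference is that the paper packages the contraction and the unrolled product bound into two separate lemmas before the final step.
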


%Note that the convergence condition is weaker than that for the standard Q-learning, which would also need  $\sum_{k=0}^\infty \alpha_k^2 < \infty$~\cite{Tsitsiklis1994}. 
%This is because we consider deterministic environments, unlike the standard Q-learning for stochastic environments. 
%For example, a constant stepsize $\alpha_k \equiv \alpha$ is allowed for the convergence of HJ Q-learning, while it does not guarantee the convergence of the standard Q-learning. 
Finally, by Propositions~\ref{conv1} and Theorem~\ref{thm:conv2}, we establish the following convergence result associating HJ Q-learning~\eqref{Q_sync} and the optimal Q-function in the original continuous-time setting.

\begin{cor}\label{cor:conv}
	Suppose that $0\le \alpha_k\le 1$ and that Assumption 1 holds. If the sequence $\{\alpha_k\}_{k=0}^\infty$ of learning rates satisfies $\sum_{k=0}^\infty \alpha_k=\infty$
	then, for each $0<h<\frac{1}{\gamma}$, there exists $k_h$ such that
	$h \sum_{\tau=0}^{k_h-1} \alpha_\tau \to \infty$ as $h\to0$.
	Moreover, for such a choice of $k_h$, we have
	\[\lim_{h\to0}\sup_{k\ge k_h}\sup_{(\bm{x}, \bm{a}) \in K,  K \mathrm{\tiny compact}}  
	|Q^h_{k}(\bx,\bm{a})-Q (\bx,\bm{a})|=0.\]
\end{cor}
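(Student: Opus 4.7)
The plan is to combine a quantitative refinement of Theorem~\ref{thm:conv2} with the uniform convergence supplied by Proposition~\ref{conv1}. The existence of $k_h$ is immediate from $\sum_k \alpha_k=\infty$: for each $h>0$ define $k_h$ as the smallest index with $\sum_{\tau=0}^{k_h-1}\alpha_\tau \ge h^{-3/2}$, so that $h\sum_{\tau=0}^{k_h-1}\alpha_\tau \ge h^{-1/2}\to\infty$ as $h\to 0$. Any rate with $h\sum_{\tau=0}^{k_h-1}\alpha_\tau\to\infty$ would do; this particular choice will make the decay of the error bound transparent.

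The main work is to extract from the proof of Theorem~\ref{thm:conv2} an explicit rate. Let $T^h$ denote the semi-discrete Bellman operator appearing on the right-hand side of \eqref{semiHJB}; for $0<h<1/\gamma$ it is a $(1-\gamma h)$-contraction on $L^\infty(\bbr^n\times\bbr^m)$ with unique fixed point $Q^{h,\star}$ by Proposition~\ref{conv1}. Writing the synchronous update~\eqref{Q_sync} as $Q^h_{k+1}=(1-\alpha_k)Q^h_k+\alpha_k T^h Q^h_k$ and using $0\le \alpha_k\le 1$ gives
\begin{equation*}
\|Q^h_{k+1}-Q^{h,\star}\|_{L^\infty}\le (1-\alpha_k\gamma h)\,\|Q^h_k-Q^{h,\star}\|_{L^\infty},
\end{equation*}
which, upon iteration and the elementary bound $1-x\le e^{-x}$, yields
\begin{equation*}
\|Q^h_k-Q^{h,\star}\|_{L^\infty}\le M\exp\!\Bigl(-\gamma h\sum_{\tau=0}^{k-1}\alpha_\tau\Bigr),
\end{equation*}
where $M:=\|Q^h_0-Q^{h,\star}\|_{L^\infty}$. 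Under Assumption~\ref{ass:bl} one has $\|Q^{h,\star}\|_{L^\infty}\le C/\gamma$ and $\|Q\|_{L^\infty}\le C/\gamma$, so, taking $Q^h_0$ bounded (say $\equiv 0$), the constant $M$ is uniform in $h$. Since the right-hand side is non-increasing in $k$, for $k\ge k_h$ we obtain
\begin{equation*}
\sup_{k\ge k_h}\|Q^h_k-Q^{h,\star}\|_{L^\infty}\le M\exp\!\Bigl(-\gamma h\sum_{\tau=0}^{k_h-1}\alpha_\tau\Bigr)\;\xrightarrow[h\to 0]{}\;0
\end{equation*}
by the construction of $k_h$.

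The conclusion then follows from the triangle inequality: for any compact $K\subset\bbr^n\times\bbr^m$,
\begin{equation*}
\sup_{k\ge k_h}\sup_{(\bx,\bm{a})\in K}\!|Q^h_k(\bx,\bm{a})-Q(\bx,\bm{a})|\;\le\;\sup_{k\ge k_h}\|Q^h_k-Q^{h,\star}\|_{L^\infty}+\sup_{(\bx,\bm{a})\in K}\!|Q^{h,\star}(\bx,\bm{a})-Q(\bx,\bm{a})|,
\end{equation*}
where the first term vanishes as $h\to 0$ by the exponential bound above and the second by Proposition~\ref{conv1}. The principal obstacle is quantitative rather than conceptual: the bare statement of Theorem~\ref{thm:conv2} only records the limit $k\to\infty$ for fixed $h$, so one must inspect its contraction-based proof to extract the explicit dependence on $h\sum_{\tau=0}^{k-1}\alpha_\tau$ needed to pass to the joint limit; once this rate is paired with the choice of $k_h$ satisfying $h\sum_{\tau=0}^{k_h-1}\alpha_\tau\to\infty$, the rest is routine bookkeeping.
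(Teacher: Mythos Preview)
Your proposal is correct and follows essentially the same route as the paper: both arguments split via the triangle inequality into $|Q^h_k-Q^{h,\star}|$ (controlled by the explicit contraction estimate $\|Q^h_k-Q^{h,\star}\|_{L^\infty}\le \exp(-\gamma h\sum_{\tau<k}\alpha_\tau)\|\Delta_0^h\|_{L^\infty}$, which the paper records as Lemma~\ref{lem:geo}) and $|Q^{h,\star}-Q|$ (handled by Proposition~\ref{conv1}), with $k_h$ chosen so that $h\sum_{\tau<k_h}\alpha_\tau\to\infty$. The only cosmetic differences are the particular threshold used to define $k_h$ (the paper takes $\sum_{\tau<k_h}\alpha_\tau>h^{-2}$ rather than $h^{-3/2}$) and your explicit remark that $\|\Delta_0^h\|_{L^\infty}$ is bounded uniformly in $h$, which the paper leaves implicit.
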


\section{Hamilton--Jacobi DQN}\label{sec:hjdqn}

The convergence result in the previous section suggests that the optimal Q-function can be estimated in a model-free manner through the use of the semi-discrete HJB equation. 
However, as mentioned, it is intractable to directly implement HJ Q-learning~\eqref{Q_sync} over a continuous state-action space.
As a practical function approximator, we employ deep neural networks. 
We then propose the \emph{Hamilton--Jacobi DQN} that approximately performs the update~\eqref{Q_sync} \emph{without discretizing or approximating the continuous-time system}. 
Since our algorithm has no actor, we only consider a parameterized Q-function $Q_\theta (\bx, \bm{a})$, where $\theta$ is the parameter vector of the network.  

%\subsection{Algorithm}

As with DQN, we use a separate target function $Q_{\theta^-}$, where the network parameter vector $\theta^-$ is updated more slowly than $\theta$. 
This allows us to update $\theta$ by solving a regression problem with an almost fixed target, resulting in consistent and stable learning~\cite{Mnih2015}.
We also use experience replay by storing transition data $(x_k, a_k, r_k, x_{k+1})$ in a buffer with fixed capacity and by randomly sampling a mini-batch of transition data $\{(x_j, a_j, r_j, x_{j+1})\}$ to update the target value. 
This reduces bias by
breaking the correlation between sample data that are sequential states~\cite{Mnih2015}. 

When setting the target value in DQN, the target Q-function needs to be maximized over all admissible actions, i.e.,
$y_j^- := h r_j + \gamma'  \max_{\bm{a}} Q_{\theta^-} (x_{j+1}, \bm{a})$. 
Evaluating the maximum is tractable in the case of discrete action spaces. 
However, in our case of continuous action spaces, it is computationally challenging to maximize the target Q-function with respect to the action variable. 
To resolve this issue, we go back to the original HJB equation and use the corresponding optimal action in Theorem~\ref{optimal}.
Specifically, we consider the action dynamics~\eqref{opt_con} with
$b_j := L \frac{\nabla_{\bm{a}} Q_{\theta^-}(x_{j}, a_j)}{|\nabla_{\bm{a}} Q_{\theta^-}(x_{j}, a_j)|}$ fixed over sampling interval $h$ to obtain
%\[
%a_{j+1} = a_j + h L \frac{\nabla_{\bm{a}} Q_{\theta^-} (x_{j+1}, a_j)}{|\nabla_{\bm{a}} Q_{\theta^-} (x_{j+1}, a_j)|}
%\]
%or
\begin{equation}\label{app_opt}
a_{j+1} = a_j + hb_j := a_j + h L \frac{\nabla_{\bm{a}} Q_{\theta^-} (x_{j}, a_j)}{|\nabla_{\bm{a}} Q_{\theta^-} (x_{j}, a_j)|}.
\end{equation}
Using this optimal control action, we can approximate the maximal target Q-function value as
$\max_{|\bm{a} - a_j| \leq hL} Q_{\theta^-} (x_{j+1}, \bm{a}) \approx 
Q_{\theta^-}  (x_{j+1}, a_j + h b_j)$.
This approximation becomes more accurate as $h$ decreases.

\begin{proposition}\label{prop:diff}
	Suppose that $Q_{\theta^-}$ is twice continuously differentiable with bounded first and second  derivatives. 
	If $\nabla_{\bm{a}} Q_{\theta^-} (x_j, a_j) \neq 0$, we have
	\[
	\lim_{h\to 0} \Big |
	\max_{|\bm{a} - a_j| \leq hL} Q_{\theta^-} (x_{j+1}, \bm{a}) - Q_{\theta^-}  (x_{j+1}, a_j + h  b_j)
	\Big | = 0.
	\]
	Moreover, the  difference above is $O(h^2)$ as $h\to 0$. 
\end{proposition}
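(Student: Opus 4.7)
\medskip

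\noindent\textbf{Proof proposal.} The plan is to Taylor-expand $Q_{\theta^-}(x_{j+1},\cdot)$ around $a_j$ to second order and show that both quantities inside the absolute value agree up to $O(h^2)$. Let $M_1,M_2$ denote uniform bounds on the first and second derivatives of $Q_{\theta^-}$, respectively. Because the admissible actions in the max satisfy $|\bm{a}-a_j|\le hL$ and $|a_j+hb_j-a_j|=h|b_j|=hL$, the Taylor remainder is bounded by $\tfrac{1}{2}M_2(hL)^2=O(h^2)$ in both cases. Writing $v:=\nabla_{\bm{a}}Q_{\theta^-}(x_{j+1},a_j)$ and $w:=\nabla_{\bm{a}}Q_{\theta^-}(x_{j},a_j)$, the two leading-order quantities are
\[
\max_{|\bm{a}-a_j|\le hL}Q_{\theta^-}(x_{j+1},\bm{a}) = Q_{\theta^-}(x_{j+1},a_j) + hL|v| + O(h^2),
\]
\[
Q_{\theta^-}(x_{j+1},a_j+hb_j) = Q_{\theta^-}(x_{j+1},a_j) + hL\,\frac{v\cdot w}{|w|} + O(h^2),
\]
where I used $b_j=Lw/|w|$. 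Hence the difference reduces to $hL\bigl(|v| - v\cdot w/|w|\bigr) + O(h^2)$.

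Next I would estimate $|v-w|$. By Assumption~\ref{ass:bl}, $f$ is bounded, so $|x_{j+1}-x_j|\le\|f\|_{L^\infty}h$, and the bounded second derivative of $Q_{\theta^-}$ yields $|v-w|\le M_2\|f\|_{L^\infty}h=O(h)$. The nontrivial step, and the one I expect to be the main obstacle, is showing the crucial cancellation $|v|-v\cdot w/|w|=O(h^2)$ rather than merely $O(h)$, which is what makes the whole expression $O(h^2)$. Setting $\epsilon:=v-w$ and using $|w|>0$ (which is where the hypothesis $\nabla_{\bm{a}}Q_{\theta^-}(x_j,a_j)\neq 0$ is essential, guaranteeing $b_j$ is well-defined and $1/|w|$ is bounded on a neighborhood), a first-order expansion of the Euclidean norm gives
\[
|v| = |w+\epsilon| = |w| + \frac{w\cdot\epsilon}{|w|} + O(|\epsilon|^2) = |w| + \frac{w\cdot\epsilon}{|w|} + O(h^2),
\]
while $v\cdot w/|w| = (w\cdot w + \epsilon\cdot w)/|w| = |w| + (w\cdot\epsilon)/|w|$ exactly. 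Subtracting, the two linear-in-$\epsilon$ terms cancel, leaving $|v|-v\cdot w/|w|=O(h^2)$.

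Combining these bounds yields
\[
\Bigl|\max_{|\bm{a}-a_j|\le hL}Q_{\theta^-}(x_{j+1},\bm{a}) - Q_{\theta^-}(x_{j+1},a_j+hb_j)\Bigr| \le hL\cdot O(h^2) + O(h^2) = O(h^2),
\]
which gives both the $O(h^2)$ rate and the stated limit. The proof is essentially a careful tracking of Taylor remainders; the only subtlety is that the naive bound $hL\cdot|v-w|=O(h^2)$ and the cancellation above are needed simultaneously, and the nondegeneracy assumption $\nabla_{\bm{a}}Q_{\theta^-}(x_j,a_j)\neq 0$ is what permits the expansion of $|w+\epsilon|$ to second order with a uniform $O(h^2)$ remainder on a small enough neighborhood.
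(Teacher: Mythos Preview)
Your proof is correct, but it takes a different route from the paper's. The paper inserts the intermediate point $a_j + hL\,v/|v|$ (using your notation $v=\nabla_{\bm a}Q_{\theta^-}(x_{j+1},a_j)$) and bounds the two resulting pieces separately: for $\Delta_1=\max - Q_{\theta^-}(x_{j+1},a_j+hLv/|v|)$ it Taylor-expands and uses that the linear term is maximized exactly at $hLv/|v|$, together with nonnegativity of $\Delta_1$; for $\Delta_2 = Q_{\theta^-}(x_{j+1},a_j+hLv/|v|) - Q_{\theta^-}(x_{j+1},a_j+hb_j)$ it applies a Lipschitz bound and the elementary inequality $\bigl|v/|v|-w/|w|\bigr|\le 2|v-w|/|v|$, combined with $|v-w|=O(h)$. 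Each piece is $O(h^2)$.

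You instead Taylor-expand both quantities directly, reducing the problem to controlling $hL\bigl(|v|-v\cdot w/|w|\bigr)$, and then exploit the second-order expansion of the Euclidean norm around $w$ to obtain the cancellation $|v|-v\cdot w/|w|=O(|v-w|^2)=O(h^2)$. This is a sharper estimate on that term (giving $O(h^3)$ rather than the paper's $O(h^2)$ for the analogous $\Delta_2$), though the overall rate is still governed by the $O(h^2)$ Taylor remainder. Your argument is more compact and avoids the triangle-inequality split; the paper's decomposition is slightly more modular and sidesteps the norm expansion by using the unit-vector inequality instead. Both rely in the same way on $w\neq 0$ and on $|x_{j+1}-x_j|=O(h)$.
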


The major advantage of using the optimal action obtained in the continuous-time case is to avoid explicitly solving the nonlinear optimization problem $\max_{|\bm{a} - a_j| \leq hL} Q_{\theta^-} (x_{j+1}, \bm{a})$, which is computationally demanding. 
With this choice of  target Q-function value and the semi-discrete HJB equation~\eqref{semiHJB}, we set the target value as
$y_j^- := h r_j + (1-\gamma h) Q_{\theta^-}  (x_{j+1}, a_j + h  b_j )$.
To mitigate the overestimation of Q-functions, we can employ double Q-learning~\cite{Hasselt2016} by simply modifying $b_j$ as $b_j := L \frac{\nabla_{\bm{a}} Q_{\theta}(x_{j}, a_j)}{|\nabla_{\bm{a}} Q_{\theta}(x_{j}, a_j)|}$ to use a greedy action with respect to $Q_\theta$ instead of $Q_{\theta^-}$.
In this double Q-learning version, Proposition~\ref{prop:diff} remains valid except for the $O(h^2)$ convergence rate.
The network parameter $\theta$ can then be trained to minimize the  loss function $\sum_j (y_j^- - Q_\theta (x_j, a_j))^2$.
%which approximates the mean-squared error in the Bellman equation.
For exploration, we add the additional Gaussian noise $\varepsilon\sim N(0,\sigma^2I_m)$ to generate the next action as $a_{k+1}:= a_k +hL\frac{\nabla_{\bm a} Q_\theta(x_k,a_k)}{|\nabla_{\bm a} Q_\theta(x_k,a_k)|}+\varepsilon$. The overall algorithm is presented in Algorithm~\ref{alg:HJ}.\footnote{
	When $\nabla_{\bm{a}} Q_{\theta}(x_{j}, a_j) = 0$, $ \frac{\nabla_{\bm{a}} Q_{\theta}(x_{j}, a_j)}{|\nabla_{\bm{a}} Q_{\theta}(x_{j}, a_j)|}$ is replaced by an arbitrary vector with norm $1$ of the same size.
}

\begin{algorithm}[tb]
	\caption{Hamilton--Jacobi DQN}
	\label{alg:HJ}
	\begin{algorithmic}
		\STATE Initialize Q-function $Q_\theta$ with random weights $\theta$, and target Q-function $Q_{\theta^-}$ with weights $\theta^- = \theta$;
		\STATE Initialize replay buffer with fixed capacity;
		\FOR{episode $=1$ {\bfseries to} $M$}
		\STATE Randomly sample initial state-action pair $(x_0, a_0)$;
		\FOR{$k=0$ {\bfseries to} $K$}
		\STATE   Execute action $a_k$ and observe reward $r_k$ and the next state $x_{k+1}$;
		\STATE Store $(x_k, a_k, r_k, x_{k+1})$ in buffer;
		\STATE Sample the random mini-batch $\{(x_j, a_j, r_j, x_{j+1})\}$ from buffer;
		%		\STATE Set $b_j := L \frac{\nabla_{\bm{a}} Q_{\theta}(x_{j}, a_j)}{|\nabla_{\bm{a}} Q_{\theta}(x_{j}, a_j)|}$ $\forall j$;
		\STATE Set $y_j^- := h r_j + (1-\gamma h) Q_{\theta^-} \big (x_{j+1}, a_j' \big )$ $\forall j$ where $a_j':= a_j + h L \frac{\nabla_{\bm{a}} Q_{\theta}(x_{j}, a_j)}{|\nabla_{\bm{a}} Q_{\theta}(x_{j}, a_j)|}$;
		\STATE Update $\theta$ by minimizing $\sum_j (y_j^- - Q_\theta (x_j, a_j))^2$;
		\STATE Update $\theta^- \leftarrow (1-\alpha) \theta^- + \alpha \theta$ for $\alpha \ll 1$;
		\STATE Set the next action as $a_{k+1} := a_k + h L\frac{\nabla_{\bm a} Q_\theta(x_k,a_k)}{|\nabla_{\bm a}Q_\theta(x_k,a_k)|}+\varepsilon$, where $\varepsilon\sim N(0,\sigma^2I_m)$;
		\ENDFOR
		\ENDFOR
	\end{algorithmic}
\end{algorithm}

\subsection{Discussion}

We now discuss a few notable features of HJ DQN with regard to existing works:

{\bf No use of parameterized policies.} 
Most of model-free deep RL algorithms for continuous control use  actor-critic methods~\cite{Lillicrap2016, Haarnoja2018, Fujimoto2018, Tessler2019} or policy gradient methods~\cite{Schulman2015, Gu2016} to deal with  continuous action spaces. 
In these methods, by parametrizing policies, the policy improvement step is performed in the space of network weights. By doing so, they avoid solving possibly complicated optimization problems over the policy or action spaces. However, these methods are subject to the issue of being stuck at local optima in the policy (parameter) space due to the use of gradient-based algorithms, as pointed out in the literature regarding policy gradient/search~\cite{Kohl2004, Levine2013,Fazel2018global} 
and actor-critic methods~\cite{Silver2014}. Moreover, it is reported that the policy-based methods are sensitive to hyperparameters \cite{Quillen2018deep}. Departing from these algorithms, HJ DQN is a value-based method for continuous control without requiring the use of an actor or a parameterized policy.
Previous value-based methods for continuous control (e.g., \cite{Ryu2020}) have a computational challenge in finding a greedy action, which requires a  solution to a nonlinear program. 
Our method avoids numerically optimizing Q-functions over the continuous action space through the use of the optimal control \eqref{opt_con}. 
This is a notable benefit of the proposed HJB framework.  

%However, they focus on the case of discrete time MDP, not the continuous-time optimization problem.
%Departing from these algorithms, HJ DQN is a value-based method for continuous control without requiring the use of an actor or a parameterized policy. 
%However, HJ DQN eliminates the optimization problem for the case of continuous-time dynamical system. It seeks to directly optimize the globally optimal Q-function via a value iteration-based algorithm, and then choose the optimal action by using gradient of Q-function.
%Thus, HJ DQN may alleviate the issue of being stuck at local optima if the optimal Q-function is well estimated.\footnote{Of course, HJ DQN may be stuck at local optima in the parameter space for Q-functions; however, this is less severe than with actor-critic methods, where local optimality issues can arise in both actor and critic.} 
%Moreover, it avoids numerically optimizing Q-functions over the continuous action space through the use of the optimal control \eqref{opt_con}.
%Another advantage over actor-critic methods is that HJ DQN has fewer hyperparameters to be tuned, as it does not use any actor networks. 
%Note also that for computational efficiency, it avoids numerically optimizing Q-functions over the continuous action space through the use of the optimal control \eqref{opt_con}.  

%\begin{itemize}
%\item Unlike most deep RL for continuous control
%
%\item Approximating globally optimal Q-function: mitigating the possibility of being stuck at local optima
%
%\item Less hyperparameter tuning
%
%\end{itemize}

{\bf Continuous-time control.} Many existing RL methods for continuous-time dynamical systems have been designed for linear systems~\cite{Palanisamy2015, Bian2016-2, Vamvoudakis2017} or control-affine systems~\cite{Jiang2015, Bhasin2013, Modares2014,Vamvoudakis2010}, in which value functions and optimal policies can be represented in a simple form. 
For general nonlinear systems, Hamilton--Jacobi--Bellman equations have been considered as the optimality equations for state-value functions $v(\bm{x})$~\cite{Doya2000, Munos2000, Dayan1996, Ohnishi2018}. 
Unlike these methods,
our method uses variant of Q-function and thus benefits from modern deep RL techniques developed in the literature on DQN.
%It can handle a large class of nonlinear dynamics and non-concave reward functions. 
Moreover, as opposed to discrete-time RL methods, it does not discretize or approximate the system dynamics and has the flexibility of choosing the sampling interval $h$ in its algorithm design, without needing a sophisticated ODE discretization method.

\subsection{Smoothing}\label{subsec:smoothing}

A potential defect of our Lipschitz constrained control setting is that 
the rate of change in action has a constant norm $L \frac{\nabla_{\bm{a}}Q (x^\star, a^\star)}{|\nabla_{\bm{a}}Q (x^\star, a^\star)|}$. 
This is also observed 
in Algorithm \ref{alg:HJ}, where the action is updated by $hL\frac{\nabla_{\bm{a}}Q_\theta(x_j,a_j)}{|\nabla_{\bm{a}}Q_\theta(x_j,a_j)|}$. 
Therefore, the magnitude of fluctuations in action is always fixed as $hL$, which may lead to the oscillatory behavior of action. 
Such oscillatory behaviors are not uncommon in optimal control (e.g., bang-bang solutions).
%In particular, at the point where $\nabla_{\bm{a}}Q_\theta=0$, action update is not continuous. 
To alleviate this potential issue, one may introduce an additional smoothing process when updating action. 
Inspired by \cite{abu2005nearly}, we modify the term $\frac{\nabla_{\bm a} Q_\theta(x_j,a_j)}{|\nabla_{\bm a}Q_\theta(x_j,a_j)|}$ by multiplying a smoothing function. Instead of using $hL\frac{\nabla_{\bm{a}}Q_\theta(x_j,a_j)}{|\nabla_{\bm{a}}Q_\theta(x_j,a_j)|}$ in the update of action, we suggest to use 
\begin{equation} \nonumber
hL \frac{\phi(|\nabla_{\bm{a}}Q_{\theta}(x_j,a_j)|)\nabla_{\bm{a}}Q_{\theta}(x_j,a_j)}{|\nabla_{\bm{a}}Q_{\theta}(x_j,a_j)|},
\end{equation}
where $\phi:[0,+\infty)\to[0,1]$ is an increasing function with $\phi(0)=0$ and $\lim_{r\to\infty}\phi(r)=1$. A typical example of such a function $\phi$ is $\phi(r)=\tanh\left(\frac{r}{L}\right)$ or $\phi(r)=\frac{r}{L+r}$. This action update rule  is expected to remove the undesirable oscillatory behavior of action, as confirmed in Section~\ref{sec:exp}.
%, and \eqref{eq:update} is now continuous at which $\nabla_{\bm{a}}Q_{\theta}=0$.

\section{Experiments}\label{sec:exp}

In this section, we present the empirical performance of our method on benchmark tasks as well as LQ problems. 
%We also provide the ablation studies on each component of HJ DQN.
The source code of our HJ DQN implementation is available online~\footnote{\url{https://github.com/HJDQN/HJQ}}.

\subsection{Actor Networks vs. Optimal Control ODE}
\label{sec:mujoco}

 We choose deep deterministic policy gradient (DDPG)~\cite{Lillicrap2016} as a baseline to compare since it is another variant of DQN for continuous control. 
DDPG is an actor-critic method using separate actor networks while ours is a valued-based method that does not use a parameterized policy. Although there are state-of-the-art methods built upon DDPG,
such as TD3 \cite{Fujimoto2018} and SAC \cite{Haarnoja2018},
we focus on the comparison between ours and DDPG to examine whether the role of actor networks can be replaced by the optimal control characterized through our HJB equation. The hyperparameters used in the experiments are reported in Appendix~\ref{app:detail}.

\begin{figure*}
	\begin{subfigure}{1\textwidth}
		\centering
		\includegraphics[width=0.5\textwidth]{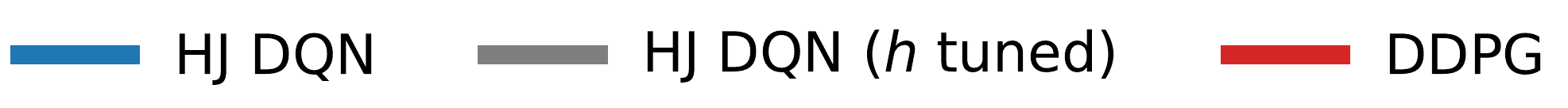}
	\end{subfigure}
	\begin{subfigure}{.24\textwidth}
		\centering
		\includegraphics[height=.172\textheight]{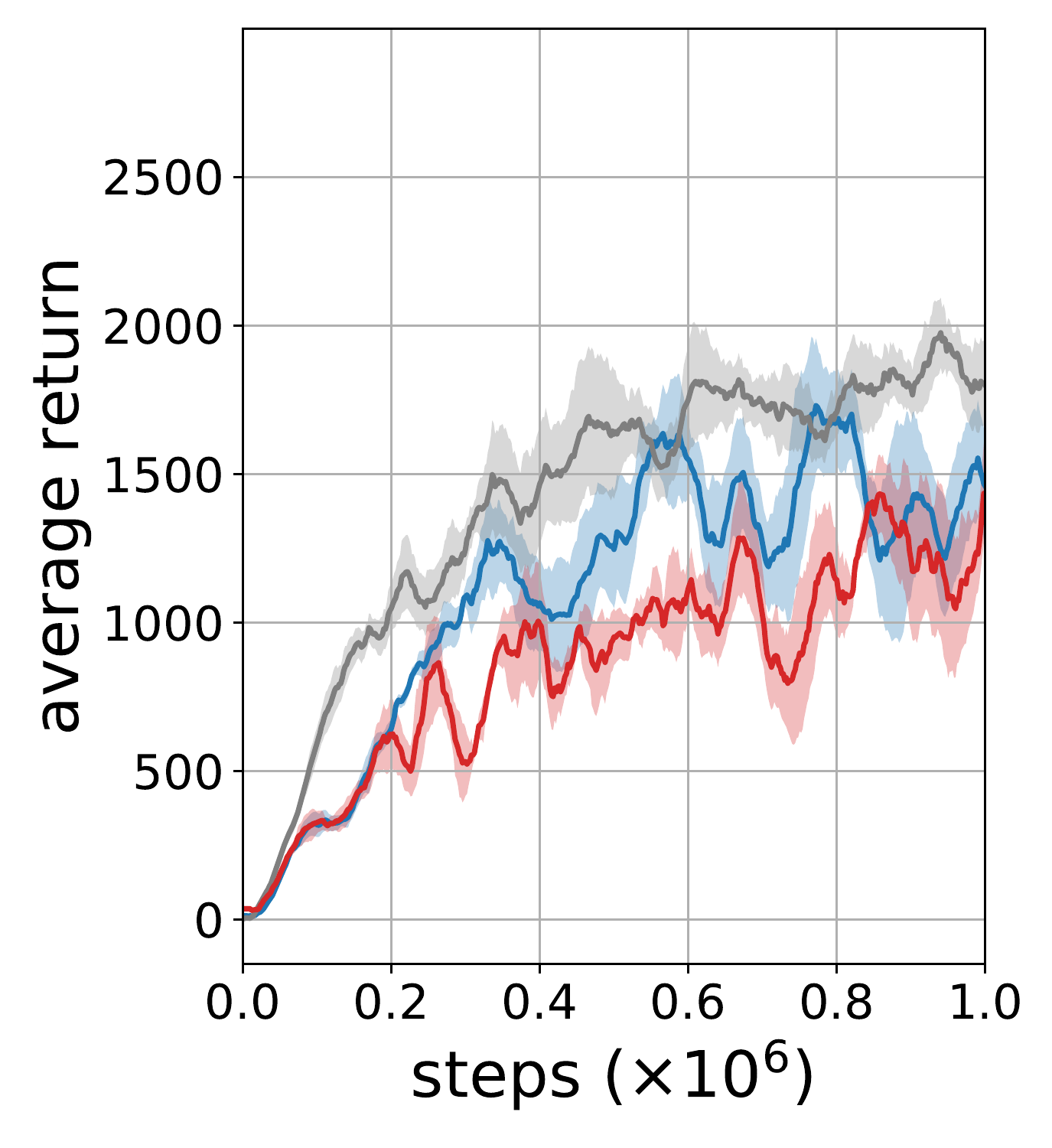}  
		\caption{Hopper-v2}
		\label{fig:1a}
	\end{subfigure}
	\hspace{0.05cm}
	\begin{subfigure}{.24\textwidth}
		\centering
		\includegraphics[height=.172\textheight]{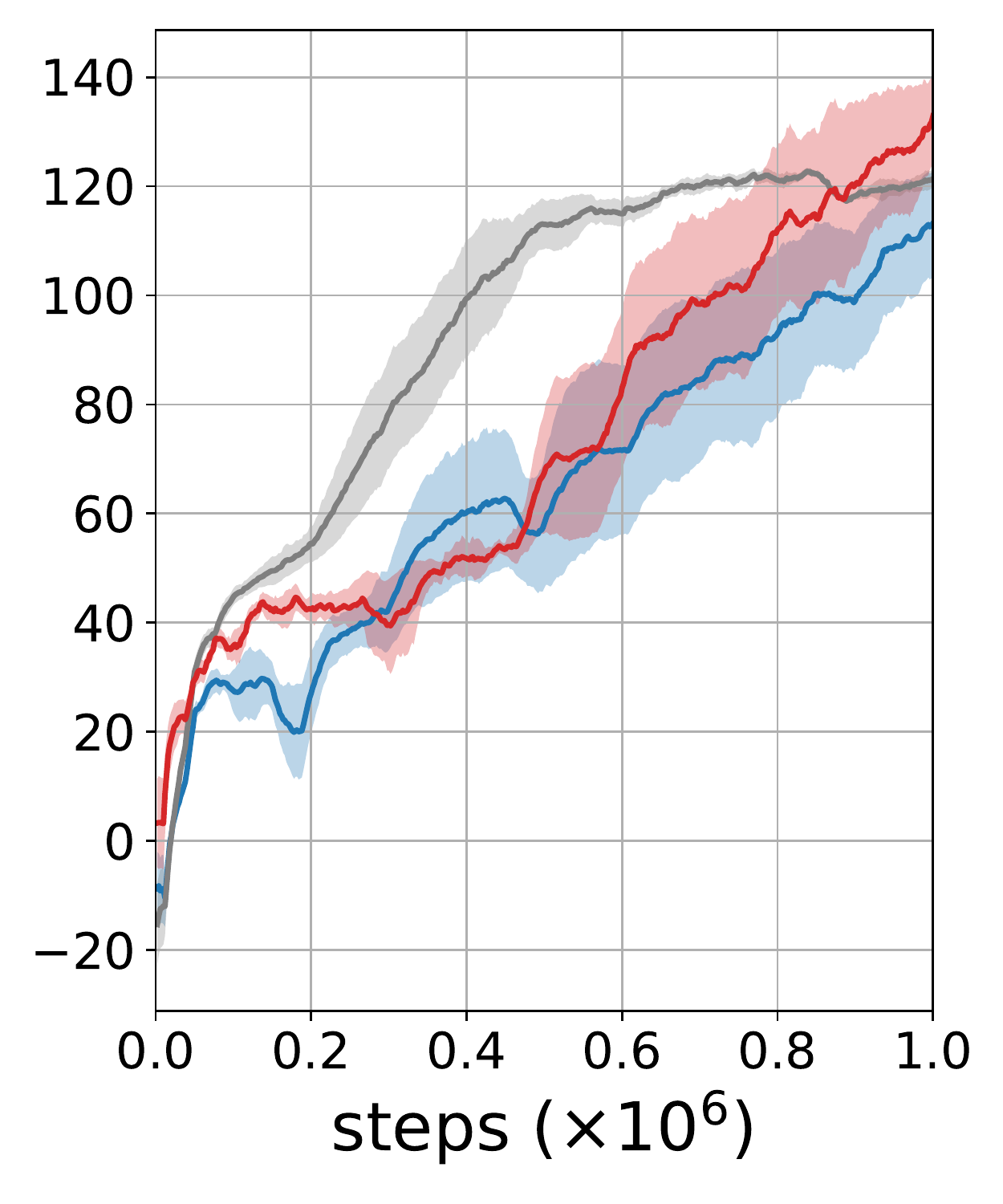}  
		\caption{Swimmer-v2}
		\label{fig:1b}
	\end{subfigure}
	\hspace{-0.15cm}
	\begin{subfigure}{.24\textwidth}
		\centering
		\includegraphics[height=.172\textheight]{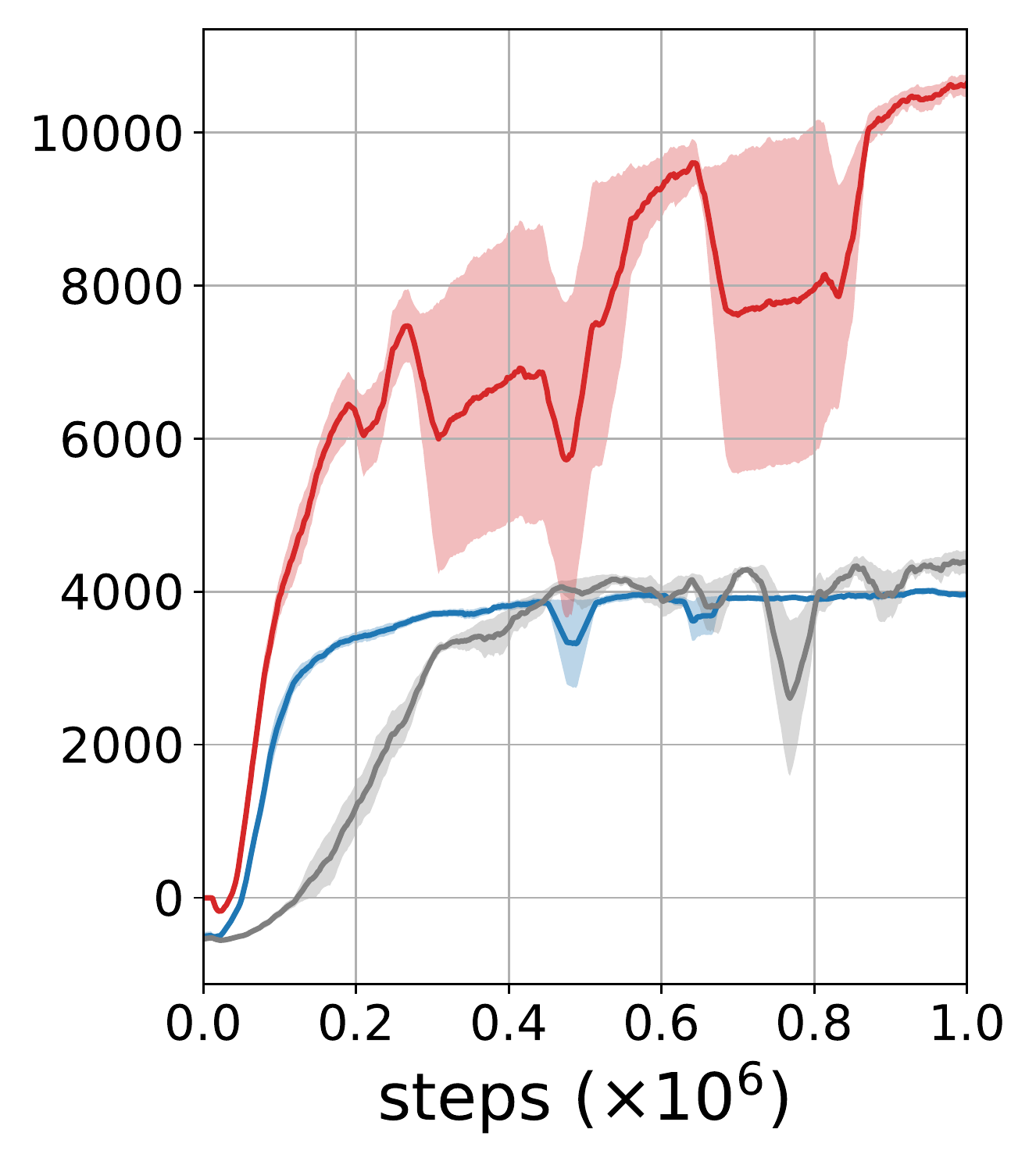}  
		\caption{HalfCheetah-v2}
		\label{fig:1c}
	\end{subfigure}
	\hspace{0.15cm}
	\begin{subfigure}{.24\textwidth}
		\centering
		\includegraphics[height=.172\textheight]{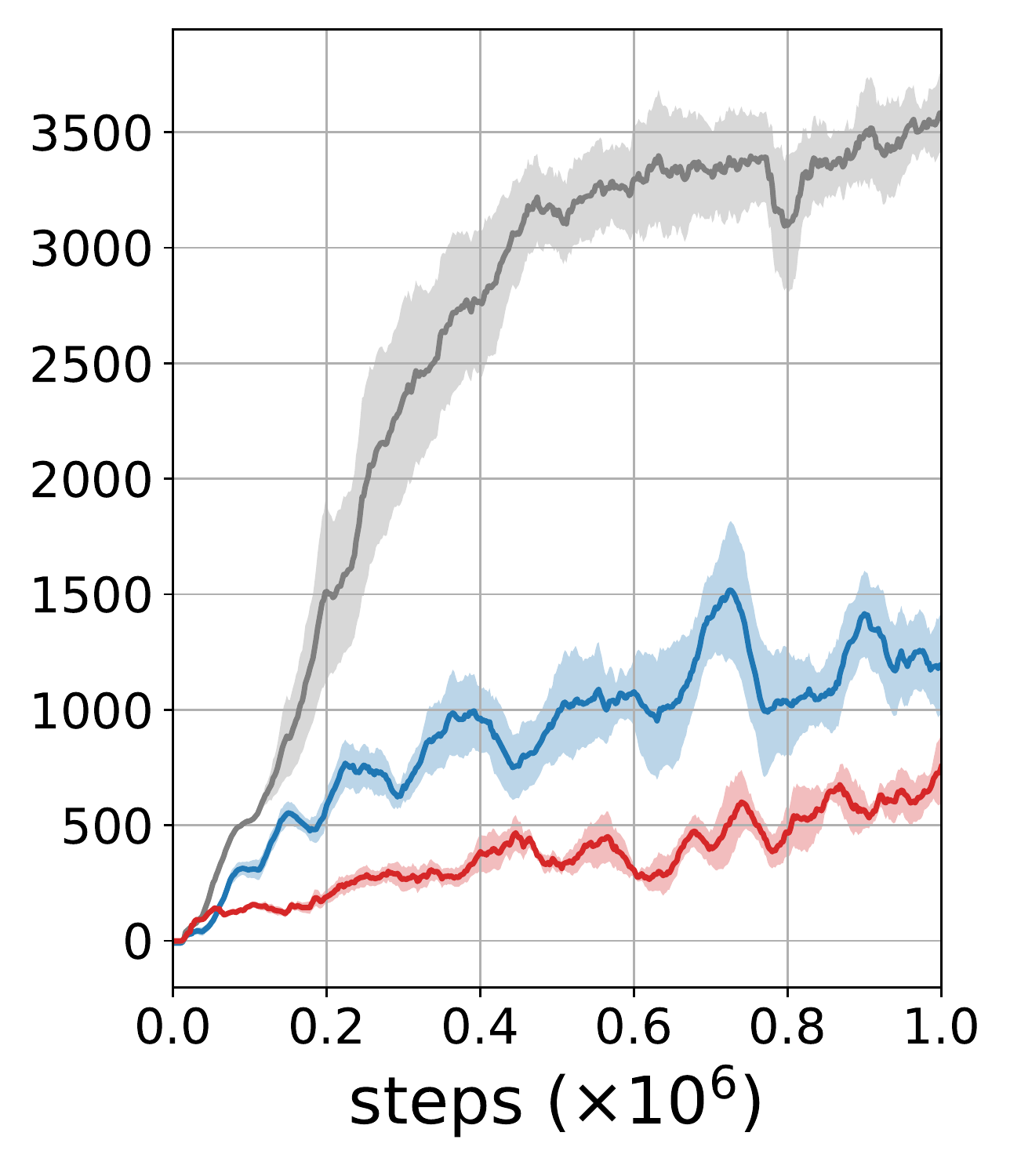}  
		\caption{Walker2d-v2}
		\label{fig:1d}
	\end{subfigure}
	\caption{Learning curves for the OpenAI gym continuous control tasks.}
	\label{fig:mujoco}
\end{figure*}

\begin{figure*}
	\begin{subfigure}{.48\textwidth}
		\centering	
		\includegraphics[height=.25\textheight]{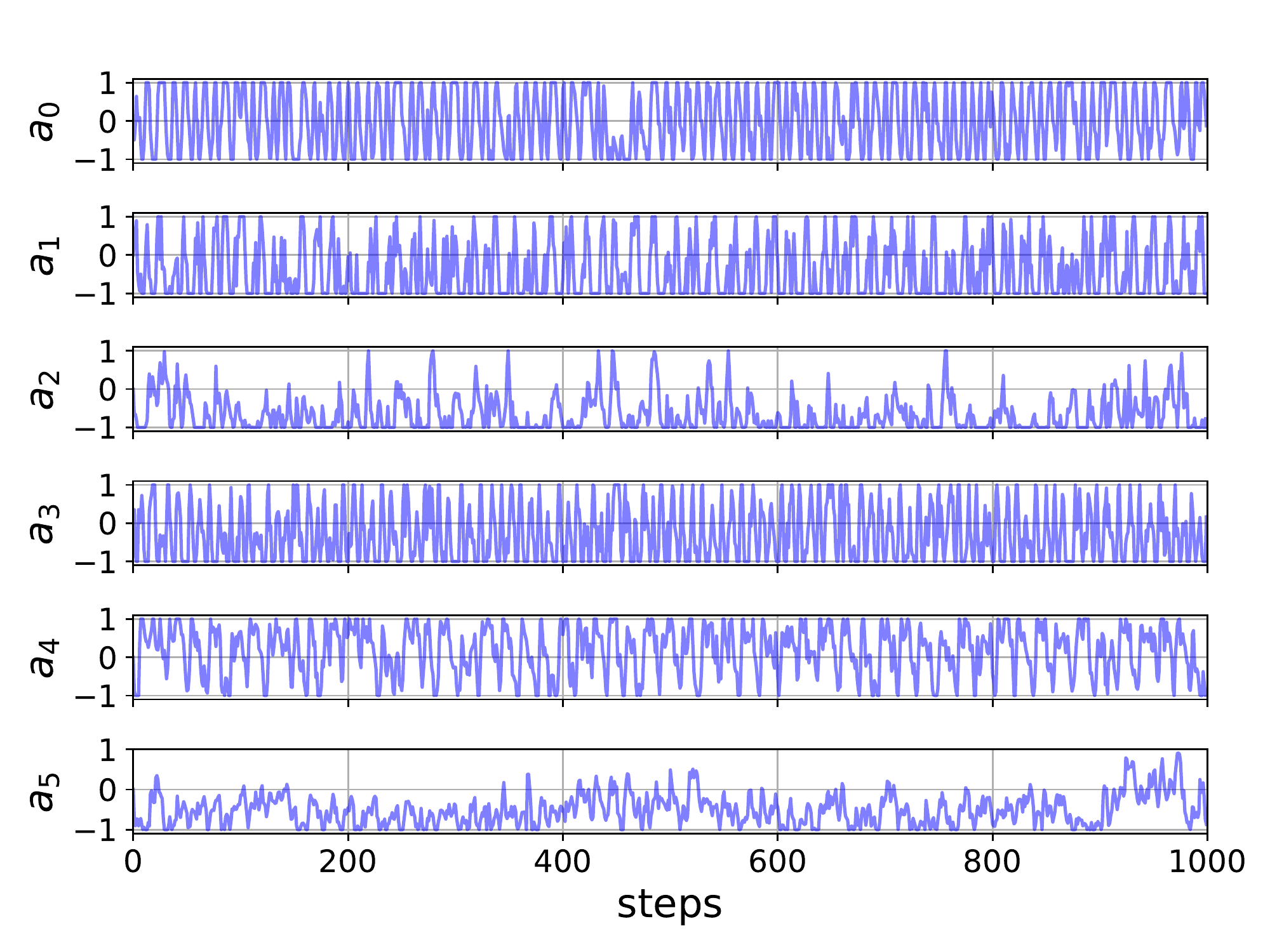}
		\caption{HJ DQN}
		\label{ctrl : a}
	\end{subfigure}
	\begin{subfigure}{.48\textwidth}
		\centering	
		\includegraphics[height=.25\textheight]{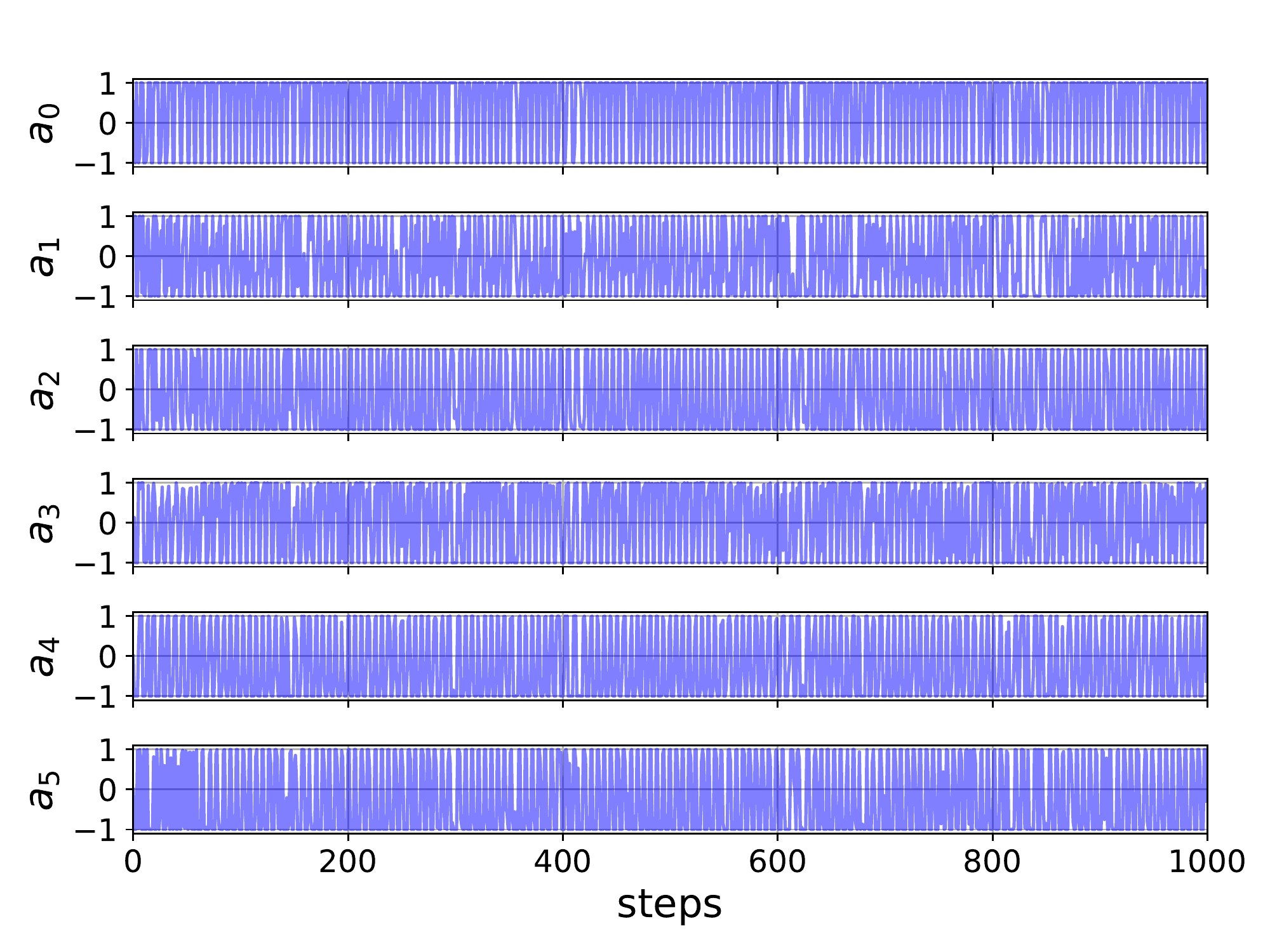}
		\caption{DDPG}
		\label{ctrl : b}
	\end{subfigure}
	\caption{Action trajectories obtained by HJ DQN and DDPG for HalfCheetah-v2.}
	\label{action_half}
\end{figure*}

We consider continuous control benchmark tasks in OpenAI gym~\cite{Brockman2016} simulated by MuJoCo engine~\cite{Todorov2012}.
%We first consider continuous control benchmark tasks in OpenAI gym~\cite{Brockman2016} simulated by MuJoCo engine~\cite{Todorov2012}.
Figure \ref{fig:mujoco} shows the learning curves for both methods, each of which is tested with five different random seeds for 1 million steps. The solid curve represents the average of returns over 20 consecutive evaluations, while the shaded regions represent  half a standard deviation of the average evaluation over five trials. 
As shown in Figure~\ref{fig:mujoco}, the performance of our method is comparable to that of DDPG when the default sampling interval  is used. 
Ours outperforms DDPG on Walker2d-v2 while the opposite result is observed in the case of HalfCheetah-v2. 
As sampling interval $h$ is a hyperparameter of Algorithm~\ref{alg:HJ}, 
we also identify an optimal $h$ for each task, other than the default sampling interval. When we test the different sampling interval, we also tune the learning rate $\alpha$, as suggested in \cite{pmlr-v97-tallec19a}. Precisely, when the sampling interval is multiplied by a constant from the default interval, the learning rate is also multiplied by the same constant. Except the HalfCheetah-v2, the final performances or learning rate are improved, compared to the default sampling interval. Overall, the results indicate that 
actor networks may be replaced by the ODE characterization~\eqref{opt_con} of optimal control obtained using our HJB framework.  
Without using actor networks, our method has clear advantages over DDPG in terms of hyperparameter tuning and computational burden.

%\begin{figure*}
%	\begin{subfigure}{\textwidth}
%		\centering	
%		\includegraphics[height=.09\textheight]{hjdqn_render}
%		\caption{HJ DQN}
%		\label{render : a}
%	\end{subfigure}
%	\newline
%	\begin{subfigure}{\textwidth}
%		\centering	
%		\includegraphics[height=.09\textheight]{ddpg_render}
%		\caption{DDPG}
%		\label{render : b}
%	\end{subfigure}
%	\caption{Rendered frame sequences of length 5 for HalfCheetah-v2, obtained by (a) HJ DQN, and (b) DDPG. The frames are selected per each sampling interval from $t = 20$ to $t=20.2$.}
%	\label{render_half}
%\end{figure*}

In Figure \ref{action_half}, we report the action trajectories obtained by HJ DQN and DDPG for HalfCheetah-v2.  The action trajectories obtained by HJ DQN is less oscillating compared to DDPG. 
%\textcolor{red}{The observation coincides with Figure \ref{render_half} which illustrates two different series of motions under the control generated by HJ DQN and DDPG. The variation of motions in case of HJ DQN is rather small, whereas a single, complete cycle of motions can be discovered in the sequence of DDPG-trained agent.}
This confirms the fact that oscillations in action are not uncommon in optimal control. 
%, and the actions for HJ DQN even do not attain the maximum magnitude, while the actions for DDPG almost always attain the maximum magnitude. 
%Thus,  HJ DQN produces a smoother action trajectory compared to DDPG
In this particular case of HalfCheetah-v2, where DDPG outperforms HJ DQN,  we suspect that fast changes in action may be needed for good performance. Oscillatory actions may be beneficial to some control tasks. 
%However, as shown in Figure~\ref{render_half}, the motion controlled by HJ DQN is more natural and smooth compared to DDPG.  In this task, the Lipschitz constraint in HJ DQN plays the role of regularizers preventing radical changes in motion. 
%

\subsection{Linear-Quadratic Problems}

\begin{figure*}[t!]
	\centering
	\begin{subfigure}{.45\textwidth}	
		\includegraphics[height=.25\textheight]{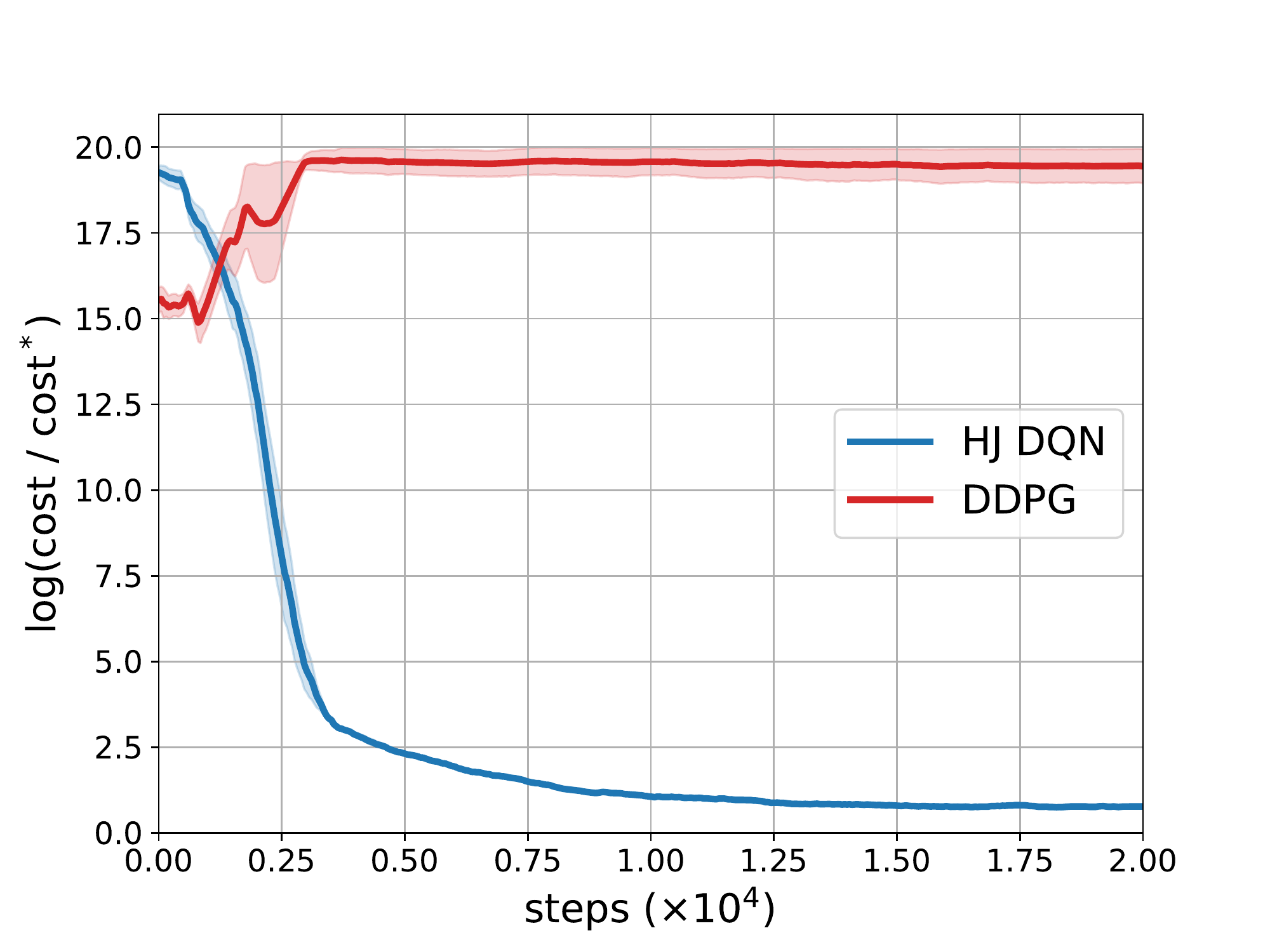}
		\caption{}
		\label{fig:lqr_ddpg}
	\end{subfigure}
	\begin{subfigure}{.45\textwidth}
		\includegraphics[height=.25\textheight]{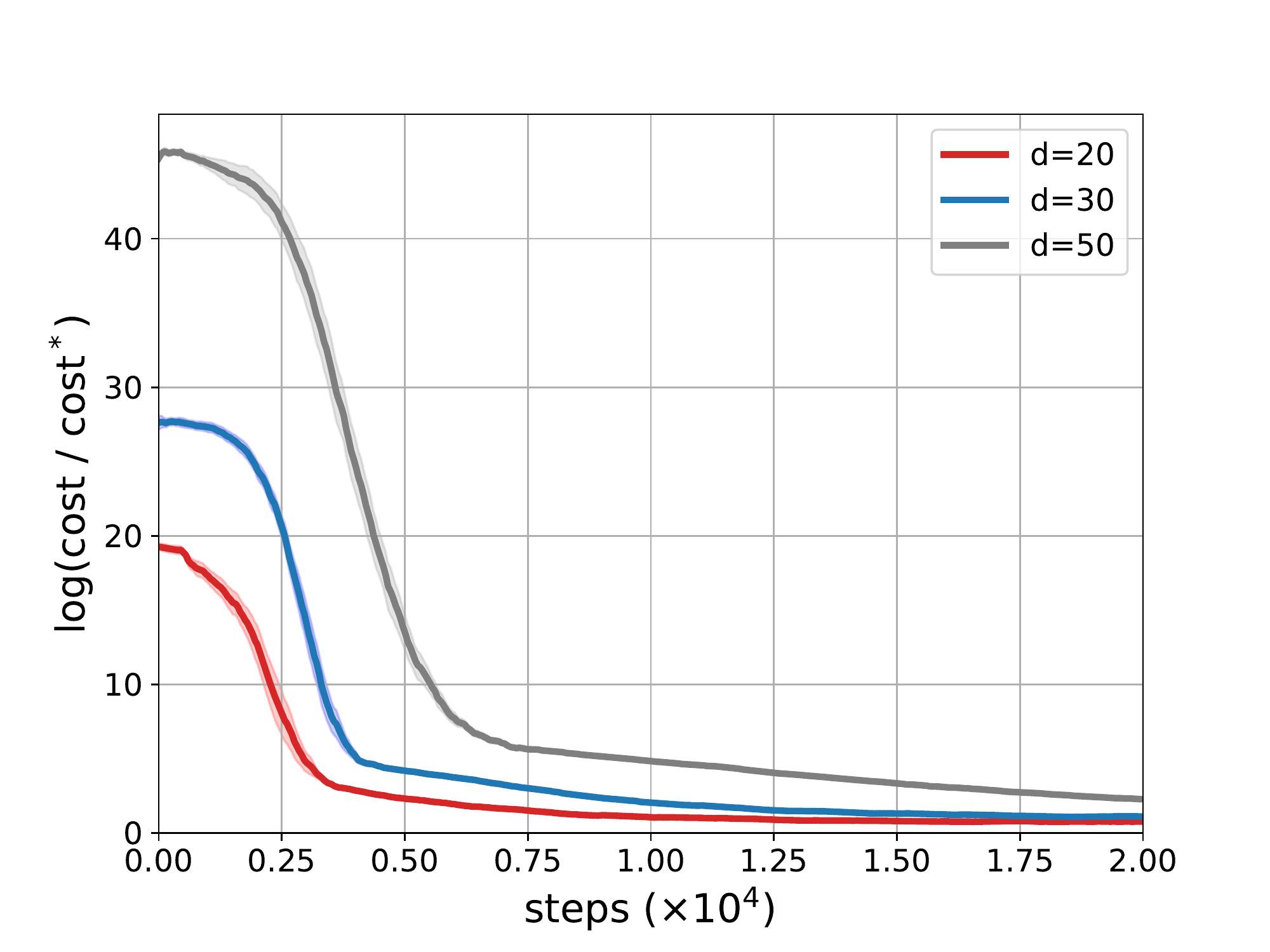}
		\caption{}
		\label{fig:lqr_dim}
	\end{subfigure}
	\caption{Learning curves for the LQ problem: (a) the comparison between HJ DQN and DDPG, and (b) the effect of problem sizes.}
	\label{fig:lqr}
\end{figure*}

We now consider a classical LQ problem with system dynamics
\[\dot{x}(t) = Ax(t)+Ba(t),\quad t>0,\quad x(t),\,a(t)\in\bbr^d,\]
and reward function
\[
r(\bm{x},\bm{a}) = -(\bm{x}^\top Q\bm{x}+\bm{a}^\top R\bm{a}),
\]
where $Q = Q^\top \succeq 0$ and $R = R^\top \succ 0$.
Note that our method solves a slight different problem due to the Lipschitz constraint on controls. 
Thus, the control learned by our method must be suboptimal. 

Each component of the system matrices $A \in \mathbb{R}^{d \times d}$ and $B \in \mathbb{R}^{d \times d}$ was generated uniformly from $[-0.1,0.1]$ and $[-0.5,0.5]$, respectively. The produced matrix $A$ has an eigenvalue with positive real part, and therefore the system is unstable. The discount factor $\gamma$ and Lipschitz constant $L$ are set to be $e^{-\gamma h} = 0.99999$ and $L=10$. We first compare the performance of HJ DQN with DDPG for the case of $d=20$ and report the results in Figure \ref{fig:lqr_ddpg}. 
The learning curves are plotted in the same manner as the ones in Section \ref{sec:mujoco}. 
The $y$-axis of each figure is the log of the ratio between the actual cost and the optimal cost. 
Therefore, the curve approaches the $x$-axis as the performance improves. 
The result implies that the DDPG cannot reduce the cost at all, whereas HJ DQN successfully learns an effective (suboptimal) policy. 
In Figure \ref{fig:lqr_dim}, we present the learning curves for HJ DQN with different system sizes. 
Although  learning speed is affected by the problem size, HJ DQN can successfully solve the LQ problem with high-dimensional systems ($d = 50$). Moreover, it is observed that the standard deviations over trials are relatively small, and the learning curves have almost no variation over trials after approximately $10^4$ steps.

\subsection{Ablation Study}

We make ablations and modifications to HJ DQN to understand the contribution of each component.
Figure~\ref{fig:design} presents the results for the following design evaluation experiments.

%We confirm the importance of exploration in the deterministic control setting 
%
%{\color{blue} (Interpretation of the results)}
%\begin{figure*}[tb]
%	\begin{center}
%		\includegraphics[width=1\textwidth]{ablation}   
%		%\vspace{-0.1in}
%		\caption{Results of the ablation study using the Swimmer-v2. (a) {\bf Double Q-learning:}  Double Q-learning improves the performance.  (b) {\bf Time step:} The performance of HJ DQN is robust against $h$ if $h$ is neither too small nor too large. (c) {\bf Control constraint:} HJ DQN shows better performance with large $L$. 
%		}
%		\label{fig:design}    
%	\end{center}           
%\end{figure*}

\begin{figure*}
	\begin{subfigure}{.32\textwidth}
		\centering
		% include first image
		\includegraphics[height=0.23\textheight]{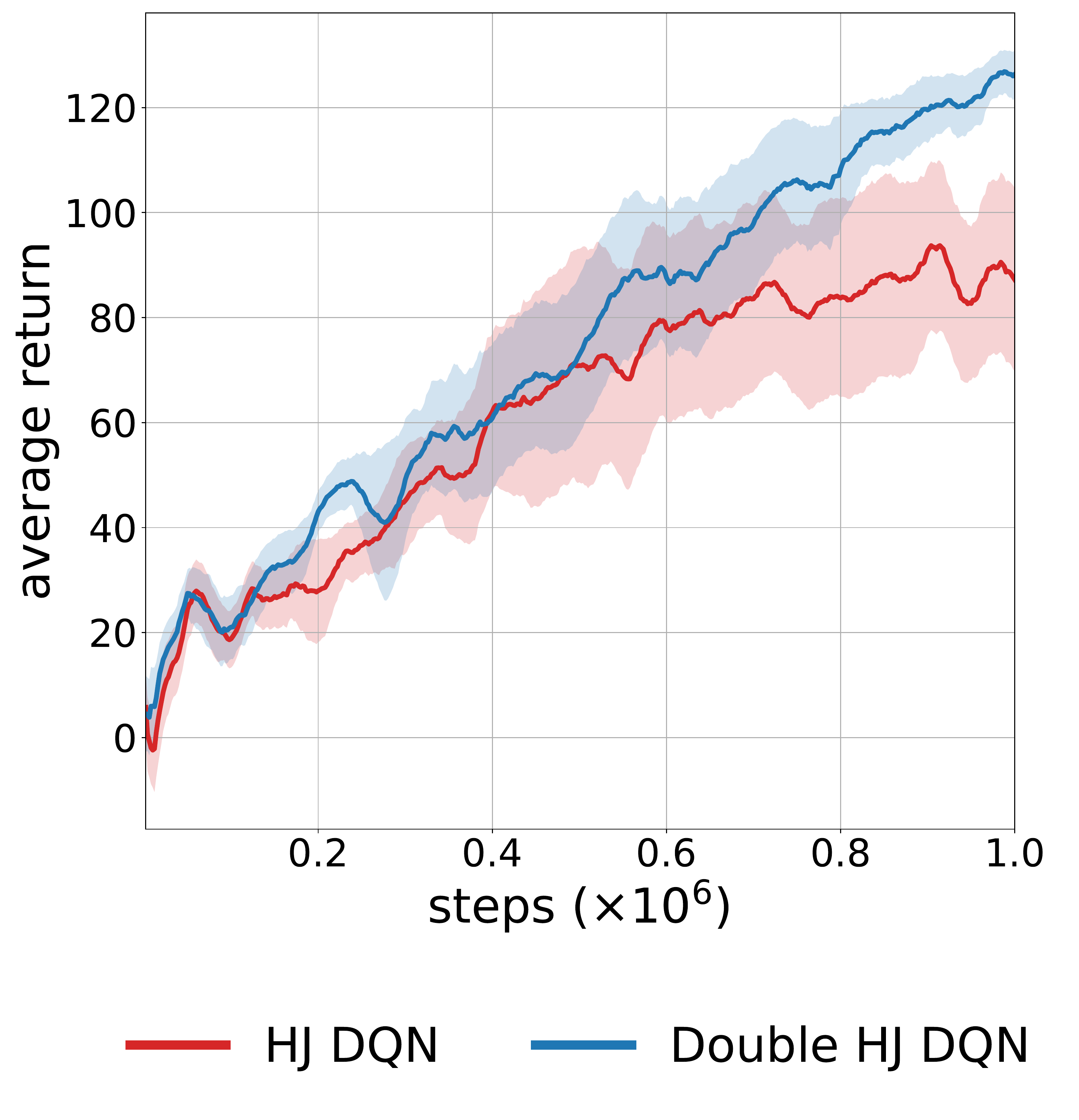}  
		\caption{Double Q-learning}
		\label{fig:2a}
	\end{subfigure}
	\hspace{0.1cm}
	\begin{subfigure}{.32\textwidth}
		\centering
		% include second image
		\includegraphics[height=0.23\textheight]{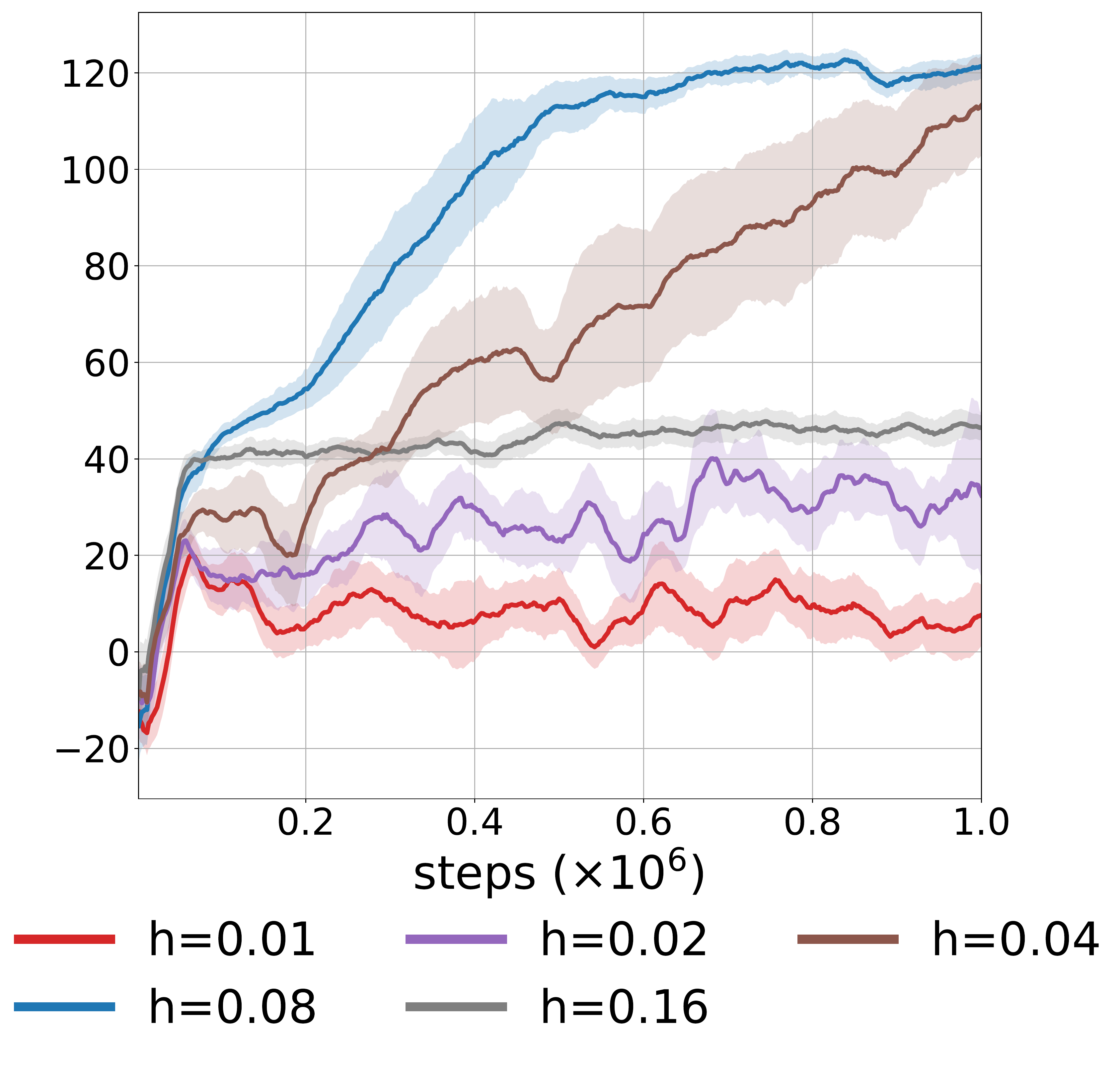}  
		\caption{Sampling interval}
		\label{fig:2b}
	\end{subfigure}
	\begin{subfigure}{.32\textwidth}
		\centering
		% include first image
		\includegraphics[height=0.23\textheight]{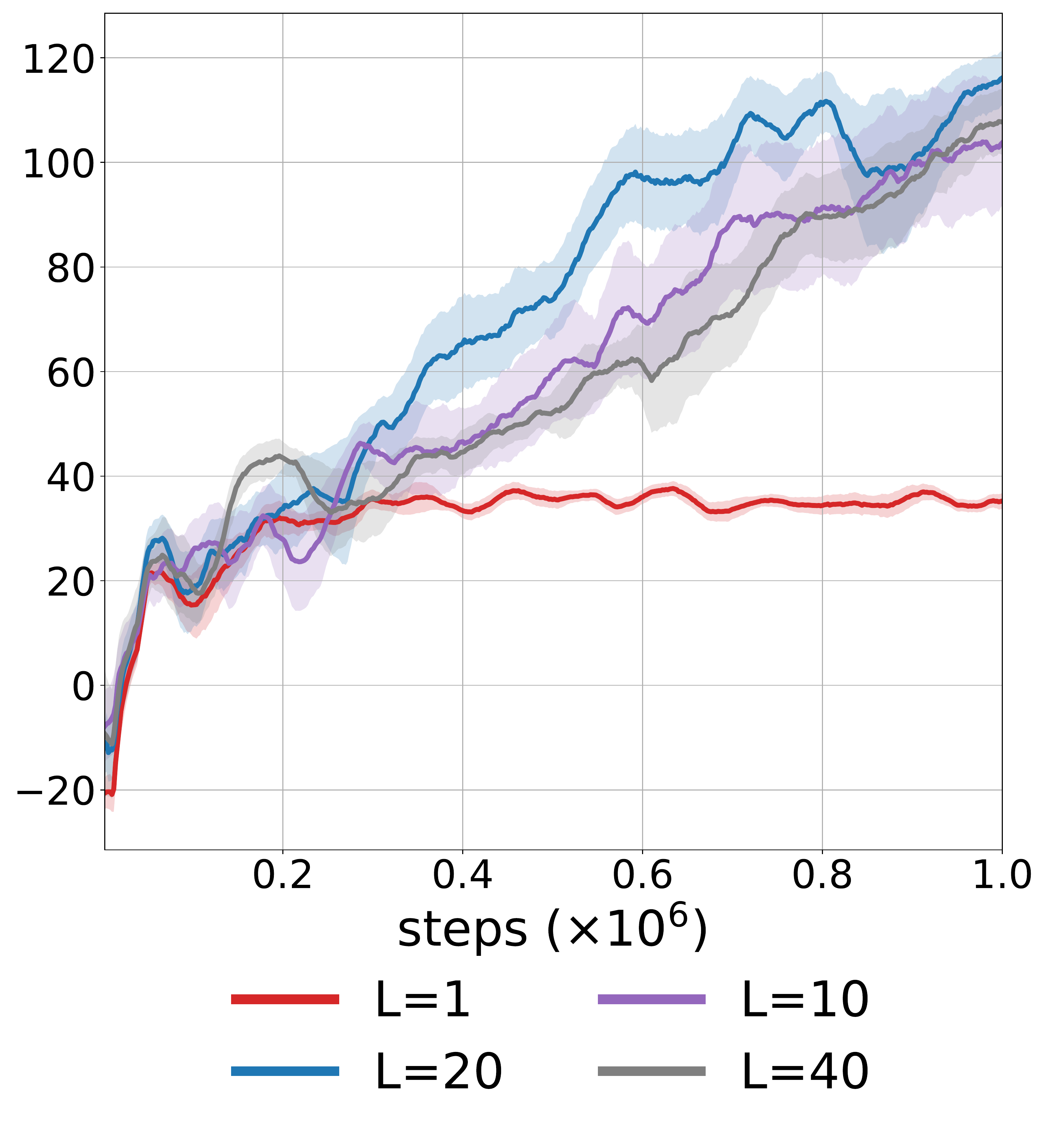}  
		\caption{Control constraint}
		\label{fig:2c}
	\end{subfigure}
	\caption{Results of the ablation study using the Swimmer-v2
		with respect to (a) double Q-learning,    (b) sampling interval $h$, and (c) control constraint.}
	\label{fig:design}
\end{figure*}

{\bf Double Q-learning.}
We first modify our algorithm to test whether double Q-learning contributes to the performance of our algorithm, as in DQN. 
Specifically, when selecting actions to update the target value, we instead use 
$b_j := L \frac{\nabla_{\bm{a}} Q_{\theta^-} (x_j, a_j)}{|\nabla_{\bm{a}} Q_{\theta^-} (x_j, a_j)|}$ to remove the effects of double Q-learning.
Figure~\ref{fig:design} (a) shows that double Q-learning  improves the final performance. This observation is consistent with the effect of double Q-learning in DQN. 
%Thus, the double Q-learning method can reduce the typical overestimation of Q-functions, when it is applied to HJ DQN. 
Moreover, double Q-learning reduces the variance of the average return, indicating its contribution to the stability of our algorithm. 
%the variance of the average return i, indicating that the double Q-learning has more stable result.

%
%\subsubsection{Exploration}
%We now examine the contribution of $\epsilon$-greedy-based exploration. In the $\epsilon$-greedy  algorithm used throughout our implementation of HJ DQN, we start with $\epsilon=1$ in the beginning of the simulation, and then reduce it by $5\times 10^{-4}$ in each time step until it reaches $\epsilon=0.01$. When the $\epsilon$-greedy algorithm is not used, we simply set $\epsilon\equiv 0$ so that a greedy action is always chosen.
%As shown in Figure~\ref{fig:design} (a), $\epsilon$-greedy improves learning speed in HJ DQN.
%This observation is consistent with the effect of $\epsilon$-greedy in the standard Q-learning and DQN.
%Moreover, this result confirms that the randomization of initial state-action pairs is insufficient for exploring in deterministic control problems. 
%

{\bf Sampling interval.}
To understand the effect of sampling interval $h$, we run our algorithm with multiple values of $h$. As we mentioned before, we also adjust the learning rate $\alpha$ according to the sampling interval. As shown in Figure~\ref{fig:design} (b),  the final performance and  learning speed increase as $h$ varies from 0.01 to 0.08 and the final performance decreases as $h$ varies from 0.08 to 0.16. 
When $h$ is too small, each episode has too many sampling steps; thus, the network is trained in a small number of episodes given fixed total steps. 
This limits exploration, thereby decreasing the performance of our algorithm. 
On the other hand, as Proposition~\ref{prop:diff} implies, 
the target error increases with  sampling interval $h$. 
This error is dominant in the case of large $h$.
Therefore, there exists an optimal sampling interval ($h = 0.08$ in this task) that presents the best performance.

{\bf Control constraint.}
Recall that admissible controls satisfy the constraint $|\dot{a}(t)| \leq L$.
The parameter $L$ can be derived from specific control problems or considered as a design choice. We consider the latter case and display the effect of $L$ on the learning curves in Figure~\ref{fig:design} (c).
The final reward is the lowest, compared to others,
in the case of $L = 1$ because the set of admissible controls is too small to allow rapid changes in control signals. HJ DQN, with large enough $L$ ($\geq 10$), presents a similar learning speed and performance. The final performance and learning speed slightly decrease as $L$ varies from $20$ to $40$. 
This is due to too large variation and frequent switching in action values, prohibiting a consistent improvement of Q-functions.

{\bf Smoothing.}
\begin{figure*}[t]
	\centering
	\includegraphics[height=0.3\textheight]{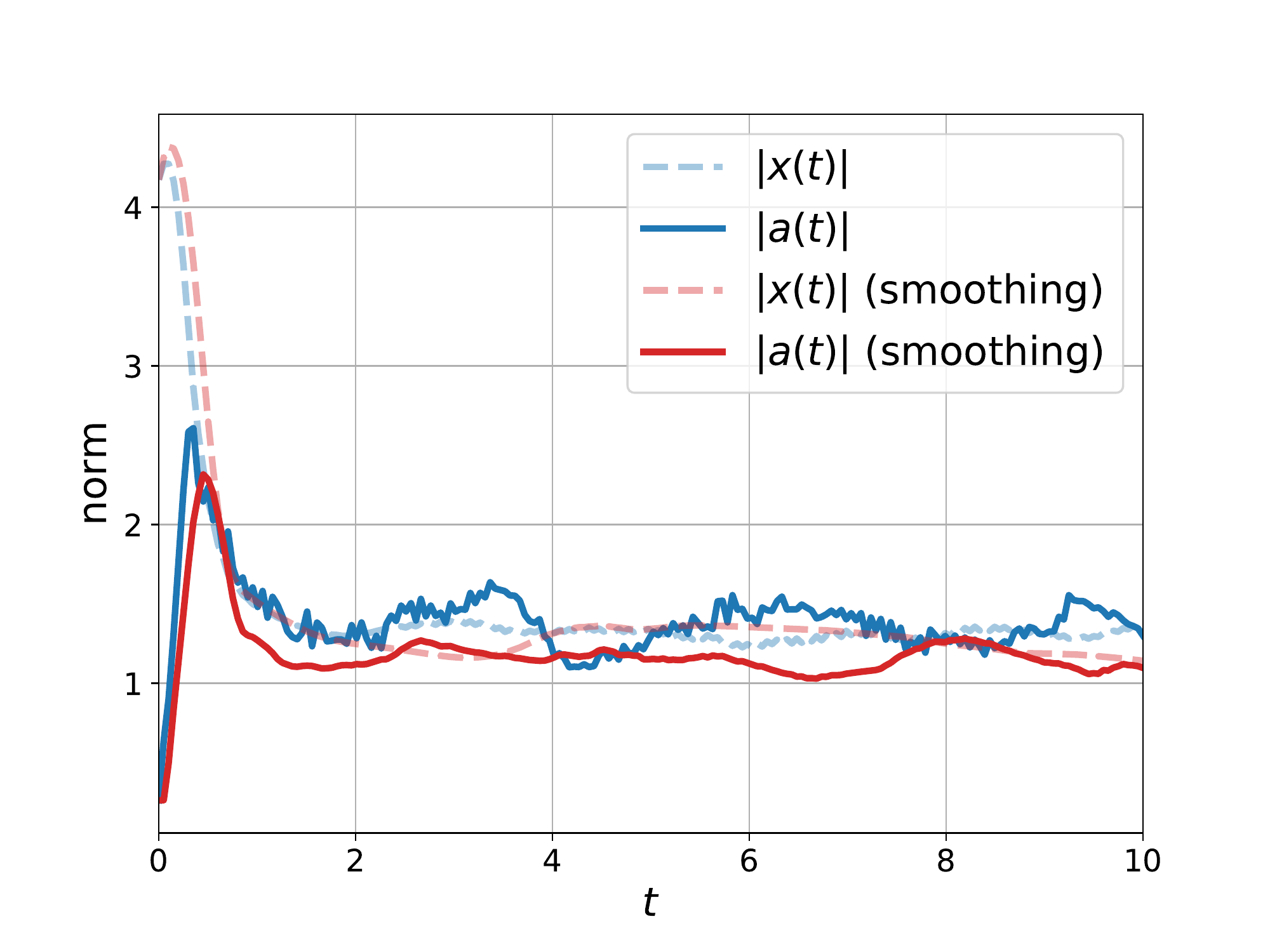}
	\caption{Effect of smoothing on the 20-dimensional LQ problem.}
	\label{fig:smoothing}
\end{figure*}
Finally, we present the effect of the smoothing process introduced in Section~\ref{subsec:smoothing}. 
Figure \ref{fig:smoothing} shows $|x(t)|$ and $|a(t)|$ generated by the control learned 
with and without smoothing on  the 20-dimensional LQ problem.
%We compare the $\ell_2$-norm of state and action of the 20-dimensional LQR problem with and without smoothing in Figure \ref{fig:smoothing} after the learning is finished. 
Here, $\phi(r) = \tanh\left(\frac{r}{L}\right)$ is chosen as the smoothing function. As expected, with no smoothing process, the action trajectory shows wobbling oscillations (solid blue line). However, when the smoothing process is applied, the action trajectory has no such  undesirable oscillations and presents a smooth behavior (solid red line). 
Regarding $|x(t)|$, the smoothing process  has only a small effect. 
Therefore, the smoothing process can eliminate oscillations in action without significantly affecting the state trajectory.
%
%This is due to the error in integrating the ODE for an optimal control by $a_{k+1} = a_k + h\bm{b}$ with $|\bm{b} | \leq L$. It is more likely to accumulate larger errors when a larger $L$ is used. 

%The final average return  increases with $L$. 
%The result is consistent with the fact that the set of admissible controls enlarges as $L$ increases. 
%Furthermore, a larger $L$ allows more rapid changes in control actions.
%This contributes to faster adaptation, thereby enhancing the learning speed as shown in Figure~\ref{fig:design} (c).

%The experimental results shows that both the final average reward and the learning speed are higher for large $L$, compared to the small choice of $L$. This can be explained in terms of the size of admissible control set. Precisely, when $L$ is large, the set of admissible controls is broader than the admissible set when $L$ is small, which allows rapid changes in control signals and fast adaptation. From theoretical point of view, this corresponds to the fact that the optimal Q-function is a monotonically increasing function with respect to $L$, and therefore the Q-function for smaller $L$ itself is suboptimal compared to the Q-function with large $L$. This inherent suboptimality of Q-function with small $L$ leads to the poor result in the performance.

\section{Conclusions}\label{sec:conclusion}

We have presented a new theoretical and algorithmic framework that extends DQN to continuous-time deterministic optimal control for continuous action space. 
A novel class of HJB equations for Q-functions has been derived and 
used to construct a Q-learning method for continuous-time control.
We have shown the theoretical convergence properties of this method.
For practical implementation, we have combined the HJB-based method with DQN, resulting in a simple algorithm that solves continuous-time control problems without an actor network.
Benefiting from our theoretical analysis of the HJB equations,
this model-free off-policy algorithm 
does not require any numerical optimization for selecting greedy actions.
The result of our experiments indicates that actor networks in DDPG may be replaced by our optimal control simply characterized via an ODE, while reducing computational effort.
%{\color{blue} The results of our experiments demonstrate that HJ DQN shows performance comparable to the standard model-free policy-based deep RL algorithm in learning policies for several benchmark tasks, while it has much simpler network structure without an actor.}
Our HJB framework may provide an exciting avenue for future research in continuous-time RL in terms of improving the exploration capability with maximum entropy methods, and exploiting the benefits of models with theoretical guarantees.

% Acknowledgements should go at the end, before appendices and references

\appendix

\section{Viscosity Solution of the Hamilton--Jacobi Equations}\label{app:vis}

The Hamilton--Jacobi equation is a partial differential equation of the form
\begin{equation}\label{general_HJ}
F (\bm{z}, u(\bm{z}), \nabla_{\bm{z}} u (\bm{z})) = 0, \quad \bm{z} \in \mathbb{R}^k,
\end{equation}
where $F: \mathbb{R}^k \times \mathbb{R} \times \mathbb{R}^k \to \mathbb{R}$. A function $u: \mathbb{R}^k \to \mathbb{R}$ that solves the HJ equation is called a (strong) solution. However, such a strong solution exists only in limited cases. To consider a broad class of HJ equations, it is typical to adopt the concept of weak solutions. Among these, the \emph{viscosity solution} is the most relevant to dynamic programming and optimal control problems~\cite{Crandall1983, Bardi1997}. Specifically, under a technical condition, the viscosity solution is unique and corresponds to the value function of a continuous-time optimal control problem. In the following definition, $C(\bbr^k)$ and $C^1(\bbr^k)$ denote the set of continuous functions and the set of continuously differentiable functions respectively.
\begin{defn}\label{def:vis}
	A function $u\in C(\bbr^k)$ is called the \emph{viscosity solution} of \eqref{general_HJ} if it satisfies the following conditions:
	\begin{enumerate}
		\item For any $\phi\in C^1(\bbr^k)$ such that $u-\phi$ attains a local maximum at ${\bm z}_0$, 
		\[
		F({\bm z}_0, u({\bm z}_0),\nabla_{{\bm z}}\phi({\bm z}_0))\le 0; 
		\]
		\item For any $\phi\in C^1(\bbr^k)$ such that $u-\phi$ attains a local minimum at ${\bm z}_0$, 
		\[
		F({\bm z}_0, u({\bm z}_0),\nabla_{{\bm z}}\phi({\bm z}_0))\ge 0.
		\]
	\end{enumerate}
\end{defn}
Note that the viscosity solution does not need to be differentiable. In our case, the HJB equation \eqref{HJB}
\[
\gamma Q({\bm x},{\bm a})-\nabla_{\bm x} Q({\bm x},{\bm a}) \cdot f({\bm x},{\bm a})-L|\nabla_{\bm a}Q({\bm x},{\bm a})|-r({\bm x},{\bm a})=0
\]
can be expressed as \eqref{general_HJ} with
\[
F({\bm z},q,{\bm p}) = \gamma q - {\bm p}_1\cdot f({\bm z}) - L |{\bm p}_2| - r({\bm z}),
\]
where ${\bm z} = ({\bm x},{\bm a})\in \bbr^{n} \times \bbr^m$ and ${\bm p} = ({\bm p}_1,{\bm p}_2)\in \bbr^{n} \times \bbr^m$. We can show that the HJB equation admits a unique viscosity solution, which coincides with the optimal Q-function.

\begin{theorem}
	Suppose that Assumption~\ref{ass:bl} holds.\footnote{Assumption~\ref{ass:bl} can be relaxed by using a modulus associated with each function as in Chapter III.1--3 in \cite{Bardi1997}.} Then, the optimal continuous-time Q-function is the unique viscosity solution to the HJB equation~\eqref{HJB}.
\end{theorem}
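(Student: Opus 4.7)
The plan is to carry out the two standard steps of viscosity-solution theory: (i) show that the optimal continuous-time Q-function $Q$ defined in \eqref{Q-function} is a viscosity solution of the HJB equation \eqref{HJB}, and (ii) establish a comparison principle so that any bounded continuous viscosity subsolution lies below any bounded continuous viscosity supersolution, which forces uniqueness. The Hamiltonian here is
\[
F(\bz,q,\bp) = \gamma q - \bp_1\cdot f(\bz) - L|\bp_2| - r(\bz),
\]
which is continuous, convex in $\bp$, and strictly monotone in $q$ since $\partial_q F = \gamma > 0$; these are the structural features on which everything hinges.

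First I would verify that $Q$ is bounded and Lipschitz continuous on $\bbr^n\times\bbr^m$. Boundedness follows immediately from $\|r\|_{L^\infty}<C$ and the discount factor via $|Q(\bx,\bm{a})|\le C/\gamma$. For Lipschitz continuity, I would use Grönwall's inequality together with the bounds of Assumption~\ref{ass:bl} to control $|x(s;\bx,a)-x(s;\bx',a)|$ and then, for the $\bm{a}$-variable, use that admissible controls satisfy $|\dot a(t)|\le L$ so that two trajectories started from $\bm{a}$ and $\bm{a}'$ under the \emph{same} $b\in \mathcal{B}$-driven dynamics differ only by the constant $|\bm{a}-\bm{a}'|$, and again invoke Grönwall on $x$. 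Next, I would prove the dynamic programming principle rigorously: for every $h>0$ and every admissible control starting at $(\bx,\bm{a})$,
\[
Q(\bx,\bm{a}) = \sup_{a\in\mathcal{A}}\Bigl\{\int_0^h e^{-\gamma s}r(x(s),a(s))\,\d s + e^{-\gamma h}Q(x(h),a(h))\Bigr\}.
\]
The $\ge$ direction is a direct splitting of the integral; the $\le$ direction requires an $\e$-optimal control concatenation argument, where I would splice an $\e$-optimal control after time $h$ with a fixed admissible control on $[0,h]$ and use continuity to take $\e\to 0$.

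With the DPP in hand, I would derive the viscosity sub- and supersolution properties in the standard way. For the \emph{subsolution} inequality, take $\phi\in C^1$ such that $Q-\phi$ has a local maximum at $(\bx_0,\bm{a}_0)$, pick any $\bm{b}$ with $|\bm{b}|\le L$, use the constant control $b(\cdot)\equiv \bm{b}$ in the DPP to obtain a lower bound, subtract $Q(\bx_0,\bm{a}_0)$, divide by $h$, Taylor-expand $\phi$ along the trajectory $(x(s),\bm{a}_0+s\bm{b})$, and let $h\downarrow 0$; taking the supremum over $|\bm{b}|\le L$ yields $F(\bx_0,\bm{a}_0,\phi,\nabla\phi)\le 0$. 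For the \emph{supersolution} inequality, at a local minimum of $Q-\phi$ I would pick an $\e h$-optimal control for the DPP, pass to the limit, and obtain the reversed inequality; the only care needed is that the $\e h$-optimizer's action component $a(h)$ stays in an $Lh$-ball around $\bm{a}_0$, which is automatic from the Lipschitz constraint.

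The main obstacle is the uniqueness/comparison step, because the Hamiltonian is only Lipschitz in $\bz$ and non-smooth in $\bp_2$. I would follow the Crandall--Ishii--Lions doubling-of-variables argument adapted to the HJB on all of $\bbr^{n+m}$. Suppose $u$ is a bounded USC subsolution and $v$ a bounded LSC supersolution; consider the auxiliary functional
\[
\Phi_{\e,\delta}(\bz,\bz') = u(\bz) - v(\bz') - \tfrac{1}{2\e}|\bz-\bz'|^2 - \delta(|\bz|^2+|\bz'|^2),
\]
where the $\delta$-term ensures that the supremum is attained at some $(\bz_\e,\bz_\e')$. Writing the sub-/supersolution inequalities at these points with common ``gradient'' $\bp_\e = (\bz_\e-\bz_\e')/\e$, subtracting them, and exploiting (a) the monotonicity $\gamma(u(\bz_\e)-v(\bz_\e'))$, (b) the Lipschitz bounds on $f$ and $r$ to control the difference in the $\bp_1\cdot f$ and $r$ terms by $C|\bz_\e-\bz_\e'|$, and (c) the fact that $|L|\bp_2|-L|\bp_2||=0$ because the $\bp$-arguments agree, one obtains $\gamma(u(\bz_\e)-v(\bz_\e'))\le C|\bz_\e-\bz_\e'| + o(1)$. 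Standard estimates give $|\bz_\e-\bz_\e'|^2/\e\to 0$ and then $|\bz_\e-\bz_\e'|\to 0$, so letting first $\e\downarrow 0$ and then $\delta\downarrow 0$ yields $u\le v$ on all of $\bbr^{n+m}$. Applying this with $u=v=Q$, and separately with $u$ any viscosity solution and $v=Q$ (and vice versa), concludes that $Q$ is the unique viscosity solution, completing the proof.
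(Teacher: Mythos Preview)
Your overall architecture is correct and matches the paper's: recast the problem as a standard control problem on the extended state $\bz=(\bx,\bm a)$ with control $b=\dot a$, establish the DPP, verify the two viscosity inequalities, and prove comparison. Your comparison sketch via doubling of variables is in fact more explicit than the paper's, which simply invokes Theorem~2.12, Chapter~III of \cite{Bardi1997}. (One small caveat: with the $\delta$-penalization the test gradients at $\bz_\e$ and $\bz_\e'$ differ by $O(\delta)$, so the $L|\bp_2|$ terms do not cancel exactly; the residual is $O(\delta)$ and vanishes in the limit, but this should be said.)

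There is, however, a genuine gap in the existence step: you have interchanged the roles of the two test-control arguments in the sub- and supersolution verifications, and with the swap the inequalities no longer chain. For the subsolution inequality (local maximum of $Q-\phi$, show $F\le 0$), a \emph{fixed} constant control $b(\cdot)\equiv\bm b$ in the DPP gives only the lower bound
\[
Q(\bz_0)\ \ge\ \int_0^h e^{-\gamma s}r\,\d s+e^{-\gamma h}Q(z(h)),
\]
while the local-maximum property gives $Q(\bz_0)-Q(z(h))\ge\phi(\bz_0)-\phi(z(h))$. Both inequalities point the same way, so they cannot be combined to bound $\phi(\bz_0)-\phi(z(h))$ from above and produce $\gamma Q-\nabla_{\bm x}\phi\cdot f-\nabla_{\bm a}\phi\cdot\bm b-r\le 0$. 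The same obstruction arises in your supersolution step: at a local minimum the $\e h$-optimal DPP bound and the local-minimum inequality are both of type $\le$, and again fail to chain.

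The correct assignment---the one the paper uses---is the reverse. At a local \emph{maximum} one selects an $\e$-optimal control so that
\[
Q(\bz_0)\ \le\ \int_0^h e^{-\gamma s}r\,\d s+e^{-\gamma h}Q(z(h))+h\e;
\]
substituting $Q(z(h))\le Q(\bz_0)+\phi(z(h))-\phi(\bz_0)$, dividing by $h$ and passing to the limit (along a subsequence $\bm b_{h_k}\to\bm b$, using compactness of $\{|\bm b|\le L\}$) yields $\gamma Q-\nabla_{\bm x}\phi\cdot f-\nabla_{\bm a}\phi\cdot\bm b-r\le 0$, hence $F\le 0$ since $\nabla_{\bm a}\phi\cdot\bm b\le L|\nabla_{\bm a}\phi|$. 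At a local \emph{minimum} one fixes an arbitrary $|\bm b|\le L$, uses the DPP lower bound, and obtains $\gamma Q-\nabla_{\bm x}\phi\cdot f-\nabla_{\bm a}\phi\cdot\bm b-r\ge 0$ for every such $\bm b$; taking the supremum over $\bm b$ then gives $F\ge 0$. Heuristically, $F\le 0$ says $\gamma Q\le\sup_{|\bm b|\le L}\{\cdots\}$, which is proved by exhibiting a near-optimizer, whereas $F\ge 0$ says $\gamma Q\ge\sup_{|\bm b|\le L}\{\cdots\}$, which is proved by checking every $\bm b$. Once you swap the two arguments back, your plan goes through.
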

\begin{proof}
	First, recall that our control trajectory satisfies the constraint $|\dot{a}| \leq L$. Therefore, our dynamical system can be written in the following extended form:
	\[
	\dot{x}(t) = f(x(t),a(t)),\quad \dot{a}(t) = b(t),\quad t>0, \quad |b(t)|\le L,
	\]
	by viewing $x(t)$ and $a(t)$ as state variables. More precisely,	the dynamics of the extended state variable  $z(t)=(x(t),a(t))$ can be written as
	\begin{equation}\label{sys_ext}
	\dot{z}(t) = G(z(t),b(t)), \quad t>0, \quad |b(t)|\le L,
	\end{equation}
	where $G({\bm z},{\bm b}) = (f({\bm z}),{\bm b})$. Applying the dynamic programming principle to the Q-function, we have
	\begin{equation}\nonumber
	Q({\bm z})= \sup_{|b(t)|\le L} \bigg \{ \int_t^{t+h} e^{-\gamma (s-t)} r(z(s))\, \d s + \:  e^{-\gamma h} Q(z(t+h))
	\mid z(t) = {\bm z}\bigg \}.
	\end{equation}
	The remaining proof is almost the same as the proof of Proposition 2.8, Chapter 3 in \cite{Bardi1997}. However, for the self-completeness of the paper, we provide a detailed proof. In the following, we show that the Q-function satisfies the two conditions in Definition~\ref{def:vis}.
	
	First, let $\phi\in C^1(\bbr^{n+m})$ such that $Q-\phi$ attains a local maximum at ${\bm z}$. Then, there exists $\delta>0$ such that $Q({\bm z})-Q({\bm z}')\ge \phi({\bm z})-\phi({\bm z}')$ for $|{\bm z}'-{\bm z}|<\delta$. Since $f$ and $r$ are bounded Lipschitz continuous, there exists $h_0>0$, which is independent of $b(s)$, such that $|z(s)-{\bm z}|\le \delta$, $|r(z(s))-r({\bm z})|\le C(s-t)$ and $|f(z(s))-f({\bm z})|\le C(s-t)$ for $t\le s\le t+h_0$, where $z(s)$ is a solution to \eqref{sys_ext} for $s\ge t$ with $z(t)={\bm z}$. Now, the dynamic programming principle for the Q-function implies that, for any $0<h<h_0$ and $\e>0$, there exists $b(s)$ with $|b(s)|\le L$ such that
	\[
	Q({\bm z}) \le \int_t^{t+h} e^{-\gamma (s-t)}r(z(s))\,\d s+e^{-\gamma h}Q(z(t+h)) +h\e,
	\]
	where $z(s)$ is now a solution to \eqref{sys_ext} with $z(t)={\bm z}$ under the particular choice of $b$. On the other hand, it follows from our choice of $h$ that
	\[
	\int_t^{t+h}e^{-\gamma(s-t)}r(z(s))\,\d s = \int_t^{t+h}e^{-\gamma(s-t)}r({\bm z})\,\d s+o(h),
	\]
	which implies that
	\[
	Q({\bm z})\le \int_t^{t+h}e^{-\gamma(s-t)}r({\bm z})\,\d s + e^{-\gamma h}Q(z(t+h))+h\e+o(h).
	\]
	Therefore, we have
	\begin{equation}\nonumber
	\begin{split}
	\phi({\bm z})-\phi(z(t+h))&\le Q({\bm z})-Q(z(t+h))\\
	&\le \int_t^{t+h} e^{-\gamma (s-t)}r({\bm z})\,\d s+(e^{-\gamma h}-1)Q(z(t+h)) +h\e+o(h).
	\end{split}
	\end{equation}
	Since the left-hand side of the inequality above is equal to $-\int_t^{t+h} \frac{\d}{\d s} \phi(z(s))\,\d s=-\int_t^{t+h} \nabla_{\bm z}\phi(z(s))\cdot G(z(s),b(s))\,\d s$, we obtain that
	\begin{align*}
	0&\le \int_t^{t+h} \nabla_{\bm z} \phi(z(s))\cdot G(z(s),b(s))\, \d s \\
	&\quad +\int_t^{t+h}e^{-\gamma (s-t)}r({\bm z})\,\d s+\left(e^{-\gamma h}-1\right)Q(z(t+h))+h\e+o(h)\\
	&\le \int_t^{t+h} (\nabla_{\bm x}\phi(z(s))\cdot f({\bm z})+L|\nabla_{\bm a}\phi(z(s))|)\,\d s\\
	&\quad +\int_t^{t+h}e^{-\gamma (s-t)}r({\bm z})\,\d s  +\left(e^{-\gamma h}-1\right)Q(z(t+h))+h\e+o(h).
	\end{align*}
	By dividing both sides by $h$ and letting $h\to0$, we conclude that
	\[
	\nabla_{\bm x}\phi({\bm z})\cdot f({\bm z}) +L|\nabla_{\bm a}\phi({\bm z})|+r({\bm z})-\gamma Q({\bm z})+\e\ge0.
	\]
	Since $\e$ was arbitrarily chosen, we confirm that the Q-function satisfies the first condition in Definition~\ref{def:vis}, i.e.,
	\[
	\gamma Q({\bm z})-\nabla_{\bm x}\phi({\bm z})\cdot f({\bm z}) -L|\nabla_{\bm a}\phi({\bm z})|-r({\bm z})\le 0.
	\]
	
	We now consider the second condition. Let $\phi \in C^1(\bbr^{n+m})$ such that $Q-\phi$ attains a local minimum at ${\bm z}$, i.e., there exists $\delta$ such that $Q({\bm z})-Q({\bm z}')\le \phi({\bm z})-\phi({\bm z}')$ for $|{\bm z}'-{\bm z}|<\delta$. Fix an arbitrary ${\bm b}\in\bbr^m$ such that $|{\bm b}|\le L$ and let $b(s)\equiv {\bm b}$ be a constant function. Let $z(s)$ be a solution to \eqref{sys_ext} for $s\ge t$ with $z(t)={\bm z}$ under the particular choice of  $b(s)\equiv {\bm b}$. Then, for sufficiently small $h$, $|z(t+h)-{\bm z}|\le \delta$, and therefore we have
	\begin{equation}\label{Qphi}
	\begin{split}
	Q({\bm z})-Q(z(t+h)) &\le \phi({\bm z})-\phi(z(t+h))= -\int_t^{t+h} \frac{\d}{\d s} \phi(z(s))\,\d s \\
	&=-\int_t^{t+h}\nabla_{\bm z} \phi (z(s))\cdot G(z(s),{\bm b})\,\d s.
	\end{split}
	\end{equation}	
	On the other hand, the dynamic programming principle yields
	\begin{align}
	\begin{aligned}\label{dynprin}
	Q({\bm z})-Q(z(t+h)) \ge\int_t^{t+h}e^{-\gamma (s-t)}r(z(s))\,\d s+(e^{-\gamma h}-1)Q(z(t+h)). 
	\end{aligned}
	\end{align}
	By \eqref{Qphi} and \eqref{dynprin}, we have
	\begin{equation}\nonumber
	(e^{-\gamma h}-1)Q(z(t+h))+\int_t^{t+h}e^{-\gamma (s-t)}r(z(s))\,\d s\le -\int_t^{t+h}\nabla_{\bm z} \phi (z(s))\cdot G(z(s),{\bm b})\,\d s.
	\end{equation}
	Dividing both sides by $h$ and letting $h\to0$, we obtain that
	\[
	-\gamma Q({\bm z}) + r({\bm z})\le -\nabla_{\bm z}\phi({\bm z}) \cdot (f({\bm z}),{\bm b}),
	\]
	or equivalently
	\[
	\gamma Q({\bm z})-\nabla_{\bm x}\phi({\bm z})\cdot f({\bm z}) -\nabla_{\bm a}\phi({\bm z})\cdot {\bm b}-r({\bm z})\ge 0.
	\]
	Since ${\bm b}$ was arbitrarily chosen from $\{{\bm b}\in\bbr^m: |{\bm b}|\le L\}$, we have
	\[
	\gamma Q({\bm z})-\nabla_{\bm x}\phi({\bm z})\cdot f({\bm z}) - L|\nabla_{\bm a}\phi({\bm z})|-r({\bm z})\ge 0,
	\]
	which confirms that the Q-function satisfies the second condition in Definition~\ref{def:vis}. Therefore, we conclude that the Q-function is a viscosity solution of the HJB equation~\eqref{HJB}. 
	
	Lastly, the uniqueness of the viscosity solution can be proved by using Theorem 2.12, Chapter 3 in \cite{Bardi1997}.
\end{proof}

\section{Proofs}\label{app:pf}

\subsection{Proposition~\ref{prop:lip}}

\begin{proof}
	Fix $(\bx, \bm a, t) \in  \bbr^n \times \bbr^m \times [0,T]$.
	Let $\e$ be an arbitrary positive constant. Then, there exists $a\in \mathcal{A}$ such that $\int_t^Tr(x(s),a(s))\,\d s+q(x(T))<v(\bx,t)+\e$,
	where $x(s)$ satisfies \eqref{sys} with $x(t)=\bx$ in the Carath\'eodory sense: $x(s)=\bx+\int_t^s f(x(\tau),a(\tau))\,\d \tau$.
	We now construct a new control $\tilde{a}\in\mathcal{A}$ as
	$\tilde{a}(s):= \bm a$ if $s=t$; $\tilde{a}(s):= a(s)$ if $s > t$.
	%	\[\tilde{u}(s):=\begin{cases}
	%	\bu \quad &\mbox{if}\quad s=t,\\
	%	u(s) \quad &\mbox{if}\quad s>t.
	%	\end{cases}\]
	Such a modification of controls at a single point does not affect the trajectory or the total cost. Therefore, we have
	\begin{equation}
	\begin{split}
	v(\bx,t)\le Q(\bx,\bm a,t) &\leq \int_t^Tr({x}(s), \tilde{a}(s))\,\d s+q({x}(T))<v(\bx,t)+\e.
	\end{split}
	\end{equation}
	Since $\e$ was arbitrarily chosen, we conclude that $v(\bx,t)=Q(\bx,\bm a,t)$ for any $\bu\in \bbr^m$. \end{proof}

\subsection{Theorem~\ref{optimal}} \label{app:optimal}

\begin{proof}
	The classical theorem for the necessary and sufficient condition of optimality (e.g. Theorem 2.54, Chapter III in \cite{Bardi1997}) implies that $a^\star$ is optimal among those in $\mathcal{A}$ such that $a(t)={\bm a}$ if and only if
	\begin{align*}
	&p_1\cdot f(x^\star(s),a^\star(s))+p_2\cdot \dot{a}^\star(s) + r(x^\star(s),a^\star(s))\\
	& = \max_{|\bm{b}|\le L}\{p_1\cdot f(x^\star(s),a^\star(s))+p_2\cdot {\bm b} +r(x^\star(s),a^\star(s))\}
	\end{align*}
	for all $p=(p_1,p_2)\in D^{\pm}Q(x^\star(s),a^\star(s))$. This optimality condition can be expressed as the desired ODE \eqref{opt_con}. Thus, its solution $a^\star$ with $a^\star (t) = \bm{a}$ satisfies \eqref{optimality}. 
	
	Suppose now that ${\bm a}\in \argmax_{\bm{a}' \in \mathbb{R}^m} Q({\bm x},{\bm a}')$. It follows from the definition of $Q$ that 
	\begin{align*}
	&\max_{a\in \mathcal{A}}\left\{\int_t^\infty e^{-\gamma(s-t)} r(x(s),a(s))\,\d s\mid x(t)={\bm x}\right\}\\
	&=\max_{{\bm a}'\in\bbr^m}\max_{a\in \mathcal{A}}\bigg\{\int_t^\infty e^{-\gamma(s-t)} r(x(s),a(s))\,\d s \mid x(t)={\bm x},a(t)={\bm a}'\bigg \}\\
	&=\max_{{\bm a}'\in\bbr^m}Q({\bm x},{\bm a}')=Q({\bm x},{\bm a})=\max_{a\in \mathcal{A}}\left\{\int_t^\infty e^{-\gamma(s-t)} r(x(s),a(s))\d s \mid x(t)={\bm x},a(t)={\bm a}\right\}\\
	&=\int_t^\infty e^{-\gamma(s-t)} r(x^\star(s),a^\star(s))\,\d s.
	\end{align*}
	Therefore, $a^\star$ is an optimal control.
\end{proof}

\subsection{Proposition~\ref{prop:semi}} \label{app:semi}

\begin{proof}
	We first show that $Q^{h,\star}$ satisfies \eqref{semiHJB}. Fix an arbitrary sequence $b :=\{b_n\}_{n=0}^\infty\in\mathcal{B}$. It follows from the definition of $Q^{h,b}$ that
	\[
	Q^{h,b}(\bx,{\bm a})=hr(\bx,{\bm a})+(1-\gamma h) Q^{h,\tilde{b}}(\xi(\bx,{\bm a}; h), {\bm a}+hb_0).
	\]
	where $\tilde{b} := \{b_1,b_2,\ldots\}\in\mathcal{B}$. Since $Q^{h,\tilde{b}}( \xi(\bx,{\bm a}; h), {\bm a}+hb_0)\le Q^{h,\star}(\xi(\bx,{\bm a}; h), {\bm a}+hb_0)$, we have
	\begin{align*}
	Q^{h,b}(\bx,{\bm a})&\le hr(\bx,{\bm a})+(1-\gamma h)Q^{h,\star}(\xi(\bx,{\bm a}; h),{\bm a}+hb_0)\\
	&\le hr(\bx,{\bm a})+(1-\gamma h)\sup_{|{\bm b}|\le L} \left\{Q^{h,\star}(\xi(\bx,{\bm a}; h),{\bm a}+h{\bm b})\right\}.
	\end{align*}
	Taking supremum of both sides with respect to $b \in \mathcal{B}$ yields
	\begin{align}
	\begin{aligned}\label{ineq1}
	Q^{h,\star}(\bx,{\bm a}) \le hr(\bx,{\bm a})+(1-\gamma h)\sup_{|{\bm b}|\le L} \left\{Q^{h,\star}(\xi(\bx,{\bm a}; h),{\bm a}+h{\bm b})\right\}.
	\end{aligned}
	\end{align}
	
	To obtain the other direction of inequality, we fix an arbitrary ${\bm b}\in \bbr^m$ such that $|{\bm b}|\le L$.  Let $\bx' :=\xi(\bx,{\bm a}; h)$ and ${\bm a}' :={\bm a}+h{\bm b}$. Fix an arbitrary $\e>0$ and choose a sequence $c :=\{c_n\}_{n=0}^\infty\in\mathcal{B}$ such that
	\[
	Q^{h,\star}(\bx',{\bm a}')\le Q^{h,c}(\bx',{\bm a}')+\e.
	\]
	We now construct a new sequence $\tilde{c}:=\{{\bm b},c_0,c_1,\ldots\}\in\mathcal{B}$. Then,
	\begin{align*}
	Q^{h,\tilde{c}}(\bx,{\bm a})=hr(\bx,{\bm a})+(1-\gamma h) Q^{h,c}(\bx',{\bm a}')\ge hr(\bx,{\bm a})+(1-\gamma h)(Q^{h,\star}(\bx',{\bm a}')-\e), 
	\end{align*}
	which implies that
	\begin{align*}
	Q^{h,\star}(\bx,{\bm a})\ge Q^{h,\tilde{c}}(\bx,{\bm a})\ge hr(\bx,{\bm a})+(1-\gamma h)(Q^{h,\star}(\bx',{\bm a}')-\e).
	\end{align*}
	Taking the supremum of both sides with respect to ${\bm b} \in \mathbb{R}^m$ such that $| \bm{b} |\leq L$ yields
	\begin{align*}
	Q^{h,\star}(\bx,{\bm a})\ge hr(\bx,{\bm a})  -(1-\gamma h)\e+(1-\gamma h)\sup_{|{\bm b}|\le L}\left\{Q^{h,\star}(\xi(\bx,{\bm a}; h),{\bm a}+h{\bm b})\right\}.
	\end{align*}
	Since $\e$ was arbitrarily chosen, we finally obtain that
	\begin{align}
	\begin{aligned}\label{ineq2}
	Q^{h,\star}(\bx,{\bm a})\ge hr(\bx,{\bm a})+(1-\gamma h)\sup_{|{\bm b}|\le L}\left\{Q^{h,\star}(\xi(\bx,{\bm a}; h), {\bm a}+h{\bm b})\right\}.
	\end{aligned}
	\end{align}
	Combining two estimates \eqref{ineq1} and \eqref{ineq2}, we conclude that $Q^{h,\star}$ satisfies the semi-discrete HJB equation~\eqref{semiHJB}. Since the proof for the uniqueness of the solution is almost the same as the proof of Theorem 4.2, Chapter VI in \cite{Bardi1997}, we have omitted the detailed proof.
\end{proof}

\subsection{Proposition~\ref{conv1}} \label{app:conv1}

\begin{proof}
	For the completeness of the paper, we provide a sketch of the proof although it is similar to the proof of Theorem 1.1, Chapter VI in \cite{Bardi1997}. We begin by  defining two functions $\underline{Q}^\star$ and $\overline{Q}^\star$ as 
	\[\underline{Q}^\star(\bx,{\bm a}):= \liminf_{(\bx',{\bm a}',h)\to (\bx,{\bm a},0+)}Q^{h,\star}(\bx',{\bm a}'),\]
	\[\overline{Q}^\star(\bx,{\bm a}):= \limsup_{(\bx',{\bm a}',h)\to (\bx,{\bm a},0+)}Q^{h,\star}(\bx',{\bm a}').\]
	According to the proof of Theorem 1.1, Chapter VI in  \cite{Bardi1997}, it suffices to show that $\overline{Q}^\star$ satisfies the first condition of Definition~\ref{def:vis} and $\underline{Q}^\star$ satisfies the second condition of Definition~\ref{def:vis}. To this end, for any $\phi\in C^1$, let $(\bx_0,{\bm a}_0)$ be a strict local maximum point of $\overline{Q}^\star-\phi$ and choose a small enough neighborhood $\mathcal{N}$ of $(\bx_0,\bm{a}_0)$ such that $(\overline{Q}^\star-\phi)(\bx_0,{\bm a}_0)=\max_{\mathcal{N}}(\overline{Q}^\star-\phi)$. Then, there exists a sequence $\{(\bx_n,{\bm a}_n, h_n)   \}$ with $(\bx_n,{\bm a}_n)\to(\bx_0, {\bm a}_0)$ and $h_n\to 0+$ such that
	\[(Q^{h_n,\star}-\phi)(\bx_n,{\bm a}_n)=\max_{\mathcal{N}}(Q^{h_n,\star}-\phi)\]
	and
	\[Q^{h_n,\star}(\bx_n,{\bm a}_n)\to \overline{Q}^\star(\bx_0,{\bm a}_0).\]
	Recall that $Q^{h,\star}$ satisfies \eqref{semiHJB}. Thus, there exists ${\bm b}_n$ with $|{\bm b}_n|\le L$ such that
	\begin{align*}
	&Q^{h_n,\star}(\bx_n,{\bm a}_n)-h_nr(\bx_n,{\bm a}_n)-(1-\gamma h_n)Q^{h_n,\star}(\xi(\bx_n,{\bm a}_n;h_n),{\bm a}_n+h{\bm b}_n)=0.
	\end{align*}
	Since $Q^{h_n,\star}-\phi$ attains a local maximum at $(\bx_n,{\bm a}_n)$, we have
	\begin{align}
	\begin{aligned}\label{eq-prop3}
	(1-\gamma h_n)(\phi(\bx_n,{\bm a}_n)-\phi(\xi(\bx_n,{\bm a}_n;h_n),{\bm a}_n+h{\bm b}_n)+\gamma h_n Q^{h_n,\star}(\bx_n,{\bm a}_n)-h_nr(\bx_n,{\bm a}_n)\le0
	\end{aligned}
	\end{align}
	for small enough $h_n > 0$. Since $|{\bm b}_n|\le L$ for all $n\ge0$, there exists a subsequence $n_k$ and ${\bm b}$ with $|{\bm b}|\le L$ such that ${\bm b}_{n_k}\to {\bm b}$ as $k\to \infty$. Then, we substitute $n$ in \eqref{eq-prop3} by $n_k$, divide both sides by $h_{n_k}$ and let $k\to\infty$ to obtain that at $(\bx_0,{\bm a}_0)$
	\[-\nabla_{\bm x}\phi\cdot f-\nabla_{\bm a}\phi\cdot {\bm b}+\gamma \overline{Q}^{\star}-r\le0, \]
	where  we use the fact that
	\[\lim_{h\to 0} \frac{\xi(\bx,{\bm a};h)-\bx}{h}=f(\bx,{\bm a}).\]
	This implies that the first condition of Definition \ref{def:vis} is satisfied. 
	Similarly, it can be shown that $\underline{Q}^\star$ satisfies the second condition of Definition~\ref{def:vis}.
	%		The condition for $\underline{Q}^\star$ can be verified in a similar manner.
\end{proof}

\subsection{Theorem~\ref{thm:conv2}} \label{app:conv2}

We begin by defining an optimal Bellman operator in the semi-discrete setting, $\mathcal{T}^h: L^\infty \to L^\infty$,  by
\begin{equation}\label{bellman}
(\mathcal{T}^h Q)(\bm{x}, \bm{a}):= h r(\bm{x}, \bm{a})+ (1-\gamma h) \sup_{|\bm{b} | \leq L} Q  ( \xi(\bm{x}, \bm{a}; h),  \bm{a} + h \bm{b}),
\end{equation}
where $\xi(\bm{x}, \bm{a}; h)$ denotes the solution of the ODE \eqref{sys} at time $t = h$ with initial state $x(0) = \bm{x}$ and constant action $a(t) \equiv \bm{a}$ for $t \in [0, h)$.
Our first observation is that the Bellman operator is a monotone $(1-\gamma h)$-contraction mapping for a sufficiently small~$h$.

\begin{lemma}\label{lem:cont}
	Suppose that $0 < h < \frac{1}{\gamma}$.
	Then, the Bellman operator $\mathcal{T}^h$ is a monotone contraction mapping.
	More precisely, it satisfies the following properties: 
	\begin{enumerate}
		\item[(i)] $\mathcal{T}^h Q \leq \mathcal{T}^h Q'$ for all $Q, Q' \in L^\infty$ such that $Q  \leq Q'$;
		
		\item[(ii)] $\| \mathcal{T}^h Q - \mathcal{T}^h Q' \|_{L^\infty} \leq (1-\gamma h) \| Q - Q' \|_{L^\infty}$ for all $Q, Q' \in L^\infty$. 
	\end{enumerate}
\end{lemma}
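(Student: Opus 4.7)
The plan is to verify the two claimed properties directly from the definition~\eqref{bellman}, using only that $h < 1/\gamma$ (so that the factor $1-\gamma h$ is nonnegative) and an elementary inequality about suprema.

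For the monotonicity property (i), I would fix $Q, Q' \in L^\infty$ with $Q \le Q'$ pointwise and an arbitrary point $(\bm{x},\bm{a})$. For every admissible $\bm{b}$ with $|\bm{b}|\le L$, the inequality $Q(\xi(\bm{x},\bm{a};h),\bm{a}+h\bm{b}) \le Q'(\xi(\bm{x},\bm{a};h),\bm{a}+h\bm{b})$ holds, so taking the supremum over $\{\bm{b} : |\bm{b}|\le L\}$ on each side preserves the inequality. Since $1-\gamma h \ge 0$ by assumption, multiplying by this factor and adding the common term $h\, r(\bm{x},\bm{a})$ yields $(\mathcal{T}^h Q)(\bm{x},\bm{a}) \le (\mathcal{T}^h Q')(\bm{x},\bm{a})$, which gives (i).

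For the contraction property (ii), the key algebraic step is the standard observation that for any bounded real-valued functions $F, G$ on a common set,
\[
\Big| \sup_{\bm{b}} F(\bm{b}) - \sup_{\bm{b}} G(\bm{b}) \Big| \le \sup_{\bm{b}} |F(\bm{b})-G(\bm{b})|,
\]
which follows by writing $F = (F-G) + G \le \sup_{\bm b}|F-G| + G$ and taking sups, then interchanging the roles of $F$ and $G$. Applying this with $F(\bm{b}) = Q(\xi(\bm{x},\bm{a};h), \bm{a}+h\bm{b})$ and $G(\bm{b}) = Q'(\xi(\bm{x},\bm{a};h), \bm{a}+h\bm{b})$, the reward terms in $(\mathcal{T}^h Q - \mathcal{T}^h Q')(\bm{x},\bm{a})$ cancel, leaving
\[
\big|(\mathcal{T}^h Q)(\bm{x},\bm{a}) - (\mathcal{T}^h Q')(\bm{x},\bm{a})\big| \le (1-\gamma h) \sup_{|\bm{b}|\le L} \big|Q(\xi(\bm{x},\bm{a};h), \bm{a}+h\bm{b}) - Q'(\xi(\bm{x},\bm{a};h), \bm{a}+h\bm{b})\big|.
\]
The right-hand side is bounded above by $(1-\gamma h)\|Q-Q'\|_{L^\infty}$, and taking the supremum over $(\bm{x},\bm{a})$ on the left yields (ii).

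There is no real obstacle here; the argument is routine once the sign condition $1-\gamma h \ge 0$ is invoked and the sup-difference inequality is in hand. The only minor subtlety worth flagging in the write-up is to justify that $\mathcal{T}^h$ actually maps $L^\infty$ into $L^\infty$ (which follows from boundedness of $r$ in Assumption~\ref{ass:bl}, although the lemma statement does not explicitly require it) so that the $L^\infty$ norms on both sides make sense.
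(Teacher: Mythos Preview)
Your proof is correct and follows essentially the same approach as the paper. For (i) both arguments take suprema and use $1-\gamma h\ge 0$; for (ii) the paper establishes the one-sided bound $\mathcal{T}^h Q(\bm{x},\bm{a})-\mathcal{T}^h Q'(\bm{x},\bm{a})\le (1-\gamma h)\|Q-Q'\|_{L^\infty}$ for each fixed $\bm{b}$ and then swaps the roles of $Q$ and $Q'$, which is exactly the content of your sup-difference inequality stated as a standalone fact.
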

\begin{proof}
	$(i)$ Since $Q(\bx,{\bm a})\le Q'(\bx,{\bm a})$ for all $(\bx,{\bm a}) \in \mathbb{R}^n \times \mathbb{R}^m$, we have
	\begin{align*}
	&\sup_{|{\bm b}|\le L} Q(\xi(\bx,{\bm a}; h),{\bm a}+h{\bm b})\le\sup_{|{\bm b}|\le L} Q'(\xi(\bx,{\bm a}; h),{\bm a}+h{\bm b}).
	\end{align*}
	Multiplying $(1-\gamma h)$ and then adding $hr(\bx,{\bm a})$ to both sides, we confirm the monotonicity of $\mathcal{T}^h$ as desired.
	
	$(ii)$ We first note that for any ${\bm b}\in \bbr^m$ with $|{\bm b}|\le L$,
	\begin{align*}
	&\big [ hr(\bx,{\bm a})+(1-\gamma h)Q(\xi(\bx,{\bm a}; h),{\bm a}+h{\bm b}) \big ] - \big [ hr(\bx,{\bm a})+(1-\gamma h)Q'(\xi(\bx,{\bm a}; h),{\bm a}+h{\bm b}) \big ]\\
	&=(1-\gamma h)\big [ Q(\xi(\bx,{\bm a}; h),{\bm a}+h{\bm b}) -Q'(\xi(\bx,{\bm a}; h),{\bm a}+h{\bm b}) \big ]\\
	& \le (1-\gamma h)\|Q-Q'\|_{L^\infty}.
	\end{align*}
	By the definition of $\mathcal{T}^hQ'$, we have
	\begin{align*}
	&hr(\bx,{\bm a})+(1-\gamma h)Q(\xi(\bx,{\bm a}; h),{\bm a}+h{\bm b})\\
	&\le (1-\gamma h)\|Q-Q'\|_{L^\infty} +hr(\bx,{\bm a})  +(1-\gamma h)Q'(\xi(\bx,{\bm a}; h),{\bm a}+h{\bm b})\\
	&\le(1-\gamma h)\|Q-Q'\|_{L^\infty} +\mathcal{T}^hQ'(\bx,{\bm a}).
	\end{align*}
	Taking the supremum of both sides with respect to $\bm{b} \in \mathbb{R}^m$ such that $|\bm{b}| \le L$, yields
	\[
	\mathcal{T}^hQ(\bx,{\bm a})\le (1-\gamma h)\|Q-Q'\|_{L^\infty} + \mathcal{T}^hQ'(\bx,{\bm a}),
	\]
	or equivalently
	\[
	\mathcal{T}^hQ(\bx,{\bm a})-\mathcal{T}^hQ'(\bx,{\bm a})\le (1-\gamma h)\|Q-Q'\|_{L^\infty}.
	\]
	We now change the role of $Q$ and $Q'$ to obtain
	\[
	|\mathcal{T}^hQ(\bx,{\bm a})-\mathcal{T}^hQ'(\bx,{\bm a})|\le (1-\gamma h)\|Q-Q'\|_{L^\infty}.
	\]
	Therefore, the operator $\mathcal{T}^h$ is a $(1-\gamma h)$-contraction with respect to $\| \cdot \|_{L^\infty}$.
\end{proof}

Using the Bellman operator $\mathcal{T}^h$, HJ Q-learning~\eqref{Q_sync} can be expressed as
\[
{Q}_{k+1}^h := (1-\alpha_k) {Q}_k^h + \alpha_k \mathcal{T}^h {Q}_k^h.
\]
Consider the difference $\Delta_k^h :=  {Q}_k^h -  Q^{h,\star}$. Note that $\| \Delta_k^h\|_{L^\infty}$ represents the optimality gap at the $k$th iteration. 
It satisfies 
\begin{equation}\label{R_equation}
\Delta_{k+1}^h = (1-\alpha_k) \Delta_k^h + \alpha_k [ \mathcal{T}^h (\Delta_k^h + Q^{h,\star}) - \mathcal{T}^h Q^{h,\star} ],
\end{equation}
where we used the semi-discrete HJB equation $Q^{h,\star} = \mathcal{T}^h Q^{h, \star}$. 
The contraction property of the Bellman operator $\mathcal{T}^h$ 
can be used to show that the optimality gap $\| \Delta_k^h\|_{L^\infty}$ decreases geometrically. 
More precisely, we have the following lemma:

\begin{lemma}\label{lem:geo}
	Suppose that $0 < h < \frac{1}{\gamma}$, $0\le \alpha_k\le 1$ and that Assumption~\ref{ass:bl} holds.
	Then, the following inequality holds:
	\[
	\| \Delta_k^h \|_{L^\infty} \leq \bigg (
	\prod_{\tau=0}^{k-1} (1 - \alpha_\tau \gamma h )
	\bigg )  \| \Delta_0^h \|_{L^\infty}.
	\]
\end{lemma}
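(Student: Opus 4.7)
The plan is to combine the recursive identity \eqref{R_equation} with the contraction property of the Bellman operator from Lemma~\ref{lem:cont}(ii), and then close the argument by induction on $k$.

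First I would take $L^\infty$ norms of both sides of \eqref{R_equation}. Since $0 \le \alpha_k \le 1$, the coefficient $1 - \alpha_k$ is nonnegative, so the triangle inequality yields
\[
\| \Delta_{k+1}^h \|_{L^\infty} \le (1-\alpha_k) \| \Delta_k^h \|_{L^\infty} + \alpha_k \| \mathcal{T}^h (\Delta_k^h + Q^{h,\star}) - \mathcal{T}^h Q^{h,\star} \|_{L^\infty}.
\]
Next I would apply Lemma~\ref{lem:cont}(ii), the $(1-\gamma h)$-contraction estimate, with the pair $(\Delta_k^h + Q^{h,\star}, Q^{h,\star})$ to bound the last term by $(1-\gamma h)\|\Delta_k^h\|_{L^\infty}$. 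Collecting terms and simplifying gives the one-step contraction
\[
\| \Delta_{k+1}^h \|_{L^\infty} \le \bigl[(1-\alpha_k) + \alpha_k(1-\gamma h)\bigr] \| \Delta_k^h \|_{L^\infty} = (1 - \alpha_k \gamma h) \| \Delta_k^h \|_{L^\infty}.
\]
Note that $1 - \alpha_k \gamma h > 0$ since $\alpha_k \le 1$ and $0 < h < 1/\gamma$, so the factor is a genuine non-negative contraction coefficient.

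Finally, I would iterate this one-step bound from $\tau = 0$ up to $\tau = k-1$ (equivalently, a straightforward induction on $k$, with the trivial base case $k=0$) to obtain
\[
\| \Delta_k^h \|_{L^\infty} \le \prod_{\tau=0}^{k-1} (1 - \alpha_\tau \gamma h)\, \| \Delta_0^h \|_{L^\infty},
\]
which is the desired inequality. There is no real obstacle here: the entire content of the lemma is already packaged in the contraction property of $\mathcal{T}^h$ proved in Lemma~\ref{lem:cont}, and the only minor point to verify is that the per-step multiplier $1 - \alpha_k \gamma h$ is nonnegative so that taking norms preserves the inequality when iterating. Assumption~\ref{ass:bl} enters only indirectly, through the well-posedness of $Q^{h,\star}$ and the applicability of Lemma~\ref{lem:cont}.
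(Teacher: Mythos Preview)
Your proposal is correct and matches the paper's proof essentially line for line: both derive the one-step bound $\|\Delta_{k+1}^h\|_{L^\infty}\le(1-\alpha_k\gamma h)\|\Delta_k^h\|_{L^\infty}$ from the triangle inequality and the $(1-\gamma h)$-contraction of $\mathcal{T}^h$ in Lemma~\ref{lem:cont}(ii), and then iterate by induction. Your additional remarks on the nonnegativity of $1-\alpha_k\gamma h$ and the indirect role of Assumption~\ref{ass:bl} are accurate and slightly more explicit than the paper's version.
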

\begin{proof}
	We use mathematical induction to prove the assertion. When $k=1$, it follows from the Q-function update~\eqref{Q_sync} and the contraction property of $\mathcal{T}^h$ that
	\begin{align*}
	\|\Delta^h_1\|_{L^\infty}&\le (1-\alpha_0)\|\Delta^h_0\|_{L^\infty} +\alpha_0\|\mathcal{T}^h(\Delta^h_0+Q^{h,\star})-\mathcal{T}^hQ^{h,\star}\|_{L^\infty}\\
	&\le (1-\alpha_0)\|\Delta_0^h\|_{L^\infty} +\alpha_0(1-\gamma h)\|\Delta^h_0\|_{L^\infty}\\
	&=(1-\alpha_0\gamma h)\|\Delta^h_0\|_{L^\infty}.
	\end{align*}
	Therefore, the assertion holds for $k=1$. We now assume that the assertion holds for $k=n$:
	\[
	\|\Delta^h_n\|_{L^\infty}\le \left(\prod_{\tau=0}^{n-1}(1-\alpha_\tau \gamma h)\right) \|\Delta^h_0\|_{L^\infty}.
	\]
	We need to show that the inequality holds for $k=n+1$. By using the same estimate as in the case of $k=1$ and the induction hypothesis for $k=n$, we obtain
	\begin{align*}
	\|\Delta^h_{n+1}\|_{L^\infty}&\le (1-\alpha_n)\|\Delta^h_n\|_{L^\infty}  +\alpha_n\|\mathcal{T}^h(\Delta^h_n+Q^{h,\star})-\mathcal{T}^hQ^{h,\star}\|_{L^\infty}\\
	&\le (1-\alpha_n)\|\Delta^h_n\|_{L^\infty} +\alpha_n(1-\gamma h)\|\Delta^h_n\|_{L^\infty}\\
	&=(1-\alpha_n\gamma h)\|\Delta^h_n\|_{L^\infty}\\
	&\le (1-\alpha_n\gamma h)\left(\prod_{\tau=0}^{n-1}(1-\alpha_\tau\gamma h)\right) \|\Delta^h_0\|_{L^\infty}\\
	&=\left(\prod_{\tau=0}^{n}(1-\alpha_\tau\gamma h)\right) \|\Delta^h_0\|_{L^\infty}.
	\end{align*}
	This completes our mathematical induction, and thus the result follows. 
\end{proof}

This lemma yields a condition on the sequence of learning rates under which the Q-function updated by \eqref{Q_sync} converges to the optimal semi-discrete Q-function~\eqref{Q_semi} in $L^\infty$.

\begin{proof}
	It suffices to show that 
	\[
	\lim_{k\to \infty}\|\Delta_k^h\|_{L^\infty}=0.
	\]
	By Lemma~\ref{lem:geo} and the elementary inequality $1-x\le e^{-x}$, we have
	\begin{align*}
	\|\Delta_{k}^h\|_{L^\infty} \le \left(\prod_{\tau=0}^{k-1}(1-\alpha_\tau \gamma h)\right)\|\Delta_0^h\|_{L^\infty} \le \exp\left(-\gamma h\left(\sum_{\tau=0}^{k-1} \alpha_\tau\right)\right)\|\Delta_0^h\|_{L^\infty}.
	\end{align*}
	Therefore, if $\sum_{\tau=0}^\infty \alpha_\tau=\infty$, the result follows.
\end{proof}

\subsection{Corollary~\ref{cor:conv}} \label{app:cor:conv}

\begin{proof}
	We first observe that there exists an index $k_h$, depending on $h$, such that $\sum_{\tau=0}^{k_h-1} \alpha_\tau>\frac{1}{h^2}$ since $\sum_{\tau=0}^\infty \alpha_\tau=\infty$. Then, we have
	\[
	h\left(\sum_{\tau=0}^{k_h-1}\alpha_\tau\right) >\frac{1}{h}\to \infty\quad\mbox{as}\quad h\to0.
	\]
	Moreover, by the triangle inequality, we have
	\begin{align*}
	|Q^h_k(\bx,{\bm a})-{{Q}}(\bx,{\bm a})|\le |Q^h_k(\bx,{\bm a})-Q^{h,\star}(\bx,{\bm a})| +|Q^{h,\star}(\bx,{\bm a})-{{Q}}(\bx,{\bm a})|
	\end{align*}
	for all $(\bm{x}, \bm{a}) \in \mathbb{R}^n \times \mathbb{R}^m$.
	By Proposition 2, the second term on the right-hand side uniformly vanishes over any compact subset $K$ of $\bbr^n\times\bbr^m$ as $h\to 0$. The first term is nothing but $|\Delta_k^h(\bx,{\bm a})|$, which is bounded as follows (by Lemma~\ref{lem:geo}):
	\begin{align*}
	|\Delta_k^h(\bx,{\bm a})|&\le \left(\prod_{\tau=0}^{k-1}(1-\alpha_\tau\gamma h)\right)\|\Delta_0^h\|_{L^\infty}\le \exp\left(-\gamma h\left(\sum_{\tau=0}^{k-1} \alpha_\tau\right)\right)\|\Delta_0^h\|_{L^\infty},\quad k\ge 1,
	\end{align*}
	where the second inequality holds because $1-x\le e^{-x}$. Our choice of $k_h$ then yields
	\[
	\sup_{k\ge k_h} \|\Delta_k^h\|_{L^\infty}\le\exp\left(-\gamma h\left(\sum_{\tau=0}^{k_h-1} \alpha_\tau\right)\right)\|\Delta_0^h\|_{L^\infty}\to 0
	\]
	as $h\to 0$.
	Therefore, we conclude that
	\begin{align*}
	&\sup_{k\ge k_h} \sup_{\substack{(\bm{x}, \bm{a}) \in K\\ K \mathrm{\tiny compact}}}  |Q_k^h(\bx,{\bm a})-{{Q}}(\bx,{\bm a})| \\
	&\le \sup_{k\ge k_h} \sup_{\substack{(\bm{x}, \bm{a}) \in K\\ K \mathrm{\tiny compact}}} |Q_k^h(\bx,{\bm a})-Q^{h,\star}(\bx,{\bm a})| +\sup_{k\ge k_h} \sup_{\substack{(\bm{x}, \bm{a}) \in K\\ K \mathrm{\tiny compact}}} |Q^{h,\star}(\bx,{\bm a})-{{Q}}(\bx,{\bm a})|\\
	&\le \sup_{k\ge k_h}\|\Delta_k^h\|_{L^\infty}  +\sup_{\substack{(\bm{x}, \bm{a}) \in K\\ K \mathrm{\tiny compact}}} |Q^{h,\star}(\bx,{\bm a})-{{Q}}(\bx,{\bm a})|\to 0
	\end{align*}
	as $h\to0$.
\end{proof}

\subsection{Proposition~\ref{prop:diff}} \label{app:prop:diff}

\begin{proof}
	We first notice  that by the triangle inequality,
	\begin{align*}
	&\left|\max_{|\bm{a} - a_j| \leq hL}Q_{\theta^-}(x_{j+1},{\bm a})-Q_{\theta^-}(x_{j+1},a_j+hb_j)\right|\\
	&\le \bigg |\max_{|\bm{a} - a_j| \leq hL}Q_{\theta^-}(x_{j+1},{\bm a}) -Q_{\theta^-}\bigg(x_{j+1},a_j+hL\frac{\nabla_{\bm a}Q_{\theta^-}(x_{j+1},a_j)}{|\nabla_{\bm a}Q_{\theta^-}(x_{j+1},a_j)|}\bigg)\bigg |\\
	&\quad+\bigg|Q_{\theta^-}\bigg(x_{j+1},a_j+hL\frac{\nabla_{\bm a}Q_{\theta^-}(x_{j+1},a_j)}{|\nabla_{\bm a}Q_{\theta^-}(x_{j+1},a_j)|}\bigg)-Q_{\theta^-}(x_{j+1},a_j+hb_j)\bigg| \\
	&=: \Delta_1+\Delta_2.
	\end{align*}
	
	We first consider $\Delta_1$. Let ${\bm a}^\star := \argmax_{|{\bm a}-a_j|\le hL}$ $Q_{\theta^-}(x_{j+1},{\bm a})$. By the Taylor expansion, we have
	\begin{align*}
	\max_{|\bm{a} - a_j| \leq hL}Q_{\theta^-}(x_{j+1},{\bm a}) &= Q_{\theta^-}(x_{j+1},{\bm a}^\star)\\
	& =Q_{\theta^-}(x_{j+1},a_j)+\nabla_{\bm a} Q_{\theta^-}(x_{j+1},a_j)\cdot({\bm a}^\star-a_j)+O(h^2).
	\end{align*}
	Similarly, we again use the Taylor expansion to obtain that
	\begin{align*}
	Q_{\theta^-}\left(x_{j+1},a_j + hL \frac{\nabla_{\bm a}Q_{\theta^-}(x_{j+1},a_j)}{|\nabla_{\bm a}Q_{\theta^-}(x_{j+1},a_j)|}\right) =Q_{\theta^-}(x_{j+1},a_j)+hL |\nabla_{\bm a}Q_{\theta^-}(x_{j+1},a_j)|+O(h^2).
	\end{align*}
	Subtracting one equality from another yields
	\begin{align*}
	&Q_{\theta^-}(x_{j+1},{\bm a}^\star)-Q_{\theta^-}\left(x_{j+1},a_j + hL \frac{\nabla_{\bm a}Q_{\theta^-}(x_{j+1},a_j)}{|\nabla_{\bm a}Q_{\theta^-}(x_{j+1},a_j)|}\right)\\
	&=\nabla_{\bm a} Q_{\theta^-}(x_{j+1},a_j)\cdot({\bm a}^\star-a_j) -hL |\nabla_{\bm a}Q_{\theta^-}(x_{j+1},a_j)|  +O(h^2) \le O(h^2),
	\end{align*}
	where the last inequality holds because $|\bm{a}^\star -a_j | \leq h L$. Since our choice of ${\bm a}^\star$ implies that the left-hand side of the inequality above is always non-negative, we conclude that $\Delta_1=O(h^2)$.
	
	Regarding $\Delta_2$, we have
	\begin{align*}
	\Delta_2&=\bigg |Q_{\theta^-}\bigg(x_{j+1},a_j + hL \frac{\nabla_{\bm a}Q_{\theta^-}(x_{j+1},a_j)}{|\nabla_{\bm a}Q_{\theta^-}(x_{j+1},a_j)|}\bigg) -Q_{\theta^-}\bigg(x_{j+1},a_j + hL \frac{\nabla_{\bm a}Q_{\theta^-}(x_{j},a_j)}{|\nabla_{\bm a}Q_{\theta^-}(x_{j},a_j)|}\bigg)\bigg|\\
	&\le Lh\|\nabla_{\bm a}Q_{\theta^-}\|_{L^\infty} \bigg |\frac{\nabla_{\bm a}Q_{\theta^-}(x_{j+1},a_j)}{|\nabla_{\bm a}Q_{\theta^-}(x_{j+1},a_j)|}-\frac{\nabla_{\bm a}Q_{\theta^-}(x_{j},a_j)}{|\nabla_{\bm a}Q_{\theta^-}(x_{j},a_j)|}\bigg |.
	\end{align*}
	Note that for any two non-zero vectors $v,w$, 
	\begin{align*}
	\left|\frac{v}{|v|}-\frac{w}{|w|}\right|\le \frac{|v-w|}{|v|} + |w|\left(\frac{1}{|v|}-\frac{1}{|w|}\right)= \frac{|v-w|}{|v|}+\frac{|w|-|v|}{|v|}\le \frac{2|v-w|}{|v|}.
	\end{align*}
	On the other hand, we have 
	\begin{align*}
	|\nabla_{\bm a}Q_{\theta^-}(x_{j+1},a_j)-\nabla_{\bm a}Q_{\theta^-}(x_j,a_j)|\le \|\nabla_{\bx \bm a }^2 Q_{\theta^-}\|_{L^\infty}|x_{j+1}-x_j|=O(h).
	\end{align*}
	Since we assume that $Q_{\theta^-}$ is twice differentiable and $|\nabla_{\bm a} Q_{\theta^-}(x_j,a_j)|=:C>0$, we have $|\nabla_{\bm a}Q_{\theta^-}(x_{j+1},a_j)|> {C}/{2}$ for sufficiently small $h$. Therefore, we obtain that
	\begin{align*}
	\Delta_2 &\le 2Lh\|\nabla_{\bm a}Q_{\theta^-}\|_{L^\infty} \frac{|\nabla_{\bm a}Q_{\theta^-}(x_{j+1},a_j)-\nabla_{\bm a}Q_{\theta^-}(x_j,a_j)|}{|\nabla_{\bm a}Q_{\theta^-}(x_{j+1},a_j)|}=O(h^2).
	\end{align*}
	Combining the estimates of $\Delta_1$ and $\Delta_2$ yields	
	\begin{equation} \nonumber
	\begin{split}
	&\left|\max_{|\bm{a} - a_j| \leq hL} Q_{\theta^-} (x_{j+1}, \bm{a})  - Q_{\theta^-} (x_{j+1}, a_j + h b_j)\right|=O(h^2)
	\end{split}
	\end{equation}
	as desired.
\end{proof}

\section{Brief Discussion on Extension to Stochastic Systems}	\label{app:stoch}

The Hamilton-Jacobi Q-learning can be extended to the continuous-time stochastic control setting with controlled diffusion processes. 
Consider the following stochastic counterpart of the system \eqref{sys}:
\begin{equation}\label{sys_stoch}
\d x_t = f(x_t,a_t)\d t+\sigma(x_t,a_t)\d W_t,\quad t>0,
\end{equation}
where $\sigma:\bbr^n\times\bbr^m\to \bbr^{n\times k}$ is the diffusion coefficient and $W_t$ is the $k$-dimensional standard Bronwian motion. We now define the Q-function as
\[Q(\bm{x},\bm{a}):=\sup_{a\in \mathcal{A}}\mathbb{E}\left[\int_0^\infty e^{-\gamma t}r(x_t,a_t)\d t ~\mid~x_0=\bm{x},a_0=\bm{a}\right].\]
%where the set of admissible action is still Lipschitz continuous function. 
Again, the dynamic programming principle implies
\begin{equation}
\begin{split}
&0= \sup_{a \in \mathcal{A}}\mathbb{E} 
\bigg [
\frac{1}{h} \int_t^{t+h} e^{-\gamma (s-t)} r(x(s), a(s)) \: \mathrm{d} s+ \frac{1}{h} [Q(x(t+h), a(t+h)) - Q(\bm{x}, \bm{a}) ] \\
&\hspace{2cm} + \frac{e^{-\gamma h} - 1}{h} Q(x(t+h), a(t+h))\mid x(t) = \bm{x}, a(t) = \bm{a}
\bigg ].
\end{split}
\end{equation}

Then, we use the It\^o formula
\begin{align*}
\d Q(x_t,a_t) &= \nabla_{\bm x} Q\cdot \d x_t +\nabla_{\bm a}Q \cdot {\dot a} \d t + \frac{1}{2}\d x_t^\top \nabla_{\bm x}^2 Q \d x_t\\
&=\nabla_{\bm x} Q\cdot (f(x_t,a_t)\d t +\sigma(x_t,a_t)\d W_t) +\nabla_{\bm a}Q\cdot \dot{a} \d t+\frac{(\d W_t)^\top\sigma^\top\nabla_{\bm x}^2 Q\sigma\d W_t}{2}
\end{align*}
to derive the following Hamilton-Jacobi-Bellman equation for the stochastic  system \eqref{sys_stoch}:
\begin{equation}\label{HJB-stoch}
\gamma Q -\nabla_{\bm{x}}Q\cdot f(\bm{x},\bm{a})-L|\nabla_{\bm{a}} Q| -r(\bm{x},\bm{a}) -\frac{\textup{tr}(\sigma^\top\nabla_{\bx}^2 Q\sigma)}{2}=0.
\end{equation}
Note that, in this case also, the optimal control satisfies  $\dot{a} = L\frac{\nabla_{\bm a}Q}{|\nabla_{\bm a}Q|}$ when $Q$ is differentiable.

Since in most practical systems transition samples are collected in discrete time, we also introduce the semi-discrete version of \eqref{HJB-stoch}. We define a stochastic semi-discrete Q-function $Q^{h,\star}$ as 
\[Q^{h,\star}(\bx,\bm{a}) := \sup_{b\in\mathcal{B}} \mathbb{E}\left[h\sum_{k=0}^\infty r(x_k,a_k)(1-\gamma h)^k\right],\]
where $\mathcal{B}:=\{b:=\{b_k\}_{k=0}^\infty~\mid~ b_k\in\bbr^m,|b_k|\le L\}$, $x_{k+1}=\xi(x_k,a_k;h)$ and $a_{k+1} = a_k+hb_k$. Here, $\xi(x_k,a_k;h)$ is now a solution to the stochastic differential equation \eqref{sys_stoch} at time $t=h$ with initial state $\bx$ and constant control $a(t)\equiv {\bm a}$, $t\in[0,h)$. Then, similar to the deterministic semi-discrete HJB equation \eqref{semiHJB}, its stochastic counterpart can be written as follows:
\begin{align*}
Q^{h,\star}(\bm{x},\bm{a}) = hr(\bm{x},\bm{a})+(1-\gamma h)\sup_{|{\bm b}|\le L}\mathbb{E}\left[Q^{h,\star}(\xi(\bm{x},\bm{a};h),{\bm a}+h{\bm b})\right].
\end{align*}
Using Robbins-Monro stochastic approximation \cite{Robbins1951, Kushner2003}, we obtain the following model-free update rule: in the $k$th iteration, 
we collect data $(x_k, a_k, r_k, x_{k+1})$ and update the Q-function by
\begin{align}
\begin{aligned}\label{Q_stoch}
{Q}_{k+1}^h (x_k, a_k) := (1-\alpha_k) {Q}_k^h (x_k, a_k) + \alpha_k \Big [ h r_k  + (1-\gamma h) \sup_{|\bm{b} | \leq L} {Q}_k^h (x_{k+1},  a_k + h \bm{b}) 
\Big ],
\end{aligned}
\end{align}
where $x_{k+1}$ is obtained by simulating the stochastic system from $x_k$ with action $a_k$ fixed for $h$ period, i.e., $x_{k+1} = \xi(x_k, a_k; h)$.  
The corresponding HJ DQN algorithm for stochastic systems is essentially the same as Algorithm~\ref{alg:HJ} although the transition samples are now collected through the stochastic system.

\begin{table*}[t]
	\centering
	\caption{Hyperparameters for HJ DQN.}
	\vspace{0.1in}
	\begin{tabular}{l | *{3}{c}}
		\hline
		Hyperparameter  			  			& HalfCheetah-v2 & Hopper-v2 & Walker2d-v2 \\
		\hline \hline
		optimizer      							& \multicolumn{3}{c}{Adam~\cite{Kingma2015}}          \\
		learning rate  							& $5\times 10^{-4}$ & $10^{-4}$ & $10^{-4}$\\ 
		Lipschitz constant $(L)$ 				& 30 & 30 & 30    \\
		default sampling interval $(h)$    & 0.05  & 0.008  & 0.008 \\
				tuned sampling interval $(h)$    &  0.01 &  0.016  & 0.032 \\
		(Continuous) discount $(\gamma)$       	& \multicolumn{3}{c}{$-\log(0.99)/h$, where $h$ is the sampling interval}\\
		replay buffer size       				& \multicolumn{3}{c}{$10^6$}           \\
		target smoothing coefficient $(\alpha)$	& \multicolumn{3}{c}{0.001}   \\
		
		Noise coefficient $(\sigma)$			& \multicolumn{3}{c}{0.1}\\
		number of hidden layers 				& \multicolumn{3}{c}{2 (fully connected)}         \\
		number of hidden units per layer  		& \multicolumn{3}{c}{256}	\\
		number of samples per minibatch 		& \multicolumn{3}{c}{128}    \\
		nonlinearity 							& \multicolumn{3}{c}{ReLU}  \\
		\hline
	\end{tabular}
	\begin{tabular}{ *{2}{c}}
	\hline
	  Swimmer-v2 & LQ   \\
	\hline \hline
	\multicolumn{2}{c}{Adam~\cite{Kingma2015}}          \\
	 $5\times 10^{-4}$ & $10^{-3}$         \\ 
	  15 & 10   \\
	  0.04 &  0.05   \\
	  	0.08  & -   \\
	 $-\log(0.99)/h$ & $-\log(0.99999)/h$ \\
	 $10^6$ & $2\times 10^4$           \\
	 \multicolumn{2}{c}{0.001}   \\
	
	\multicolumn{2}{c}{0.1}\\
	\multicolumn{2}{c}{2 (fully connected)}         \\
	\multicolumn{2}{c}{256}	\\
	128 & 512    \\
	\multicolumn{2}{c}{ReLU}  \\
	\hline
\end{tabular}
	\label{hyperparameter}
\end{table*}

\begin{table*}[t]
	\centering
	\caption{Hyperparameters for DDPG.}
	\vspace{0.1in}
	\begin{tabular}{l|*{2}{c} }
		\hline
		Hyperparameter  			  			& MuJoCo tasks & LQ \\
		\hline \hline
		optimizer      							&\multicolumn{2}{c}{Adam~\cite{Kingma2015}}          \\
		actor learning rate  					& \multicolumn{2}{c}{$10^{-4}$}          \\
		critic learning rate 					& \multicolumn{2}{c}{$10^{-3}$}	\\ 
		(Discrete) discount $(\gamma')$       	& 0.99  & 0.99999 \\
		replay buffer size       				& $10^6$ &$2\times 10^4$    \\
		target smoothing coefficient $(\alpha)$	& \multicolumn{2}{c}{0.001}  \\
		number of hidden layers 				& \multicolumn{2}{c}{2 (fully connected)}         \\
		number of hidden units per layer  		& \multicolumn{2}{c}{256}		\\
		number of samples per minibatch 		& 128 & 512	  \\
		nonlinearity 							& \multicolumn{2}{c}{ReLU} 	 \\
		\hline
	\end{tabular}
	\label{hyperparameter_other}
\end{table*}

\section{Implementation Details}\label{app:detail}

All the simulations in Section~\ref{sec:exp} were conducted using Python 3.7.4 on a PC with Intel Core i9-9900X @ 3.50GHz, NVIDIA GeForce RTX 2080 Ti and 64GB RAM.

Table \ref{hyperparameter} shows the list of hyperparameters that are used in our implementation of HJ DQN for each MuJoCo task and the LQ problem.
 For DDPG, we list our choice of hyperparameters in Table \ref{hyperparameter_other}, which are taken from \cite{Lillicrap2016} for MuJoCo tasks, except the network architecture which is used in OpenAI's implementation of DDPG\footnote{https://github.com/openai/spinningup}. The discount factor in the discrete-time algorithms is chosen as $\gamma'=0.99$ for MuJoCo tasks and $0.99999$ for the LQ problem so that it is equivalent to $e^{-\gamma h}\approx(1-\gamma h)$ in our algorithm for continuous-time systems.

\bibliographystyle{IEEEtran}

\bibliography{reference}

\end{document}